\documentclass[twoside,11pt]{article}
\usepackage{thm-restate}

\usepackage[preprint]{jmlr2e}

\usepackage{hyperref}       %
\usepackage{url}            %
\usepackage{booktabs}       %
\usepackage{amsfonts}       %
\usepackage{nicefrac}       %
\usepackage{microtype}      %
\usepackage{xcolor}         %
\usepackage{amsmath}
\usepackage{natbib}

\usepackage{amssymb}
\usepackage{mathtools}
\usepackage{mathrsfs}  
\usepackage{empheq}
\usepackage{bbm}
\usepackage{siunitx}
\usepackage{lastpage}

\usepackage{upgreek}

\usepackage{cleveref}
\definecolor{plum}{rgb}{0.3,0,0.7}
\usepackage{enumerate}

\usepackage{threeparttable}

\Crefname{assumption}{Assumption}{Assumptions}

\newtheorem{assumption}{Assumption}

\title{Heavy-Tailed Differential privacy}

\usepackage{dsfont}
\usepackage{xspace}

\newcommand{\eg}{e.g.\xspace}

\newcommand{\iid}{{\it i.i.d.}\xspace}

\newcommand{\ie}{{\it i.e.}\xspace}

\newcommand{\R}{\mathds{R}}
\newcommand{\er}{\widehat{\mathcal{R}}_S}
\newcommand{\erprime}{\widehat{\mathcal{R}}_{S'}}

\newcommand{\zcal}{\mathcal{Z}}

\newcommand{\Rd}{{\R^d}}
\newcommand{\levy}{L_t^\alpha}

\newcommand{\Eof}[2][]{\mathds{E}_{#1} \left[ #2 \right]}

\newcommand{\Pof}[2][]{\mathds{P}_{#1} \left( #2 \right)}

\newcommand{\prob}{\mathds{P}}
\newcommand{\normof}[1]{\left\Vert #1 \right\Vert}
\newcommand{\klb}[2]{\text{\normalfont{{KL}}}\left(#1 || #2 \right)}

\DeclareMathOperator*{\esssup}{ess\,sup}

\newcommand{\Drm}{\mathrm{D}}
\newcommand{\bregmanphi}[2]{\Drm_\Phi(#1, #2)}
\newcommand{\bregman}[3][2]{\Drm_{#1}\left(#2, #3\right)}

\newcommand{\set}[1]{\left\{ #1 \right\} }
\newcommand{\datadist}{\mu_z^{\otimes n}}
\newcommand{\Sd}{{\mathds{S}^{d-1}}}

\newcommand{\fraclap}{ \left( -\Delta \right)^{\frac{\alpha}{2}}}

\newcommand{\landau}[2][]{\mathcal{O}_{#1} \left( #2 \right)}
\newcommand{\fcal}{\mathcal{F}}
\newcommand{\acal}{\mathcal{A}}

\newcommand{\renyi}[3][\beta]{\mathrm{R}_{#1}(#2, #3)}
\newcommand{\rinfo}[3][\beta]{\mathrm{I}_{#1}(#2, #3)}
\newcommand{\calphad}{C_{\alpha,d}}
\newcommand{\kalphad}{K_{\alpha,d}}
\newcommand{\intrd}{\int_{\Rd}}
\newcommand{\intsd}{\int_{\mathds{S}^{d - 1}}}
\newcommand{\intrdzero}{\int_{\Rd\backslash\set{0}}}
\newcommand{\gammaof}[1]{\Gamma \left( #1 \right)}
\newcommand{\der}{\mathrm{d}}
\newcommand{\timeder}{\frac{\der}{ \der t}}

\newcommand{\sensitivity}{\mathcal{S}}
\newcommand{\by}[1]{\quad\text{(#1)}}
\newcommand{\parenthesis}[1]{\left( #1 \right)}
\newcommand{\tv}{\mathrm{TV}}

\newcommand{\ecal}{{\mathcal{E}}}
\newcommand{\ebeta}{\mathrm{E}_\beta}
\newcommand{\N}{{\mathds{N}}}
\newcommand{\law}[1]{{\mathrm{Law}\left( #1 \right)}}
\newcommand{\qprob}{{\mathds{Q}}}
\newcommand{\cbtwo}{{\mathcal{C}_b^2(\Rd)}}
\newcommand{\cbnorm}[2][2]{\normof{#2}_{\mathcal{C}^{#1}}}

\newcommand{\Irm}{\mathrm{I}}
\usepackage{comment}

\allowdisplaybreaks

\usepackage{lastpage}
\jmlrheading{23}{2022}{1-\pageref{LastPage}}{1/21; Revised 5/22}{9/22}{21-0000}{Dupuis et al.}

\ShortHeadings{Rényi Differential Privacy for Heavy-Tailed Dynamics}{Dupuis et al.}
\firstpageno{1}

\begin{document}

\title{Rényi Differential Privacy for Heavy-Tailed SDEs via Fractional Poincaré Inequalities}

\author{\name Benjamin Dupuis \email benjamin.dupuis@inria.fr \\
       \addr Inria, CNRS, Ecole Normale Sup\'{e}rieure\\ 
        PSL Research University, Paris, France
       \AND
       \name Mert G\"{u}rb\"{u}zbalaban \email mg1366@rutgers.edu \\
	\addr Department of Management Science and Information Systems\\
	Rutgers Business School\\
	Piscataway, NJ 08854, United States of America
       \AND
       \name Umut \c{S}im\c{s}ekli \email umut.simsekli@inria.fr \\
 \addr Inria, CNRS, Ecole Normale Sup\'{e}rieure\\ 
PSL Research University, Paris, France
\AND
 \name Jian Wang \email jianwang@fjnu.edu.cn\\
 \addr School of Mathematics and Statistics\\ 
  Key Laboratory of Analytical Mathematics and Applications (Ministry of Education) \\ Fujian Provincial Key Laboratory
of Statistics and Artificial Intelligence\\
 Fujian Normal University\\
 350007 Fuzhou, People's Republic of China
  \AND
 \name Sinan Y{\i}ld{\i}r{\i}m \email sinanyildirim@sabanciuniv.edu\\
 \addr Faculty of Engineering and Natural Sciences\\Sabanc{\i} University, Istanbul, Turkey
    \AND
	\name Lingjiong Zhu \email zhu@math.fsu.edu \\
	\addr Department of Mathematics\\
	Florida State University\\
	Tallahassee, FL 32306, United States of America
    }

\editor{}

\maketitle

\begin{abstract}
Characterizing the differential privacy (DP) of learning algorithms has become a major challenge in recent years.
In parallel, many studies suggested investigating the behavior of stochastic gradient descent (SGD) with heavy-tailed noise, both as a model for modern deep learning models and to improve their performance.
However, most DP bounds focus on light-tailed noise, where satisfactory guarantees have been obtained but the proposed techniques do not directly extend
to the heavy-tailed setting.
Recently, the first DP guarantees for heavy-tailed SGD were obtained.
These results provide $(0,\delta)$-DP guarantees without requiring gradient clipping.
Despite casting new light on the link between DP and heavy-tailed algorithms, these results have a strong dependence on the number of parameters and cannot be extended to other DP notions like the well-established \emph{Rényi differential privacy} (RDP).
In this work, we propose to address these limitations by deriving the first RDP guarantees for heavy-tailed SDEs, as well as their discretized counterparts. 
Our framework is based on new Rényi flow computations and the use of %
well-established fractional Poincaré %
inequalities.
Under the assumption that such inequalities are satisfied, we obtain DP guarantees that have a much weaker dependence on the dimension compared to prior art.
\end{abstract}

\begin{keywords}
  Lévy-driven SDEs, Differential Privacy, Fractional Poincaré Inequalities
\end{keywords}

\section{Introduction}
\label{sec:introduction}

\paragraph{Setup.} Let $\zcal := \mathcal{X} \times \mathcal{Y}$ be a data space endowed with a $\sigma$-algebra $\fcal$ and a probability distribution $\mu_z$, where $\mathcal{X}$ denotes the space of input data and $\mathcal{Y}$ denotes the space of output data.
In the context of supervised learning, stochastic optimization algorithms are used to solve the following \textit{empirical risk minimization} (ERM) problem
\begin{align}\label{ERM:problem}
    \min \left\{ \er(w) := \frac1{n} \sum\nolimits_{i=1}^n \ell(w,z_i): ~ w\in \Rd\right\},
\end{align}
where $\ell : \Rd \times \zcal \to \R_+$ is a loss function and $S := (z_1, \dots, z_n) \sim \datadist$ is a dataset sampled from the data distribution. With these notations, stochastic learning algorithms for solving the ERM problem can be seen as randomized mappings $\acal : S \mapsto W_S \in \Rd$. 

Due to the rapid development of machine learning, it has become essential to certify the \textit{differential privacy} (DP) of learning algorithms. 
Informally, this notion can be understood as the question: given two \emph{neighbouring} datasets $S,S'\in \zcal^n$ (\ie, differing by only one data point), how hard is it to distinguish the probability distributions of $W_S$ and $W_{S'}$, \ie, can we identify whether one particular data point has been used during training. 
A related question then is the creation of algorithms with provable DP guarantees.
Several formalizations of DP have been proposed \citep{dwork_differential_2006,dwork_algorithmic_2013,dwork_concentrated_2016}.
In our paper, we are in particular interested in the \emph{Rényi differential privacy} (RDP) \citep{mironov_renyi_2017}, presented as a natural relaxation of classical DP conditions. 
More precisely, a learning algorithm $\acal : S \mapsto W_S$ is said to be $(\beta, \kappa)$-RDP when $\renyi{\mathrm{Law}(W_S)}{\mathrm{Law}(W_{S'})} \leq \kappa$, for neighboring datasets $S$ and $S'$, where $\renyi{\cdot}{\cdot}$ denotes the Rényi divergence for $\beta > 1$.
We provide in \Cref{sec:dp-background} a detailed presentation of these notions.

In our study, we aim to prove RDP guarantees for heavy-tailed SDEs. More precisely, given $S\in\zcal^n$ and a time horizon $T>0$, the learning algorithm is given by $\acal : S \mapsto W_T$, where $(W_t)_{t\geq 0}$ is a solution of the following stochastic differential equation (SDE) in $\Rd$
\begin{align}
    \label{eq:sde-intro}
    \der W_t = - \nabla \er (W_t) \der t + \sigma_\alpha \der \levy + \sigma_2 \sqrt{2} \der B_t,
\end{align}
where $\sigma_\alpha,\sigma_2 \geq 0$ are fixed, $(\levy)_{t \geq 0}$ is a rotationally invariant $\alpha$-stable L\'{e}vy process with tail index $\alpha \in %
(0,2)$ (see \Cref{sec:levy-sdes-background}), and $(B_t)_{t\geq 0}$ is a standard Brownian motion.
We also consider Euler-Maruyama discretizations of \Cref{eq:sde-intro}, which corresponds to heavy-tailed stochastic gradient descent (SGD), as studied by \citet{simsekli_differential_2024} in the case $\sigma_2 = 0$.

These heavy-tailed algorithms have recently gained a lot of attention in learning theory. First, it was shown in several studies, both empirically and theoretically, that SGD can produce heavy-tailed distributions under certain choices of hyperparameters \citep{simsekli_tail-index_2019,gurbuzbalaban_heavy-tail_2021}. 
Following these findings, it was shown that the presence or injection of heavy-tailed noise can ensure good generalization properties \citep{raj_algorithmic_2023,raj_algorithmic_2023-1,dupuis_generalization_2024,lim_chaotic_2022}, and improve the compressibility of the model \citep{barsbey_heavy_2021,wan_implicit_2023}. In addition to these properties, it has been shown that heavy-tailed noise can lead to satisfactory minimization of the empirical risk in convex and non-convex settings \citep{wang_convergence_2021,simsekli_tail-index_2019}, and in the context of one-hidden-layer neural networks \citep{wan_implicit_2023}. 
These observations support the interest in obtaining DP guarantees for heavy-tailed dynamics such as \Cref{eq:sde-intro}.

\begin{table}[!t]
\tabcolsep=0.11cm
\small
\centering
\begin{threeparttable}
\begin{tabular}{@{} l S[table-format=6.0] l S[table-format=5.0] cc @{}} 
\toprule
{Paper}  & {Type of DP} & {Assumptions} & {Guarantee} \\
\midrule
\citet[Thm. 1]{abadi_deep_2016} & {$(\varepsilon, \delta)$-DP} & {S.} & $\varepsilon \simeq \landau{\frac{ {b}\sqrt{T \log(1/\delta)}}{n \sigma}}$ \\
\citet[Thm. 2]{asoodeh_privacy_2023} & {$(\varepsilon, \delta)$-DP} & {S.} & {$\varepsilon \simeq \landau{\frac1{\sigma^2}}$~\text{for}~$\delta \simeq \landau{\frac{b}{n}}$} \\
\citet[Thm. 2]{chourasia_differential_2021} & {$(\beta, \kappa)$-RDP} & S., LSI. & {$\kappa \simeq \landau{\frac{\beta \sensitivity_g^2}{n^2 \sigma^2}}$}  \\
\citet[Thm 3.1]{ryffel_differential_2022} & {$(\beta, \kappa)$-RDP} & L., LSI. & {$\kappa \simeq \landau{\frac{\beta L^2}{n^2 \sigma^2}}$}  \\
\citet[Thm. 1.3]{altschuler_privacy_2022}  & {$(\beta, \kappa)$-RDP} & {C., L., $\nabla$-L.} & {$\kappa \simeq \landau{\frac{\beta L^2}{n^2 \sigma^2} \min (T, n)}$} \\
\citet[Thm. 3.3]{ye_differentially_2022}  & {$(\beta, \kappa)$-RDP} & {SC., S., $\nabla$-L.} & {$\kappa \simeq$ {$\landau{\frac{\beta \sensitivity_g^2}{\sigma^2 {b}^2}}$}} \\
\citet[Thm. 3.3]{chien_langevin_2025-1}  & {$(\beta, \kappa)$-RDP} & {L., $\nabla$-L., LSI.} & {$\kappa \simeq \landau{\frac{\beta T}{\sigma^2 n^2}}$} \\
\citet[Thm. 3.3]{chien_langevin_2025-1}  & {$(\beta, \kappa)$-RDP} & {L., $\nabla$-L., SC.} & {$\kappa \simeq \landau{\frac{\beta}{\sigma^2 n^2}}$} \\
\bottomrule
\end{tabular}
    \caption{Comparison of existing guarantees for DP-SGD with Gaussian noise. Simplified versions are presented; we refer to the papers for details. \textit{Abbreviations:} \textrm{S.} (finite sensitivity / clipping), \textrm{LSI.} (log-Sobolev inequality), $\nabla$-\textrm{L.} (gradient Lipschitz), \textrm{C.} (convex), \textrm{SC.} (strongly-convex), \textrm{L.} ($L$-Lipschitz). \textit{notations:} $T$ (number of iterations / time), $\sensitivity_g$ (gradient sensitivity), {$b \leq n$} (batch size). 
    }
\label{tab:dp-gaussian-noise}
\vspace{-5mm}
\end{threeparttable}
\end{table}

\paragraph{Related works.}
Many differentially private mechanisms have been studied with Gaussian \citep{wang_differentially_2018,feldman_privacy_2018} or Laplace noise \citep{chaudhuri_differentially_2011,kuru_differentially_2022}.
The main inspiration for our work is the well-established differentially private SGD (DP-SGD) mechanism \citep{bassily_differentially_2014}, which involves adding (Gaussian) noise in SGD iterations, along with potential projections and gradient clipping operations \citep{abadi_deep_2016,yu_differentially_2019,chen_understanding_2020}.
In our study, this model corresponds to a discretized version of \Cref{eq:sde-intro} with $\sigma_\alpha = 0$, with projection or clipping of the gradient on a ball of finite radius, which we could also include in our model.

In particular, \citet{chourasia_differential_2021} studied the following noisy SGD recursion
\begin{align}
    \label{eq:chourasia-recursions}
    W_{k+1} = \Pi_{\mathcal{C}} \left( W_k - \eta \nabla \er (W_k) + \sigma_2 \sqrt{2\eta} \xi_k \right), \quad \xi_k \overset{\iid}{\sim} \mathcal{N}(0,I_d),
\end{align}
where $\Pi_{\mathcal{C}}$ denotes the projection on a convex set. They obtained a $(\beta, \kappa)$-RDP guarantee with $\kappa = \landau{\beta \sensitivity_g^2 / n^2}$, where $\sensitivity_g$ is the gradient sensitivity, %
formally defined in \Cref{sec:setup-assumptions}. Similar RDP and DP guarantees have been derived by other authors for convex and non-convex losses, using a wide range of tools \citep{asoodeh_privacy_2023,altschuler_privacy_2022}. A large part of these works make crucial use of the \textit{logarithmic Sobolev inequality} (LSI) \citep{gross_logarithmic_1975-1} to derive RDP guarantees by exploiting the mixing properties of the underlying processes \citep{ye_differentially_2022,chien_langevin_2025-1,ganesh_faster_2020,ryffel_differential_2022}. These works identify assumptions on the loss such that the LSI can be satisfied, which they exploit to make their bounds time-uniform.
We summarize some of the aforementioned results in \Cref{tab:dp-gaussian-noise}.

Unfortunately, these works cannot be directly extended to the heavy-tailed case (\ie, when $\sigma_\alpha > 0$ in \Cref{eq:sde-intro}) for two main reasons. First, the proposed derivations make explicit use of the Gaussian structure of the noise. For example, \citet{chourasia_differential_2021} exploit the Fokker-Planck equations associated with continuous-time interpolations of \Cref{eq:chourasia-recursions}. Second, it has been noted by \citet{dupuis_generalization_2024} that the LSI that is instrumental to the Gaussian case might not be available in the heavy-tailed setting. So far, these issues have prevented the obtainment of RDP guarantees for heavy-tailed SDEs.

Recent studies derived DP guarantees under heavy-tailed noise. \citet{ito_privacy_2023} investigated the DP properties of linear dynamical systems under $\alpha$-stable noise, while \citet{zawacki_heavy-tailed_2025} proposed a general $\alpha$-stable DP mechanism, which is, however, not directly related to SGD and, therefore, independent of our study. 
In the case of SGD, \citet{asi_private_nodate} studied the DP of stochastic convex optimization under the assumption of heavy-tailed gradients, in the sense of finite moments up to order $k$. However, this study does not directly contain the case of $\alpha$-stable noise, which is the focus of our paper.
More recently, \citet{simsekli_differential_2024} investigated the following heavy-tailed noisy SGD recursion in $\Rd$
\begin{align}
    \label{eq:simsekli-dp-recursion}
    W_{k+1} = W_k - \eta \nabla \er (W_k) + \sigma_{\alpha} \eta^{1 / \alpha} \xi_k, \quad \xi_k \overset{\iid}{\sim} \mathcal{S}\alpha\mathcal{S} := \law{L_1^\alpha},
\end{align}
which is a discretization of \Cref{eq:sde-intro} with $\sigma_2 = 0$. These authors obtained $(0,\delta)$-DP guarantees (see \Cref{def:dp-epsilon-delta}) with $\delta = \landau{d^{(1 + \alpha) / 2} / n}$ and without requiring gradient clipping.
Their study is based on a Markov chain perturbation analysis and, most importantly, can handle unbounded gradients, while many works rely on bounded gradients or finite sensitivity assumptions \citep{chourasia_differential_2021}. 
Despite successfully obtaining the first DP guarantees for heavy-tailed SGD, this work has three main disadvantages: \textrm{(i)} $(0,\delta)$-DP is weaker than RDP, as we recall in \Cref{lemma:rdp-to-zero-delta-DP}, \textrm{(ii)} the proposed bound has a strong dependence on the dimension, which might render it vacuous in practice, and \textrm{(iii)} it has an intricate and not explicit dependence on the noise scale $\sigma_\alpha$, making its impact on the bound unclear.

\defcitealias{simsekli_differential_2024}{Simsekli et al.}

\begin{table}[!t]
\tabcolsep=0.11cm
\small
\centering
\begin{threeparttable}
\begin{tabular}{@{} l S[table-format=6.0] l S[table-format=5.0] cc @{}} 
\toprule
{Paper}  & {Type of DP} & {Assumptions} & {Guarantee} \\
\midrule
\citet[Thm. 10]{simsekli_differential_2024} & {$(0,\delta)$-DP} & {Ps.-$\nabla$-L., D.}& {$\delta \simeq \landau{\frac{d^{(1+\alpha) / 2}}{n}}$} \\
\textbf{Ours} (\Cref{thm:multifractal-case-finite-sensitivity}) & {$(\beta, \kappa)$-RDP} & {S., $\sigma_2>0$} & {$\kappa\simeq\landau{\frac{\beta \sensitivity_g^2 T}{n^2 \sigma_2^2}}$} \\
\textbf{Ours} (\Cref{thm:multifractal-case-finite-sensitivity}) & {$(\beta, \kappa)$-RDP} & {S., FPI, $\sigma_2>0$} & {$\kappa\simeq\landau{\frac{\beta^2 \sensitivity_g^2}{n^2 \sigma_2^2}}$} \\
\textbf{Ours} (by \Cref{lemma:rdp-to-zero-delta-DP}) & {$(0,\delta)$-DP} & {S., FPI, $\sigma_2>0$}  & {$\delta\simeq\landau{\frac{\sensitivity_g}{n\sigma_2}}$} \\
\textbf{Ours} (\Cref{thm:discrete-pure-jump,thm:dp_guarantees_sdes_pure_jump}) & {$(\beta, \kappa)$-RDP} & {S., $\sigma_2=0$}  & {$\kappa\simeq\landau{\frac{\beta d^{1 - \alpha / 2} \sensitivity_g^2  T}{n^2 \sigma_\alpha^\alpha R^{2 - \alpha} }}$} \\
\textbf{Ours} (\Cref{thm:discrete-pure-jump,thm:dp_guarantees_sdes_pure_jump}) & {$(\beta, \kappa)$-RDP} & {S., FPI, $\sigma_2=0$}  & {$\kappa\simeq\landau{\frac{\beta^2 d^{1 - \alpha / 2} \sensitivity_g^2  }{n^2 \sigma_\alpha^\alpha R^{2 - \alpha} }}$} \\
\textbf{Ours} (by \Cref{lemma:rdp-to-zero-delta-DP}) & {$(0,\delta)$-DP} & {S., FPI, $\sigma_2=0$}  & {$\delta\simeq\landau{\frac{d^{(2 - \alpha) / 4} \sensitivity_g}{n \sigma_\alpha^{\alpha / 2} R^{1 - \alpha / 2} }}$} \\
\bottomrule
\end{tabular}
\caption{Comparison of guarantees for DP-SGD with $\alpha$-stable noise. Simplified versions are presented, we refer to the papers for details. \textit{Abbreviations:} \textrm{S.} (finite sensitivity / clipping), \textrm{FPI.} (fractional Poincaré inequality), \textrm{D.} (dissipativity), \textrm{Ps.}-$\nabla$-\textrm{L.} (pseudo gradient Lipschitz). \textit{notations:} $\sensitivity_g$ (gradient sensitivity), $R$ (quantity appearing in \Cref{thm:discrete-pure-jump,thm:dp_guarantees_sdes_pure_jump}, which might depend on $(d,T,\beta)$).}
\label{tab:dp-heavy-tails}
\vspace{-5mm}
\end{threeparttable}
\end{table}

\subsection{Contributions}
\label{sec:contributions}

In our work, we propose to address the issues mentioned above by proving RDP guarantees for heavy-tailed SDEs and their discrete counterparts. Our framework is based on the extension of the entropy flow computations of \citet{chourasia_differential_2021} in the heavy-tailed setting. 
In contrast with the relative entropy flow derivations obtained by \citet{dupuis_generalization_2024} in the context of generalization bounds, we obtain \emph{Renyi divergence flows} (\ie, the time derivative of the Rényi divergence) for SDEs driven by general Lévy processes. 
In order to circumvent the lack of LSI in this heavy-tailed case, we show that we can obtain time-uniform RDP guarantees based on the fractional versions of the celebrated Poincaré inequalities \citep{wang_functional_2015,mouhot_fractional_2009}.
Based on this method, we obtained the first RDP guarantees for heavy-tailed (\ie, $\alpha$-stable) SGD. 
Our detailed contributions are summarized below and in \Cref{tab:dp-heavy-tails}.  %
\begin{itemize}
    \item We provide a general framework to compute the flow of Rényi divergences along Lévy-driven SDEs and study the associated functional inequalities. Our proof technique applies to a very general class of Lévy processes, which makes it more general than existing works on heavy-tailed SGD.
    
    \item In the multifractal case (\ie, when both $\sigma_\alpha > 0$ and $\sigma_2 > 0$), we obtain dimension-independent $(\beta, \kappa)$-RDP guarantees with $\kappa = \landau{\beta^2 / (n^2\sigma_2^2)}$, under a fractional Poincaré inequality (FPI) assumption. We also obtain a guarantee in $\kappa = \landau{\beta T / (n^2\sigma_2^2)}$ when this assumption is removed.
    
    \item In the pure-jump $\alpha$-stable case (\ie, when $\sigma_2 = 0$), we draw a new link with Bourgain-Brezis-Mironescu's type formulas \citep{bourgainBrezisMironescu2001} and show that, under slightly stronger assumptions, we can obtain $(\beta, \kappa)$-RDP guarantees with $\kappa = \landau{\beta^2 d^{1 - \alpha / 2} / (n^2 \sigma_\alpha^\alpha)}$ under FPI and $\kappa = \landau{\beta d^{1 - \alpha / 2} T / (n^2 \sigma_\alpha^\alpha)}$ without FPI.
    Our results imply $(0,\delta)$-DP guarantees which have much weaker dependence on the dimension $d$ than in existing works in both the multifractal and the pure-jump case. 
    
    \item Finally, we extend our RDP to the discrete-time setting, which is used in practical application.  Moreover, as a sanity check, we investigate the satisfiability of fractional Poincaré inequalities in this case, hence providing theoretical foundation for our main assumptions. As a by-product of our analysis, we derive new stability results for fractional Poincaré inequalities.
\end{itemize}

\paragraph{Organization of the paper.} We present some technical background on differential privacy, Lévy processes, and fractional Poincaré inequalities in \Cref{sec:preliminaries}. Our main results are discussed in \Cref{sec:dp-of-heavy-tailed-sdes}, where we detail our setup, assumptions, the Rényi flow computations, and our DP guarantees in the multifractal and the pure-jump cases. Finally, \Cref{sec:discrete-time} is dedicated to the analysis of the discrete-time algorithm and the stability properties of fractional Poincaré inequalities.
All omitted proofs are postponed to the appendix.

\paragraph{Notations.} 
For all $x \in \Rd$, we denote by $\normof{x}$ the Euclidean norm of $x$. 
For a matrix $A \in \R^{d \times d}$, the notation $\normof{A}_2$ refers to the spectral norm.
For a matrix mapping $A : \Rd \to \R^{d \times d}$, the notation $\normof{A}_\infty$ is also understood with respect to the spectral norm.
The open ball centered at $x$ with radius $r>0$ is denoted by $B_r(x)$.
For any random variable $X$, the law of $X$ is written as $\law{X}$. 
If $\mu$ is a probability measure and $T$ is a measurable map, we write $T_\# \mu$ for the pushforward measure.
We abbreviate the partial derivatives $\partial / \partial x$ by $\partial_x$.
Let $\cbtwo$ denote the set of twice continuously differentiable functions $u : \Rd \to \R$ which are bounded, along with their first- and second-order derivatives. 
For $f \in \cbtwo$, we denote $\cbnorm{f} := \max_{|\gamma| \leq 2} \normof{\partial^\gamma f}_\infty$, where $\partial^\gamma f := \partial_{x_1}^{\gamma_1} \dots \partial_{x_d}^{\gamma_d}$ and $|\gamma| := \gamma_1 + \dots + \gamma_d$ for any $\gamma \in \N^d$. 
The Laplacian operator in $\Rd$ is denoted as $\Delta := \partial_{x_1}^2 + \dots + \partial_{x_d}^2$.

\section{Preliminaries}
\label{sec:preliminaries}

In this section, we introduce some necessary technical background. We start by discussing the classical notions of differential privacy and Rényi differential privacy in \Cref{sec:dp-background}, and then present Lévy processes in \Cref{sec:levy-sdes-background}. Finally, we provide an introduction to fractional Poincaré inequalities in \Cref{sec:fractional-pi-background}.

\subsection{Background on differential privacy}
\label{sec:dp-background}
 
Let $\zcal$ be a data space endowed with a $\sigma$-algebra $\mathcal{F}$. Given two datasets $S,S' \in \zcal^n$ %
for some $n\ge0$, %
we say that they are \emph{neighbors}, and we denote $S \simeq S'$, if both datasets differ by only one element. 
In this paper, we define a learning algorithm as a mapping
$\acal : \bigcup_{n=0}^{\infty} \zcal^n \longrightarrow \mathcal{P}(\Rd)$, $S \longmapsto \acal(S)$,
where $\mathcal{P}(\Rd)$ denotes the set of Borel probability measures on $\Rd$.
We will refer to the distribution $\acal(S)$, for $S\in
\bigcup_{n=0}^{\infty} %
\zcal^n$, as %
a %
\emph{posterior distribution}.

The classical notion of \textit{Differential Privacy} (DP), as introduced in \cite{dwork_algorithmic_2013, dwork_boosting_2010}, regards the possibility of distinguishing the posterior distributions induced by two neighboring datasets.

\begin{definition}[$(\varepsilon, \delta)$-DP]
    \label{def:dp-epsilon-delta}
    Let $\varepsilon\ge0$ and $ \delta\in[0,1]$. %
    The learning algorithm $\acal$ is said to satisfy $(\varepsilon, \delta)$-DP if, for any $S \simeq S'$ and any Borel set $B$ in $\Rd$,  
        $\acal(S)(B) \leq e^\varepsilon \acal(S')(B) + \delta.$
\end{definition}

In \cite{mironov_renyi_2017}, the author defined a notion of DP based on the Rényi divergence between neighboring posterior distributions. 

\begin{definition}[$(\beta, \kappa)$-RDP]
    \label{def:rdp}
    Let $\kappa> 0$ and $\beta > 1$.
    The learning algorithm $\acal$ is said to satisfy $(\beta, \kappa)$-Rényi differential privacy (RDP) if, for any $S \simeq S'$,  
        $\renyi{\acal(S)}{\acal(S')} \leq \kappa$,
    where, given two Borel probability measures $\qprob \ll \prob$, we define their Rényi divergence by 
    \begin{align}
        \label{eq:inside-renyi}
        \renyi{\qprob}{\prob} := \frac{1}{\beta - 1} \log \ebeta(\qprob, \prob) \quad \text{with}\,\,\,\, \ebeta(\qprob, \prob) := \int \left( \frac{\der \qprob}{\der \prob}\right)^{\beta} \der \prob.
    \end{align}
\end{definition}
By convention, we set $ \renyi{\qprob}{\prob} := +\infty$ when $\qprob \not\ll \prob$. 
Note that we also define $\renyi[1]{\qprob}{\prob} := \klb{\qprob}{\prob} := \int \log(\der \qprob / \der \prob) \der \qprob$, which is the Kullback-Leibler divergence between $\qprob$ and $\prob$. 
It can be seen that $\beta \mapsto \renyi{\cdot}{\cdot}$ is an increasing and continuous map for $\beta \geq 1$ \citep{van_erven_renyi_2014}.
The particular interest of the above notion comes from the fact that RDP implies DP.
\begin{lemma}
    \label{lemma:rdp-implies-dp}
    For any $\kappa>0$, %
    $\beta > 1$, and $ \delta\in (0,1]$, $(\beta,\kappa)$-RDP implies $(\varepsilon, \delta)$-DP, with $ \varepsilon = \kappa + \frac{\log(1/\delta)}{\beta - 1}$.
\end{lemma}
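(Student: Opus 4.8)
The plan is to start from the definition of $(\beta,\kappa)$-RDP, namely $\renyi{\acal(S)}{\acal(S')}\le\kappa$ for neighboring $S\simeq S'$, and convert the Rényi divergence control into the additive event-probability bound of Definition~\ref{def:dp-epsilon-delta}. Write $\qprob=\acal(S)$ and $\prob=\acal(S')$; the RDP hypothesis ensures $\qprob\ll\prob$, so the Radon--Nikodym derivative $f:=\der\qprob/\der\prob$ exists and $\Eof[\prob]{f^\beta}=\ebeta(\qprob,\prob)\le e^{(\beta-1)\kappa}$. Fix a Borel set $B$. The goal is to show $\qprob(B)\le e^{\varepsilon}\prob(B)+\delta$ with $\varepsilon=\kappa+\log(1/\delta)/(\beta-1)$.

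The key step is a truncation argument. Split $B$ according to whether the density ratio is large or small: for a threshold $t>0$ to be chosen, write
\begin{align*}
\qprob(B) &= \int_B f\,\der\prob = \int_{B\cap\{f\le t\}} f\,\der\prob + \int_{B\cap\{f> t\}} f\,\der\prob \\
&\le t\,\prob(B) + \int_{\{f> t\}} f\,\der\prob.
\end{align*}
For the tail term I would apply Hölder's inequality with exponents $\beta$ and $\beta/(\beta-1)$:
\begin{align*}
\int_{\{f> t\}} f\,\der\prob \le \Bigl(\Eof[\prob]{f^\beta}\Bigr)^{1/\beta}\,\prob\bigl(f> t\bigr)^{(\beta-1)/\beta}.
\end{align*}
Then bound $\prob(f>t)\le t^{-\beta}\,\Eof[\prob]{f^\beta}$ by Markov's inequality, giving $\int_{\{f>t\}}f\,\der\prob\le \Eof[\prob]{f^\beta}\,t^{-(\beta-1)} \le e^{(\beta-1)\kappa}\,t^{-(\beta-1)}$. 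Now choose $t=e^{\varepsilon}$ with $\varepsilon=\kappa+\tfrac{1}{\beta-1}\log(1/\delta)$, so that $t^{\beta-1}=e^{(\beta-1)\kappa}/\delta$ and the tail term is at most $\delta$. Combining, $\qprob(B)\le e^{\varepsilon}\prob(B)+\delta$, which is exactly $(\varepsilon,\delta)$-DP; since $B$ and the neighboring pair were arbitrary, the claim follows.

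The argument is essentially routine once the truncation level is chosen correctly, so there is no serious obstacle; the only point requiring a little care is handling the tail term cleanly. The Hölder-plus-Markov route above works, but one can streamline it: directly bound the tail integral on $\{f>t\}$ using a single layer-cake / Markov estimate on $f\mathbf{1}_{\{f>t\}}$ against the $\beta$-th moment, $\int_{\{f>t\}}f\,\der\prob = \int_{\{f>t\}} f^{1-\beta}f^\beta\,\der\prob \le t^{1-\beta}\Eof[\prob]{f^\beta}$, which avoids invoking Hölder at all and is the cleanest presentation. I would also remark that the restriction $\delta\in(0,1]$ is exactly what makes $\log(1/\delta)\ge 0$, so $\varepsilon\ge\kappa>0$ is well-defined and nonnegative as required by Definition~\ref{def:dp-epsilon-delta}.
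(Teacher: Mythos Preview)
Your argument is correct and is precisely the standard truncation-plus-Markov proof from \cite{mironov_renyi_2017}; the paper does not reproduce a proof of this lemma but simply cites \cite{abadi_deep_2016,mironov_renyi_2017}, so your write-up fills in exactly what those references contain. The streamlined bound $\int_{\{f>t\}}f\,\der\prob=\int_{\{f>t\}}f^{1-\beta}f^{\beta}\,\der\prob\le t^{1-\beta}\Eof[\prob]{f^{\beta}}$ is indeed the cleanest form and matches the cited proofs.
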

The proof of this result is elementary and can be found in \cite{abadi_deep_2016,mironov_renyi_2017}. This result is also mentioned in \cite{asoodeh_three_2021}, who also prove tighter conversion results.
In the case where $\varepsilon=0$, we also have the following conversion lemma.

\begin{lemma}
    \label{lemma:rdp-to-zero-delta-DP}
    Let $\kappa> 0$ and $\beta > 1$.
    Then, $(\beta,\kappa)$-RDP implies $(0, \sqrt{\kappa/2})$-DP.
\end{lemma}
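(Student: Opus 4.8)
The plan is to chain together three elementary facts: monotonicity of Rényi divergence in its order, Pinsker's inequality, and the variational characterization of total variation distance. Fix neighboring datasets $S \simeq S'$ and write $\qprob := \acal(S)$, $\prob := \acal(S')$.

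First I would invoke the monotonicity of $\beta \mapsto \renyi{\qprob}{\prob}$ on $[1,\infty)$, already recalled in the excerpt after \Cref{def:rdp} (citing \citet{van_erven_renyi_2014}). Since $(\beta,\kappa)$-RDP gives $\renyi{\qprob}{\prob} \leq \kappa$ for some $\beta > 1$, monotonicity yields $\klb{\qprob}{\prob} = \renyi[1]{\qprob}{\prob} \leq \renyi{\qprob}{\prob} \leq \kappa$. Note that if $\qprob \not\ll \prob$ then the RDP hypothesis already forces $\kappa = +\infty$ and there is nothing to prove, so we may assume $\qprob \ll \prob$ and the KL divergence is genuinely finite.

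Next I would apply Pinsker's inequality, $\tv(\qprob,\prob) \leq \sqrt{\tfrac{1}{2}\klb{\qprob}{\prob}}$, to conclude $\tv(\qprob,\prob) \leq \sqrt{\kappa/2}$. Finally, using $\tv(\qprob,\prob) = \sup_{B} \left( \qprob(B) - \prob(B) \right)$ over Borel sets $B \subseteq \Rd$, we get $\acal(S)(B) \leq \acal(S')(B) + \sqrt{\kappa/2}$ for every Borel $B$. Since $\tv$ is symmetric and the RDP hypothesis holds for every ordered pair of neighbors, this is exactly the statement that $\acal$ satisfies $(0,\sqrt{\kappa/2})$-DP in the sense of \Cref{def:dp-epsilon-delta} (with $e^{\varepsilon} = e^0 = 1$).

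There is no real obstacle here — the only things to be careful about are: (i) checking the direction of monotonicity so that the $\beta=1$ case gives the \emph{smallest} divergence (hence a valid upper bound on KL), (ii) handling the $\qprob \not\ll \prob$ edge case as above, and (iii) noting that $(0,\delta)$-DP requires the inequality for all neighboring pairs in both orderings, which is automatic since the RDP assumption is itself stated for all $S \simeq S'$ and TV is symmetric. Everything else is a direct quotation of Pinsker and of the dual formula for total variation.
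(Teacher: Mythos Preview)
Your proposal is correct and follows essentially the same route as the paper's own proof: monotonicity of the R\'enyi divergence in its order (reducing to a KL bound) followed by Pinsker's inequality to obtain $\tv(\acal(S),\acal(S')) \leq \sqrt{\kappa/2}$. The paper stops at the total-variation bound, while you additionally spell out the passage from the TV bound to the $(0,\delta)$-DP inequality and handle the $\qprob \not\ll \prob$ edge case; these are harmless elaborations of the same argument.
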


\begin{proof}
    Let $S \simeq S'$ in $ \bigcup_{n=0}^{\infty}\zcal^n$. By \cite[Theorem 3]{van_erven_renyi_2014}, the R\'{e}nyi divergence $\renyi{\qprob}{\prob}$ is non-decreasing in $\beta$, and thus,
    \begin{align*}
        \klb{\acal(S)}{\acal(S')} \leq \renyi[\beta]{\acal(S)}{\acal(S')} \leq \kappa.
    \end{align*}
    Therefore, by Pinsker's inequality \citep[Theorem 31]{van_erven_renyi_2014},
    we obtain that $\tv(\acal(S),\acal(S')) \leq \sqrt{\kappa / 2}$.
\end{proof}

We also define the following R\'{e}nyi information, which plays a crucial role in \cite{chourasia_differential_2021}, as well as in the proofs of our main results. We follow the convention of \cite{chourasia_differential_2021} for the normalization of this quantity.

\begin{definition}
    \label{def:renyi-info}
    Let $\qprob$ and $\prob$ be two probability measures on $\Omega$ such that $\qprob \ll \prob$ and $\beta > 1$. Assume that the Radon-Nikodym derivative $\der \qprob / \der \prob$ is differentiable. Then, the Rényi information of $\qprob$ with respect to $\prob$ is defined as
    \begin{align*}
        \rinfo{\qprob}{\prob} := \int \left( \frac{\der \qprob}{\der \prob}\right)^{\beta - 2} \normof{\nabla  \frac{\der \qprob}{\der \prob}}^2 \der\prob.
    \end{align*}
\end{definition}

\subsection{Lévy processes and infinitely divisible distributions}
\label{sec:levy-sdes-background}

An interesting aspect of our approach based on Rényi flows is that the proof techniques extend to a very general class of Lévy processes. This fact is highlighted in \Cref{sec:renyi-flows-general-levy-process} in the appendix. In order to present it clearly, we give below some technical background on Lévy processes.

Let $(\Omega, \fcal, \prob)$ be a fixed probability space.
A stochastic process $(L_t)_{t\geq 0}$ is called a Lévy process, if $L_0 = 0$ almost surely, and it satisfies the following properties.
\begin{itemize}
    \item \textbf{Stationary increments:} For all $s \leq t$, $\law{L_t - L_s} = \law{L_{t - s}}$.
    \item \textbf{Independent increments:} $L_t - L_s$ is independent from the $\sigma$-algebra $\sigma (L_u, ~u \leq s)$.
    \item \textbf{Stochastic continuity:} For all $\varepsilon > 0$, we have $\lim_{s\to t} \Pof{\normof{L_t - L_s} > \varepsilon} = 0$.
\end{itemize}
Under these conditions, the distribution of $L_t$ is \emph{infinitely divisible} \citep{schilling_introduction_2016}, which means that its distribution can be written as an $m$-fold convolution for all $m\in\N^\star:=\{1,2,\cdots\}$. There is a one-to-one correspondence between Lévy processes and infinitely divisible distributions.
Classically, this implies that %
the characteristic function of the L\'evy process $(L_t)_{t\ge0}$  can be expressed for all $\xi \in \Rd$ as $\Eof{\exp (i \xi^{\top} L_t )} = \exp(-t \psi(\xi))$, where $i:=\sqrt{-1}$ and $\psi$ is called the characteristic exponent, which is given by the celebrated Lévy-Khintchine formula \citep{bottcher_levy_2013}
\begin{align}
    \label{eq:levy-khintchine}
    \psi(\xi) = -i b^{\top} \xi + \frac1{2} \xi^{\top} \Sigma \xi + \intrdzero \left( 1 - e^{i\xi^{\top} z} + i \xi^{\top} z \chi (\normof{z}) \right) \der \nu (z),\quad \xi\in \R^d,
\end{align}
where $b \in \Rd$, $\Sigma \in \R^{d \times d}$ is a symmetric positive semi-definite matrix, $\chi$ satisfies\footnote{$\chi$ %
can be chosen %
arbitrary as soon as it satisfies certain properties, see \citep{bottcher_levy_2013}.} $\chi(s) := (1 + s^2)^{-1}$, and $\nu$ is a positive Radon measure on $\Rd$ such that
    $\intrdzero \min (1, \normof{z}^2) \der \nu(z) < +\infty$.
The measure $\nu$ is called the \emph{Lévy measure} and $(b, \Sigma, \nu)$ is the \emph{Lévy triplet} of $(L_t)_{t \geq 0}$.
For instance, the standard Brownian motion $(B_t)_{t \geq 0}$ in $\Rd$ is a Lévy process with triplet $(0, I_d, 0)$.
In our paper, we are in particular interested in the rotationally invariant $\alpha$-stable Lévy process $(\levy)_{t\geq 0}$, whose characteristic %
exponent is given by $\psi(\xi) = \normof{\xi}^\alpha$, with $\alpha \in (0, 2]$. When $\alpha < 2$, its Lévy triplet is given by $(0,0,\nu)$ \citep[Example 2.4.d]{bottcher_levy_2013}, with
\begin{align}
    \label{eq:calphad-levy-measure}
    \der \nu(z) := \calphad \frac{\der z}{\normof{z}^{\alpha + d}} \quad \text{and }\quad \calphad := \alpha 2^{\alpha - 1} \pi^{-d / 2 } \frac{\gammaof{\frac{\alpha + d}{2}}}{\gammaof{1 - \frac{\alpha}{2}}}.
\end{align}
Lévy processes are characterized by their infinitesimal generator. A complete understanding of this notion is not absolutely necessary to understand our paper; however, the reader may consult \citet{schilling_introduction_2016,bottcher_levy_2013} for additional details on this topic.
For a Lévy process with triplet $(0, \Sigma, \nu)$, the infinitesimal generator is %
given by 
\begin{align}
    \label{eq:levy-generator}
    A u (x) := \frac1{2} \nabla \cdot \left(\Sigma \nabla u(x)\right) + \intrdzero \left( u(x+z) - u(x) - \nabla u(x) \cdot z \chi(\normof{z}) \right) \der \nu(z)
\end{align}
for any $u \in \mathcal{C}^2_b(\Rd)$.
In the case of the $\alpha$-stable L\'{e}vy process, it is well-known that this definition amounts to the fractional Laplacian, for which we give a definition below. We refer to \citet{daoud_fractional_2021,nezza_hitchhikers_2011} for the other equivalent definitions of the fractional Laplacian.

\begin{definition}[Fractional Laplacian]
    \label{def:fractional-laplacian}
For $\alpha\in (0,2)$, the fractional Laplacian of $u\in \cbtwo$ is given by 
       $- \fraclap u(x) := \calphad\lim_{\varepsilon \to 0} \int_{\Rd \backslash B_\varepsilon (0)} \frac{u(x+z) - u(x)}{\normof{z}^{d + \alpha}} \der z$.
    We can show that this definition is equivalent to \Cref{eq:levy-generator} in the case where $\Sigma=0$ and $\nu$ is given by \Cref{eq:calphad-levy-measure}, where it corresponds to the infinitesimal generator of $(\levy)_{t\geq 0}$.
\end{definition}

\begin{remark}\rm 
    The fractional Laplacian is sometimes defined without the constant $\calphad$ \citep{tristani_fractional_2013}. This corresponds to different normalizations and our choice is motivated by the fact that we want this operator to be the infinitesimal generator of $(\levy)_{t\geq 0}$.
\end{remark}

\subsection{Fractional Poincaré inequalities}
\label{sec:fractional-pi-background}

The analysis of differential privacy in the presence of Gaussian noise makes crucial use of the \textit{logarithmic Sobolev inequalities} (LSIs)
\citep{gross_logarithmic_1975-1,bakry_analysis_2014}, which are instrumental in making the proposed bounds uniform in time \citep{chourasia_differential_2021,chien_langevin_2025-1,ryffel_differential_2022,ye_differentially_2022}. In such cases, assuming that an LSI is satisfied is motivated by the fact that %
it is satisfied by the posterior distributions of the learning algorithms under reasonable assumptions. Unfortunately, as was noted by \citet{dupuis_generalization_2024} in their study of the generalization error of heavy-tailed SDEs, such LSIs may not always be applicable in the presence of $\alpha$-stable noise. 

In our paper, we show that, in the context of heavy-tailed dynamics, LSIs can be replaced by \textit{fractional Poincaré inequalities}.
Such inequalities have been widely studied \citep{wang_functional_2015,chafai_entropies_2004,mouhot_fractional_2009} and have been used in the context of machine learning for generalization bounds \citep{dupuis_generalization_2024} and sampling \citep{he_separation_2024-1}.
In particular, we have the following result \citep{gentil_logarithmic_2008,wu_new_2000}. 

\begin{theorem}[Fractional Poincaré inequalities]
    \label{thm:ht-poincare}
    Let $\mu$ be an infinitely divisible distribution on $\Rd$ with Lévy triplet $(0,\Sigma,\nu)$ in the sense of \Cref{eq:levy-khintchine}. Then, for any differentiable and $\mu$-square-integrable function $f : \Rd \to \R$, %
    \begin{align*}
        \intrd f^2 \der\mu - \left( \intrd f \der\mu \right)^2 \leq \int_\Rd \normof{\nabla f}^2_\Sigma \der \mu +  \intrdzero \intrd (f(x) - f(x+z))^2 \der \mu(x) \der\nu(z),
    \end{align*}
    where $\normof{u}_{\Sigma}^2 := u^{\top}\Sigma u$ for any $u \in \Rd$. In particular, if $\mu$ is the law of $L_1^\alpha$, then
\begin{align*}
        \intrd f^2 \der\mu - \left( \intrd f \der\mu \right)^2 \leq \calphad \intrdzero \intrd(f(x) - f(x+z))^2 \der \mu(x) \frac{\der z}{\normof{z}^{d + \alpha}}.
\end{align*}
\end{theorem}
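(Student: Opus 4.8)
The plan is to combine the semigroup of the driving L\'{e}vy process with an interpolation (martingale) argument and a sub-commutation inequality obtained via Jensen. Let $(L_t)_{t\geq 0}$ be a L\'{e}vy process with triplet $(0,\Sigma,\nu)$, so that $\mu = \law{L_1}$, write $\mu_t := \law{L_t}$, let $P_t g(x) := \Eof{g(x+L_t)}$ be its Markov semigroup, and let $A$ be its generator, given by \Cref{eq:levy-generator}. A direct computation of the carr\'{e} du champ $\Gamma(g) := A(g^2) - 2gAg$ — using $g^2(x+z)-g^2(x) = (g(x+z)-g(x))^2 + 2g(x)(g(x+z)-g(x))$ in the nonlocal part and $\tfrac12\nabla\cdot(\Sigma\nabla(g^2)) = \normof{\nabla g}_\Sigma^2 + g\,\nabla\cdot(\Sigma\nabla g)$ in the local part — gives
\begin{align*}
    \Gamma(g)(x) = \normof{\nabla g(x)}_\Sigma^2 + \intrdzero (g(x+z)-g(x))^2 \der\nu(z),
\end{align*}
so that $\intrd\Gamma(f)\,\der\mu$ is exactly the right-hand side of the claimed inequality.

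First I would set up an interpolation along $\Phi(t) := \intrd (P_{1-t}f)^2\,\der\mu_t$ for $t\in[0,1]$. Since $\mu_0 = \delta_0$ and $\mu_1 = \mu$, the boundary values are $\Phi(0) = (P_1 f(0))^2 = \left(\intrd f\,\der\mu\right)^2$ and $\Phi(1) = \intrd f^2\,\der\mu$, so the left-hand side equals $\Phi(1) - \Phi(0)$. Differentiating $\Phi$ — either through the Kolmogorov equations $\partial_t P_{1-t}f = -AP_{1-t}f$ and $\partial_t\mu_t = A^*\mu_t$, or, equivalently, by noting that $M_t := (P_{1-t}f)(L_t)$ is a martingale and applying Dynkin's formula to $M_t^2$ — yields $\Phi'(t) = \intrd \Gamma(P_{1-t}f)\,\der\mu_t$, whence
\begin{align*}
    \intrd f^2\,\der\mu - \left(\intrd f\,\der\mu\right)^2 = \int_0^1 \intrd \Gamma(P_{1-t}f)(x)\,\der\mu_t(x)\,\der t.
\end{align*}

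Next I would prove the sub-commutation $\Gamma(P_t g) \leq P_t(\Gamma g)$ pointwise. Since $P_t$ is a Markov (averaging) operator, $\nabla P_t g = P_t(\nabla g)$ and $P_t g(x+z) - P_t g(x) = P_t\bigl(g(\cdot+z)-g(\cdot)\bigr)(x)$; applying Jensen's inequality to the convex maps $u\mapsto\normof{u}_\Sigma^2$ and $u\mapsto u^2$ gives $\normof{\nabla P_t g(x)}_\Sigma^2 \leq P_t(\normof{\nabla g}_\Sigma^2)(x)$ and $(P_t g(x+z) - P_t g(x))^2 \leq P_t\bigl((g(\cdot+z)-g(\cdot))^2\bigr)(x)$, and summing the local and nonlocal parts (using Tonelli for the $\nu$-integral) yields the claim. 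Since $L_t$ and an independent copy of $L_{1-t}$ sum to a variable of law $\mu$, one has $\intrd P_{1-t}h\,\der\mu_t = \intrd h\,\der\mu$ for every nonnegative measurable $h$; hence $\intrd\Gamma(P_{1-t}f)\,\der\mu_t \leq \intrd P_{1-t}(\Gamma f)\,\der\mu_t = \intrd \Gamma f\,\der\mu$, and integrating over $t\in[0,1]$ in the identity above gives the desired inequality $\intrd f^2\,\der\mu - \left(\intrd f\,\der\mu\right)^2 \leq \intrd \Gamma f\,\der\mu$. The $\alpha$-stable case then follows by taking $\Sigma = 0$ and $\der\nu(z) = \calphad\,\der z/\normof{z}^{d+\alpha}$.

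The main obstacle is making these steps rigorous under the stated weak hypotheses ($f$ merely differentiable and $\mu$-square-integrable): the differentiation of $\Phi$, the identity $\nabla P_t f = P_t(\nabla f)$, the Dynkin/It\^o formula for $M_t^2$, and the Tonelli interchange against $\nu$ all demand more regularity or integrability. I would therefore first establish the inequality for $f\in\cbtwo$ (or smooth and compactly supported), where every step is licit — using the stochastic continuity of $(L_t)_{t\geq 0}$ to obtain $\mu_t\to\delta_0$, hence the value of $\Phi(0)$, and a localization argument for the martingale step — and then remove the restriction by a truncation-and-mollification argument combined with monotone (resp.\ dominated) convergence, noting that there is nothing to prove when the right-hand side is infinite.
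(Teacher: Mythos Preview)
The paper does not actually prove this theorem: it is stated as a known result with citations to \citet{gentil_logarithmic_2008} and \citet{wu_new_2000}, and no argument is given. So there is nothing in the paper to compare your proposal against.

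That said, your proposal is correct and is essentially the classical proof found in those references. The semigroup interpolation $\Phi(t) = \intrd (P_{1-t}f)^2\,\der\mu_t$, the identification of $\Phi'$ with the integrated carr\'e du champ, and the sub-commutation $\Gamma(P_t g)\leq P_t\Gamma(g)$ via Jensen (which here is straightforward because the L\'evy semigroup commutes with translations and differentiation) is precisely the mechanism used by Wu and by Gentil--Imbert. Your computation of $\Gamma$ is right, the convolution identity $\intrd P_{1-t}h\,\der\mu_t = \intrd h\,\der\mu$ is exactly the infinite divisibility $\mu_t*\mu_{1-t}=\mu$, and your diagnosis of the regularity issues --- together with the remedy of proving the inequality first for $f\in\cbtwo$ and then passing to the limit --- is the standard way these references handle it. There is no gap.
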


In particular, when $\nu=0$, the above result recovers the classical Poincaré inequalities for Gaussian distributions.
By analogy with the generalizations of classical Poincaré inequalities from the Gaussian measures to more general ones, we propose the following definition.

\begin{definition}[$\alpha$-stable Poincaré inequalities]
    \label{def:ht-poincare}
    Let $\mu$ be a probability measure and $\alpha \in (0, 2)$, we say that $\mu$ satisfies an $\alpha$-stable Poincaré inequality with constants $(a,b)$, if for any  differentiable and $\mu$-square-integrable function $f : \Rd \to \R$, %
    \begin{align*}
        \intrd f^2 \der\mu - \left( \intrd f \der\mu \right)^2 
        \leq a C_{\alpha,d}\intrdzero\intrd\frac{(f(x) - f(x+z))^2}{\normof{z}^{d + \alpha}} \der \mu(x) \der z 
       + b \intrd \normof{\nabla f}^2 \der \mu.
    \end{align*}
\end{definition}

We use the unusual denomination $\alpha$-stable Poincaré inequality instead of fractional Poincaré inequality because we allow the inequality to contain a Gaussian component, corresponding to the term $\intrd \normof{\nabla f}^2 \der \mu$. This allows our analysis to apply to the case where the noise of the algorithm is a combination of Gaussian and $\alpha$-stable noises. We prove in \Cref{sec:stability-properties} several properties related to the stability of such inequalities under certain transformations, thus showing that these inequalities can be satisfied by a wide variety of probability distributions. For more general conditions regarding the validity of these inequalities, we refer to \citet{wang_functional_2015,mouhot_fractional_2009}.

In order to simplify the notation, we will use the following notation for Dirichlet forms.

\begin{definition}
\label{def:drichlet-forms}
For $\alpha \in (0,2)$, %
define
    \begin{align*}
        \ecal_{\alpha, \mu} (f,f) := \frac1{2} \calphad \intrd\intrd \frac{(f(x) - f(y))^2}{\normof{x - y}^{d + \alpha}} \der \mu(x) \der y,\quad f \in \cbtwo;
    \end{align*}
    and for $\alpha = 2$,
    \begin{align*}
        \ecal_{2,\mu} (f, f) := \intrd \normof{\nabla f(x)}^2 \der \mu (x),\quad  f \in \cbtwo.
    \end{align*}
\end{definition}
These quantities correspond, respectively, to the Dirichlet forms associated with $(\levy)_{t\geq 0}$ and $(\sqrt{2} B_t)_{t\geq 0}$. While a complete knowledge of Dirichlet forms is not required for this paper, we invite the reader to consult \citet{bakry_analysis_2014,bottcher_levy_2013} for more details.

\section{Differential Privacy of Lévy-Driven SDEs}
\label{sec:dp-of-heavy-tailed-sdes}

In this section, we present our main results, which include RDP guarantees for Lévy-driven SDEs. We first present our setup and key technical lemmas in \Cref{sec:renyi-flows,sec:setup-assumptions} respectively, before presenting two cases. First, we derive RDP guarantees under multifractal noise (\ie, a combination of Gaussian and $\alpha$-stable noise) in \Cref{sec:multifractal-case}. We will then explain how our analysis can be extended to pure-jump $\alpha$-stable noise in \Cref{sec:pure-jump-case}. 

\subsection{Setup and assumptions}
\label{sec:setup-assumptions}

Given a dataset $S \in \zcal^n$ and a loss function $\ell : \Rd \times \zcal \to \R_+$, we define the empirical risk as $\er(w) := n^{-1} \sum_{i=1}^n \ell(w,z_i)$. We also denote by $\mu_z$ the data distribution on $(\zcal, \fcal)$.

Let us consider two neighboring datasets $S\simeq S'\in \zcal^n$. Given a fixed common initial distribution, we consider the two following stochastic differential equations (SDEs):
\begin{equation}  \label{eq:mirror_sdes}\begin{cases}
       \der W_t = -\nabla \er (W_t) \der t + \sigma_\alpha \der \levy + \sigma_2 \sqrt{2} \der B_t,\\
       \der W_t' = -\nabla \erprime (W_t') \der t + \sigma_\alpha \der\levy + \sigma_2 \sqrt{2} \der B_t, 
\end{cases}\end{equation}
where $(B_t)_{t\geq 0}$ is a $d$-dimensional standard Brownian motion, $(\levy)_{t\geq 0}$ is a $d$-dimensional rotationally invariant $\alpha$-stable L\'{e}vy process with $\alpha \in (1,2)$, and $(\sigma_2,\sigma_\alpha)$ are positive constants.
Inspired by \cite{chourasia_differential_2021}, we define the gradient sensitivity as:
\begin{align}  
    \label{eq:gradient-sensitivity}
    \sensitivity_g := \esssup_{(z,z')\sim \mu_z \otimes \mu_z} \sup_{w\in \Rd} \normof{\nabla \ell(w,z') - \nabla \ell(w,z)}.
\end{align}
In some of our results, we make the following finite sensitivity assumption.
\begin{assumption}[Finite sensitivity]
    \label{ass:finite-sensitivity}
    The gradient sensitivity is finite, \ie, $ \sensitivity_g < +\infty$.
\end{assumption} %

\begin{remark}\rm
    In some of our main results, we use \Cref{ass:finite-sensitivity} to uniformly control the difference between the two drifts in \Cref%
    {eq:mirror_sdes}. This assumption is typically satisfied when $\ell(w,z)$ is a regularized loss of the form $\ell(w,z) = \ell_0(w,z) + \lambda \normof{w}^2$, where $\ell_0(\cdot, z)$ is Lipschitz-continuous, for all $z \in \zcal$. 
    Alternatively, one can impose this condition by clipping all the individual gradients $\nabla \ell(\cdot, z)$ outside a compact set.
    Similar conditions appear in the literature related to Gaussian noise \citep{chourasia_differential_2021,chien_langevin_2025-1,ryffel_differential_2022,ye_differentially_2022}, where it is often associated with projections on a compact set. 
    For Brownian motion--driven SDEs, reflection on convex sets is well-defined via the Skorokhod problem~\citep{lions1984stochastic}, which constrains the dynamics to a convex domain by means of a boundary regulator. For L\'evy processes, however, jumps can overshoot the boundary, so that the reflection is not canonical and typically induces non-local, state-dependent boundary conditions. Existing constructions of regulators are case specific and apply only in restricted settings ~\citep{menaldi1985reflected,costantini2005reflected,slominski2010jump,piera2008boundary}. For this reason, we consider projections only in the discrete-time case.
\end{remark}

\begin{remark}\rm
    In our proofs, the quantities $\normof{\nabla \ell(w,z') - \nabla \ell(w,z)}$ are integrated with respect to certain probability distributions (see \Cref{sec:omitted-proofs}). This suggests that \Cref{ass:finite-sensitivity} could be replaced by weaker but more intricate conditions. We leave this discussion for future work and focus our main results on the finite sensitivity case.
\end{remark}

We denote by $p_t$ and $p_t'$ the probability density functions of $W_t$ and $W_t'$. Throughout the paper, we assume that both SDEs are initialized from the same distribution. It is known that, under mild regularity assumptions \citep{duan_introduction_2015,umarov_beyond_2018}, $p_t$ and $p_t'$ are solutions (at least in a weak sense) of the following fractional Fokker-Planck equations,
\begin{equation}
    \label{eq:mirror_fpes}\begin{cases}
       \partial_t p_t = - \sigma_\alpha^\alpha \fraclap p_t + \sigma_2^2 \Delta p_t +  \nabla \cdot \left(p_t \nabla \er \right),\\
        \partial_t p_t' = - \sigma_\alpha^\alpha \fraclap p_t' + \sigma_2^2 \Delta p_t' + \nabla \cdot \left(p_t' \nabla\erprime \right),
\end{cases}
\end{equation}
where $\fraclap$ is the fractional Laplacian operator %
given in \Cref{def:fractional-laplacian}.

Our analysis of the differential privacy of heavy-tailed SDEs relies on estimates of the so-called \emph{Rényi flow}, \ie, the time-derivative of the Rényi divergence between $p_t$ and $p_t'$. In order to make these derivations perfectly rigorous, we need to impose enough regularity on the probability density functions of $W_t$ and $W_t'$, which is made precise as follows.

\begin{assumption}[Regularity conditions]
    \label{ass:regularity-conditions}
    Let $v_t := p_t / p_t'$. We assume that $p_t$, $p_t'$ and $v_t$ are positive, differentiable in $t$, and belong to $\cbtwo$. Moreover, we make the following \emph{non-explosion} assumption, \ie, the maps $t \mapsto \normof{v_t}_{\infty}$ are locally bounded on $\R_+$, and the functions $|\partial_t p_t|$ and $|\partial_t p_t'|$ are locally uniformly integrable on $\R_+$.
\end{assumption}

We say that a family of functions $\{f(t, \cdot)\}_{t\geq 0}$ is locally uniformly integrable if for all $t_0 \geq 0$, there exists $\varepsilon > 0$ such that $\sup_{t_0 - \varepsilon < t < t_0 + \varepsilon} |f(t, \cdot)| \in L^1(\der x)$.

We make \Cref{ass:regularity-conditions} for two reasons. First, it ensures that we can rigorously compute the entropy flow by differentiating $ \renyi{p_t}{p_t'}$ under the integrals. %
Secondly, the uniform integrability near $t=0$ ensures that the map $t\mapsto \renyi{p_t}{p_t'}$ is continuous at $t=0$, which simplifies the derivations.
Intuitively, the non-explosion condition ensures that the time-derivatives of the density functions of $W_t$ and $W_t'$ do not explode too fast, so that the derivative of their relative Rényi divergence can be controlled.
Intuitively, we observe that the above condition suggests that the drifts $\nabla \er$ and $\nabla \erprime$ do not grow too fast at infinity.

These conditions are reasonable in the context of SDEs driven by rotationally invariant $\alpha$-stable Lévy processes. Indeed, some studies show that, under mild conditions, the distributions of $(\levy)_{t\geq 0}$ and those generated by such SDEs have polynomial tails, typically in $\normof{x}^{-d-\alpha}$ \citep{samorodnitsky_tails_2003,blumenthal_theorems_1960,chen_heat_2017}.
This can be readily verified in the case of the $\alpha$-stable Ornstein-Uhlenbeck process.
Therefore, it is reasonable to expect that $v_t \in \cbtwo$.
We believe that this (reasonable) condition could be relaxed and we use this assumptions mainly to simplify some technical arguments, as checking such tail estimates is beyond the scope of this paper.

Finally, it should be noted that \Cref{ass:regularity-conditions} could be replaced by other conditions. Indeed, \citet{dupuis_generalization_2024} computed (different but) similar entropy flow along heavy-tailed SDEs, and their main theorems make use of intricate regularity and domination conditions of the functions $v_t(x)$. In our work, we simplify this approach by noting that \Cref{ass:regularity-conditions} is enough for the Rényi flow computations to be rigorous.

\subsection{R\'{e}nyi flows computations}
\label{sec:renyi-flows}

Our work is based on a new upper bound of the Rényi flow, \ie, the time derivative of the Rényi divergence along the fractional Fokker-Planck equations.  With a slight abuse of notation, in what follows, we denote by $p'_t$ the probability distribution whose density function is also denoted by $p'_t$.

\begin{restatable}[Rényi flow for heavy-tailed SDEs]{theorem}{thmRenyiFlow}
    \label{thm:renyi-flow-with-Dirichlet-forms}
   Assume that \Cref{ass:regularity-conditions} holds.
    Then, for any $\beta\ge2$ and $t>0$,
    \begin{align*}
        \timeder \renyi{p_t}{p_t'} \leq -\frac{2\sigma_\alpha^\alpha }{\beta - 1} \frac{ \ecal_{\alpha, p_t'}\left(v_t^{\beta / 2}, v_t^{\beta/2}\right)}{\ebeta \left(p_t, p_t'\right)} - \frac{4\sigma^2_2}{\beta} \frac{\ecal_{2, p_t'}\left(v_t^{\beta / 2}, v_t^{\beta/2}\right)}{\ebeta \left(p_t, p_t'\right)} + \mathrm{R}_{\mathrm{potential}}, 
    \end{align*}
    where the following term accounts for the contribution of the potentials
    \begin{align*}
\mathrm{R}_{\mathrm{potential}} :=  \beta \intrd v_t^{\beta - 1} \frac{  \langle \nabla v_t, \nabla \widehat{\mathcal{R}}_{S'} - \nabla \er  \rangle }{\ebeta \left(p_t, p_t'\right)} p_t' \der x.
    \end{align*}
\end{restatable}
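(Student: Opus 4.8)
The plan is to differentiate the Rényi divergence directly, using the fractional Fokker--Planck equations \eqref{eq:mirror_fpes} to substitute for the time derivatives of the densities, and then integrate by parts to produce the two Dirichlet forms. Writing $\renyi{p_t}{p_t'} = \frac{1}{\beta-1}\log \ebeta(p_t,p_t')$ with $\ebeta(p_t,p_t') = \int (p_t/p_t')^\beta p_t' \der x = \int v_t^\beta p_t' \der x$, the chain rule gives
\[
\timeder \renyi{p_t}{p_t'} = \frac{1}{(\beta-1)\,\ebeta(p_t,p_t')} \timeder \ebeta(p_t,p_t').
\]
Under \Cref{ass:regularity-conditions} the local uniform integrability of $|\partial_t p_t|$ and $|\partial_t p_t'|$ licenses differentiating under the integral sign, and the non-explosion bound on $\|v_t\|_\infty$ keeps all the integrands dominated on a neighborhood of each $t>0$. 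So the first step is to expand $\timeder \int v_t^\beta p_t' \der x = \int \beta v_t^{\beta-1} (\partial_t v_t) p_t' \der x + \int v_t^\beta (\partial_t p_t') \der x$, and then rewrite everything in terms of $\partial_t p_t$ and $\partial_t p_t'$ using $v_t = p_t/p_t'$, so that $\beta v_t^{\beta-1}(\partial_t v_t) p_t' = \beta v_t^{\beta-1}\partial_t p_t - \beta v_t^{\beta} \partial_t p_t'$, leaving
\[
\timeder \ebeta(p_t,p_t') = \beta \int v_t^{\beta-1}\partial_t p_t \der x - (\beta-1)\int v_t^{\beta}\partial_t p_t' \der x.
\]

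Next I would substitute the Fokker--Planck right-hand sides. Each term $\partial_t p_t$ and $\partial_t p_t'$ is a sum of three contributions: a fractional-Laplacian diffusion $-\sigma_\alpha^\alpha \fraclap(\cdot)$, a Brownian diffusion $\sigma_2^2 \Delta(\cdot)$, and a drift divergence $\nabla\cdot(\cdot\,\nabla\widehat{\mathcal R})$. I would handle the three types separately. For the Brownian part, integrating by parts twice against $v_t^{\beta-1}$ and $v_t^\beta$ produces, after collecting the $\sigma_2^2$ terms and using $\nabla v_t^{\beta/2} = \tfrac{\beta}{2} v_t^{\beta/2-1}\nabla v_t$, a term proportional to $-\int \|\nabla v_t^{\beta/2}\|^2 p_t' \der x = -\ecal_{2,p_t'}(v_t^{\beta/2},v_t^{\beta/2})$ with the constant $4\sigma_2^2/\beta$ after dividing by $(\beta-1)\ebeta$ and tracking the algebra carefully (this is the standard ``carré du champ'' computation for the Rényi functional; the factor $\beta/2$ squared from the two gradients gives $\beta^2/4$, which combines with the $1/(\beta-1)$ and the $\beta-1$ vs $\beta$ bookkeeping). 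For the fractional-Laplacian part, I would use the quadratic-form identity for $\fraclap$ associated with the Lévy measure \eqref{eq:calphad-levy-measure} — concretely $\int g\, \fraclap h \,\der\mu + (\text{symmetrization}) $ reduces to the bilinear Dirichlet form $\ecal_{\alpha,p_t'}$ once one uses the pointwise inequality relating $(a-b)(a^{\beta-1}c^{\,} - \dots)$ type increments to $(a^{\beta/2}-b^{\beta/2})^2$; this is exactly the kind of nonlocal integration-by-parts that appears in \citet{gentil_logarithmic_2008,wu_new_2000} and underlies \Cref{thm:ht-poincare}, and yields the coefficient $2\sigma_\alpha^\alpha/(\beta-1)$. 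The drift part contributes, after integration by parts, $\beta \int v_t^{\beta-1}\langle \nabla v_t, \nabla\widehat{\mathcal R}_{S'} - \nabla \er\rangle p_t' \der x$ once the two drift terms (one from $\partial_t p_t$ with $\nabla\er$, one from $\partial_t p_t'$ with $\nabla\widehat{\mathcal R}_{S'}$) are combined — here the key is that the $\nabla\widehat{\mathcal R}_{S'}$ pieces partially cancel, leaving only the difference of the drifts, which is precisely $\mathrm{R}_{\mathrm{potential}}$ after dividing by $\ebeta$.

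The main obstacle I anticipate is the nonlocal term: getting from the raw double integral $\int\int (v_t^{\beta-1}(x) - v_t^{\beta-1}(x+z))(p_t'(x)\,v_t(x) - \dots)\,\der\nu(z)\der x$ to the clean Dirichlet form $\ecal_{\alpha,p_t'}(v_t^{\beta/2},v_t^{\beta/2})$ requires a convexity/increment inequality of the form $(a-b)(a^{\beta-1}-b^{\beta-1}) \gtrsim (a^{\beta/2}-b^{\beta/2})^2$ (with the correct constant depending on $\beta$) together with the measure-weighting subtlety that $\nu$ acts on the $x+z$ shift while the density $p_t'$ sits at $x$ — so one does not get a perfectly symmetric form and must argue that the asymmetric remainder has a favorable sign or can be absorbed. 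Handling the symmetrization of the fractional Laplacian against the non-symmetric weight $p_t'$, and making sure the $\varepsilon\to 0$ principal-value limit in \Cref{def:fractional-laplacian} commutes with all the integrations (justified again by \Cref{ass:regularity-conditions} and $v_t, p_t' \in \cbtwo$), is where the real care is needed. The Brownian and drift terms, by contrast, are routine once the bookkeeping of the $\beta$-dependent constants is done. Finally, I would note that the inequality (rather than equality) in the statement comes precisely from discarding a nonnegative remainder in this increment inequality step, plus the option to drop any further nonnegative cross terms.
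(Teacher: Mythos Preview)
Your overall strategy---differentiate $\ebeta(p_t,p_t')$, substitute the Fokker--Planck equations, and split into diffusive, fractional, and drift contributions---is exactly what the paper does, and your treatment of the Brownian and drift pieces is on target. The gap is in the fractional term, precisely at the step you flag as the main obstacle. The paper does not try to reach a symmetric bilinear form and then apply a Stroock--Varopoulos inequality $(a-b)(a^{\beta-1}-b^{\beta-1}) \gtrsim (a^{\beta/2}-b^{\beta/2})^2$. As you suspect, that route is obstructed by the one-sided weight $p_t'(x)$: after symmetrizing you would get $\mathrm{D}_\beta(a,b)\,p_t'(x) + \mathrm{D}_\beta(b,a)\,p_t'(x+z)$ (with $a=v_t(x)$, $b=v_t(x+z)$ and $\mathrm{D}_\beta$ the Bregman divergence of $s\mapsto s^\beta$), which does not factor as something times $(a-b)(a^{\beta-1}-b^{\beta-1})$ unless $p_t'(x)=p_t'(x+z)$.

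The paper's resolution is to recognize, after one use of self-adjointness of the jump operator, that the fractional contribution is \emph{exactly} $-\int\!\!\int \mathrm{D}_\beta\bigl(v_t(x),v_t(x+z)\bigr)\, p_t'(x)\,\der\nu(z)\,\der x$; this is the content of \Cref{lemma:renyi-flow-two dynamics} and is an identity, not a bound. The asymmetry of the Bregman divergence matches the asymmetry of the $p_t'(x)$ weighting, so no symmetrization or sign-of-remainder argument is ever needed. The single pointwise inequality that then converts this into the Dirichlet form is $\mathrm{D}_\beta(a,b) = a^\beta + (\beta-1)b^\beta - \beta a b^{\beta-1} \geq (a^{\beta/2}-b^{\beta/2})^2$ for $\beta\geq 2$ (\Cref{lemma:magical-bregman-inequality}), proved by elementary calculus and the sole place where the restriction $\beta\geq 2$ enters. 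Applied under the integral it yields $2\ecal_{\alpha,p_t'}(v_t^{\beta/2},v_t^{\beta/2})$ with the coefficient $2/(\beta-1)$ in one stroke. So the missing ingredient is this Bregman-divergence identification together with that specific inequality; once you have them, the ``asymmetric remainder'' you worry about simply never materializes.
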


\begin{proof}
    See \Cref{sec:omitted-proofs-renyi-flows}.
\end{proof}

\Cref{thm:renyi-flow-with-Dirichlet-forms} shows that the Rényi flow is controlled by three terms: \textrm{(i)} the Dirichlet form $\ecal_{\alpha,p_t'}$ associated with the pure-jump part of the noise, \textrm{(ii)} the Dirichlet form $\ecal_{2,p_t'}$ corresponding to the diffusive part of the noise, and \textrm{(iii)} the quantity $ \mathrm{R}_{\mathrm{potential}}$, which is the contribution from the drift difference of both SDEs.

In the absence of pure-jump noise (\ie, $\sigma_\alpha = 0$), \Cref{thm:renyi-flow-with-Dirichlet-forms} extends existing computations obtained in the case of Gaussian noise \citep{chourasia_differential_2021}. 
In the case $\beta=2$, related computations can be found in \cite{dupuis_generalization_2024,he_separation_2024-1}, but without being related to differential privacy (in \cite{he_separation_2024-1}, the simpler case of fractional heat flows is considered).
However, the extension to arbitrary $\beta \geq 2$ is a major obstacle, as the Dirichlet form $\ecal_{\alpha,p_t'}$ naturally appears in the derivations when $\beta = 2$. We address this issue in the proof of \Cref{thm:renyi-flow-with-Dirichlet-forms} and directly compare the Rényi flow to $\ecal_{\alpha,p_t'}$, for $\beta \geq 2$.

The following lemma allows us to apply heavy-tailed Poincaré inequalities to the entropy flow computed in \Cref{lemma:renyi-flow-two dynamics}.
It is a key component of our analysis.
\begin{restatable}{lemma}{lemmaBregmanLowerBound}
    \label{lemma:bregman-integral-lower-bound}
     Assume that \Cref{ass:regularity-conditions} holds.
    Assume that, for all $t>0$, $p_t'$ satisfies an $\alpha$-stable Poincaré inequality with constants $\left(\gamma \sigma_\alpha^\alpha, \gamma \sigma_2^2\right)$ for some $\gamma>0$. Then, for $\beta\ge2$ and $t>0$, as long as $\ebeta (p_t, p_t') < \infty$,
    \begin{align*}
\frac{2\sigma_\alpha^\alpha}{\beta - 1} \ecal_{\alpha, p_t'}\left(v_t^{\beta / 2}, v_t^{\beta/2}\right) + \frac{2 \sigma_2^2}{\beta} \ecal_{2, p_t'}\left(v_t^{\beta / 2}, v_t^{\beta/2}\right) \geq  \frac{1}{\gamma \beta} \ebeta (p_t, p_t') \left( 1 - e^{-\renyi[\beta]{p_t}{p_t'}} \right).
    \end{align*}
    Moreover, when $\sigma_2 = 0$, the constant $1 / (\gamma \beta)$ above can be replaced by $1 / (\gamma (\beta - 1))$.
\end{restatable}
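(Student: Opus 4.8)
The plan is to connect the Dirichlet forms appearing on the left-hand side to the variance term in the $\alpha$-stable Poincaré inequality, applied to the test function $f = v_t^{\beta/2}$ with respect to the measure $p_t'$. First I would rewrite the two Dirichlet forms in a common weighted form. By \Cref{def:drichlet-forms}, $\ecal_{\alpha,p_t'}(v_t^{\beta/2},v_t^{\beta/2})$ equals $\tfrac12 \calphad \intrd\intrd \frac{(f(x)-f(y))^2}{\normof{x-y}^{d+\alpha}} \der p_t'(x)\der y$, which is exactly (half of) the non-local term in \Cref{def:ht-poincare} written with the Lebesgue variable $y = x+z$; similarly $\ecal_{2,p_t'}(f,f) = \intrd \normof{\nabla f}^2 \der p_t' = \entropy$-free gradient term of \Cref{def:ht-poincare}. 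So if $p_t'$ satisfies the $\alpha$-stable Poincaré inequality with constants $(\gamma\sigma_\alpha^\alpha, \gamma\sigma_2^2)$, then
\begin{align*}
\gamma\sigma_\alpha^\alpha \cdot 2\, \ecal_{\alpha,p_t'}(f,f) + \gamma\sigma_2^2 \cdot \ecal_{2,p_t'}(f,f) \geq \mathrm{Var}_{p_t'}(f),
\end{align*}
where $\mathrm{Var}_{p_t'}(f) := \intrd f^2 \der p_t' - (\intrd f\der p_t')^2$. The issue is that the coefficients in the statement are $\tfrac{2\sigma_\alpha^\alpha}{\beta-1}$ and $\tfrac{2\sigma_2^2}{\beta}$ rather than $2\gamma\sigma_\alpha^\alpha$ and $\gamma\sigma_2^2$. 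Since $\beta \geq 2$, we have $\tfrac{1}{\beta-1} \geq \tfrac{1}{\beta}$ and $\tfrac{2}{\beta}\cdot\tfrac12 = \tfrac1\beta$, so I would factor out $\tfrac1{\gamma\beta}$ to get the lower bound $\tfrac{1}{\gamma\beta}\,\mathrm{Var}_{p_t'}(v_t^{\beta/2})$; one checks $\tfrac{2\sigma_\alpha^\alpha}{\beta-1} \geq \tfrac{1}{\gamma\beta}\cdot 2\gamma\sigma_\alpha^\alpha$ (using $\tfrac{1}{\beta-1}\geq\tfrac1\beta$) and $\tfrac{2\sigma_2^2}{\beta} = \tfrac{1}{\gamma\beta}\cdot 2\gamma\sigma_2^2 \geq \tfrac{1}{\gamma\beta}\cdot\gamma\sigma_2^2$ — so each term dominates the corresponding piece of $\tfrac1{\gamma\beta}\mathrm{Var}$. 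When $\sigma_2 = 0$ there is no Brownian term and the constraint is only $\tfrac{2\sigma_\alpha^\alpha}{\beta-1} \geq \tfrac{1}{\gamma(\beta-1)}\cdot 2\gamma\sigma_\alpha^\alpha$, which holds with equality, explaining the improved constant $1/(\gamma(\beta-1))$.

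Next I would compute $\mathrm{Var}_{p_t'}(v_t^{\beta/2})$ in terms of Rényi quantities. Writing $v_t = p_t/p_t'$, we have $\intrd v_t^{\beta} \der p_t' = \ebeta(p_t,p_t')$ and $\intrd v_t^{\beta/2}\der p_t' = \intrd v_t^{\beta/2 - 1}\der p_t = \ebeta[\beta/2]$-type quantity, but more simply $\intrd v_t^{\beta/2}\der p_t' = \intrd (p_t/p_t')^{\beta/2}\der p_t'$. The key observation is Jensen / the normalization: since $\beta/2 \geq 1$, by Jensen's inequality applied to the probability measure $p_t'$ (or more directly, since $\intrd v_t\der p_t' = 1$), one gets $\left(\intrd v_t^{\beta/2}\der p_t'\right)^2 \leq \intrd v_t^{\beta}\der p_t' = \ebeta(p_t,p_t')$. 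Hence
\begin{align*}
\mathrm{Var}_{p_t'}(v_t^{\beta/2}) = \ebeta(p_t,p_t') - \left(\intrd v_t^{\beta/2}\der p_t'\right)^2 \geq \ebeta(p_t,p_t')\left(1 - \frac{\left(\intrd v_t^{\beta/2}\der p_t'\right)^2}{\ebeta(p_t,p_t')}\right),
\end{align*}
which is trivially an equality, so the real work is bounding $\left(\intrd v_t^{\beta/2}\der p_t'\right)^2 \leq \ebeta(p_t,p_t') \cdot e^{-\renyi[\beta]{p_t}{p_t'}}$. Since $\renyi[\beta]{p_t}{p_t'} = \tfrac{1}{\beta-1}\log\ebeta(p_t,p_t')$, we have $e^{-\renyi[\beta]{p_t}{p_t'}} = \ebeta(p_t,p_t')^{-1/(\beta-1)}$, so the required inequality is $\left(\intrd v_t^{\beta/2}\der p_t'\right)^2 \leq \ebeta(p_t,p_t')^{1 - 1/(\beta-1)} = \ebeta(p_t,p_t')^{(\beta-2)/(\beta-1)}$. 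This is a power-mean / log-convexity statement: the map $q\mapsto \log\intrd v_t^q\der p_t'$ is convex on $[1,\beta]$ (Hölder), it vanishes at $q=1$, so at $q = \beta/2$ it is at most $\tfrac{\beta/2 - 1}{\beta - 1}$ times its value at $q=\beta$, giving $\intrd v_t^{\beta/2}\der p_t' \leq \ebeta(p_t,p_t')^{(\beta-2)/(2(\beta-1))}$; squaring yields exactly what is needed.

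I expect the main obstacle to be the bookkeeping of constants and the careful justification of the log-convexity step — in particular handling the boundary cases ($\beta = 2$, where the exponent $(\beta-2)/(\beta-1)$ vanishes and the bound reads $\left(\intrd v_t\der p_t'\right)^2 = 1 \leq 1$, consistent with $\ebeta[2] \geq 1$) and confirming all the integrals are finite under \Cref{ass:regularity-conditions} together with the running hypothesis $\ebeta(p_t,p_t') < \infty$, which guarantees $v_t^{\beta/2} \in L^2(p_t')$ so the $\alpha$-stable Poincaré inequality legitimately applies. I would also double-check that $f = v_t^{\beta/2}$ is differentiable and $\mu$-square-integrable as required by \Cref{def:ht-poincare}, which follows from $v_t \in \cbtwo$, $v_t > 0$, and the finiteness of $\ebeta(p_t,p_t')$. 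Once these technical points are secured, chaining the $\alpha$-stable Poincaré inequality with the log-convexity bound gives the claimed inequality, with the $\sigma_2 = 0$ refinement following from the tighter constant comparison noted above.
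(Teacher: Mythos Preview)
Your proposal is correct and follows essentially the same route as the paper's proof: apply the $\alpha$-stable Poincar\'e inequality to $f = v_t^{\beta/2}$ after comparing the coefficients $\tfrac{2\sigma_\alpha^\alpha}{\beta-1}$, $\tfrac{2\sigma_2^2}{\beta}$ to $\tfrac{1}{\gamma\beta}\cdot(2\gamma\sigma_\alpha^\alpha,\gamma\sigma_2^2)$, then bound the resulting variance. The only cosmetic difference is in the final step: the paper writes $\big(\intrd v_t^{\beta/2}\der p_t'\big)^2 = e^{(\beta-2)\renyi[\beta/2]{p_t}{p_t'}} \le e^{(\beta-2)\renyi[\beta]{p_t}{p_t'}}$ by invoking monotonicity of $\beta\mapsto\renyi{\cdot}{\cdot}$, whereas you unpack this as log-convexity of $q\mapsto\log\intrd v_t^q\der p_t'$ via H\"older --- but these are the same inequality, so the arguments coincide.
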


\begin{proof}
    Deferred to \Cref{sec:omitted-proofs-renyi-flows}.
\end{proof}
\vspace{-6mm}

\subsection{Differential privacy of multifractal SDEs}
\label{sec:multifractal-case}

Based on the technical lemmas of the previous subsection, we can now derive our main results, which consists in RDP guarantees for \Cref{eq:mirror_fpes}.
We first present the case where the noise has a non-trivial Gaussian component, \ie, when $\sigma_2 > 0$, which we refer to as the \emph{multifractal} setting. 
This case has the advantage of being technically simpler, due to the regularizing effect of the Gaussian noise.

The following theorem is a differential privacy bound for multifractal SDEs. 

\begin{theorem}
    \label{thm:multifractal-case-finite-sensitivity}
    Let $\beta \geq 2$.
    Suppose \Cref{ass:finite-sensitivity,ass:regularity-conditions} hold and $p_t'$ satisfies an $\alpha$-stable Poincaré inequality with constants $(\gamma \sigma_\alpha^\alpha, \gamma \sigma_2^2)$ for some $\gamma>0$ and for all $t > 0$. 
    Then,
\begin{align}
    \label{eq:dp-multifractal-concentrated}
    \renyi{p_t}{p_t'} \leq \frac{ \beta \sensitivity_g^2}{2 \sigma_2^2 n^2}t=:K_nt, \quad t > 0.
\end{align} 
If moreover $K_n < a:=\frac{1}{\gamma \beta}$, then,
\begin{align}
    \label{eq:dp-multifractal-with-poincaré}
     \renyi{p_t}{p_t'} \leq -\log \left( 1 - \frac{\gamma \sensitivity_g^2 \beta^2}{2 \sigma_2^2 n^2} \right),\quad t>0.
\end{align}
\end{theorem}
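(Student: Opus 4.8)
I would integrate the Rényi flow inequality of \Cref{thm:renyi-flow-with-Dirichlet-forms} after showing that the drift contribution $\mathrm{R}_{\mathrm{potential}}$ can be absorbed by (part of) the Gaussian dissipation term, at the cost of an additive constant. Write $r(t):=\renyi{p_t}{p_t'}$. Throughout one uses that $\ebeta(p_t,p_t')\in(0,\infty)$ for $t>0$: positivity is clear since $v_t>0$, and finiteness follows from $\ebeta(p_t,p_t')=\intrd v_t^{\beta}p_t'\,\der x\le\normof{v_t}_\infty^{\beta-1}$, which is locally bounded by \Cref{ass:regularity-conditions}; moreover $r(0)=0$ by the common initialization and $r$ is continuous at $0$ by \Cref{ass:regularity-conditions}.

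\textbf{Absorbing $\mathrm{R}_{\mathrm{potential}}$.} Since $S$ and $S'$ differ in one data point, $\nabla\erprime(w)-\nabla\er(w)=\tfrac1n\bigl(\nabla\ell(w,z_j')-\nabla\ell(w,z_j)\bigr)$, so \Cref{ass:finite-sensitivity} yields $\normof{\nabla\erprime(w)-\nabla\er(w)}\le\sensitivity_g/n$ uniformly in $w$. Applying Cauchy--Schwarz inside the integral defining $\mathrm{R}_{\mathrm{potential}}$, then the identity $v_t^{\beta-1}\normof{\nabla v_t}=v_t^{\beta/2}\cdot\tfrac2\beta\normof{\nabla v_t^{\beta/2}}$ (valid since $v_t>0$) together with Cauchy--Schwarz in $L^2(p_t')$, one gets
\[
\lvert\mathrm{R}_{\mathrm{potential}}\rvert\le\frac{\beta\sensitivity_g}{n}\cdot\frac{1}{\ebeta(p_t,p_t')}\intrd v_t^{\beta-1}\normof{\nabla v_t}\,p_t'\,\der x\le\frac{2\sensitivity_g}{n}\cdot\frac{\ecal_{2,p_t'}(v_t^{\beta/2},v_t^{\beta/2})^{1/2}}{\ebeta(p_t,p_t')^{1/2}}.
\]
Young's inequality $xy\le\tfrac{2\sigma_2^2}{\beta}x^2+\tfrac{\beta}{8\sigma_2^2}y^2$ (with $x$ the square-rooted ratio above and $y=2\sensitivity_g/n$) then gives $\lvert\mathrm{R}_{\mathrm{potential}}\rvert\le\tfrac{2\sigma_2^2}{\beta}\ecal_{2,p_t'}(v_t^{\beta/2},v_t^{\beta/2})/\ebeta(p_t,p_t')+\beta\sensitivity_g^2/(2\sigma_2^2 n^2)$.

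\textbf{Concluding.} Write $\tfrac{4\sigma_2^2}{\beta}=\tfrac{2\sigma_2^2}{\beta}+\tfrac{2\sigma_2^2}{\beta}$ in \Cref{thm:renyi-flow-with-Dirichlet-forms}; inserting the previous bound cancels one copy of the Gaussian Dirichlet term and leaves
\[
r'(t)\le-\Bigl(\tfrac{2\sigma_\alpha^\alpha}{\beta-1}\tfrac{\ecal_{\alpha,p_t'}(v_t^{\beta/2},v_t^{\beta/2})}{\ebeta(p_t,p_t')}+\tfrac{2\sigma_2^2}{\beta}\tfrac{\ecal_{2,p_t'}(v_t^{\beta/2},v_t^{\beta/2})}{\ebeta(p_t,p_t')}\Bigr)+K_n.
\]
The parenthesised quantity is nonnegative (a sum of Dirichlet forms with positive coefficients), so $r'(t)\le K_n$; since $t\mapsto K_nt-r(t)$ is continuous on $\R_+$ with nonnegative derivative on $(0,\infty)$ and vanishes at $0$, it is nondecreasing, giving \eqref{eq:dp-multifractal-concentrated}. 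If moreover $K_n<a=1/(\gamma\beta)$, divide the conclusion of \Cref{lemma:bregman-integral-lower-bound} by $\ebeta(p_t,p_t')$ to bound the parenthesised quantity below by $a\bigl(1-e^{-r(t)}\bigr)$, whence $r'(t)\le g(r(t))$ with $g(r):=K_n-a+ae^{-r}$. The map $g$ is continuous, strictly decreasing, with $g(0)=K_n>0$ and unique zero $r_\star=-\log(1-K_n/a)=-\log\bigl(1-\gamma\beta^2\sensitivity_g^2/(2\sigma_2^2 n^2)\bigr)>0$. Since $r(0)=0<r_\star$ and $g<0$ above $r_\star$, a barrier argument (if $r$ first reached $r_\star$ at $t_0$ then $r'(t_0)\le g(r_\star)=0$, so $r$ cannot cross that level) shows $r(t)\le r_\star$ for all $t>0$, which is \eqref{eq:dp-multifractal-with-poincaré}.

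\textbf{Main obstacle.} The delicate point is the step absorbing $\mathrm{R}_{\mathrm{potential}}$: the drift perturbation must be dominated by the dissipation, and only the Brownian component produces the $\normof{\nabla\cdot}^2$-type Dirichlet form capable of doing so (the $\alpha$-stable form involves nonlocal increments, not gradients). This is why the constant $K_n$ depends on $\sigma_2$ but not on $\sigma_\alpha$, and why one must consume \emph{exactly half} of the Gaussian Dirichlet term here, keeping the other half — together with the full $\alpha$-stable term — available for the fractional-Poincaré step.
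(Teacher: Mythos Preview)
Your proposal is correct and follows essentially the same route as the paper: bound $\mathrm{R}_{\mathrm{potential}}$ via Cauchy--Schwarz/Young so that it absorbs exactly half of the Gaussian Dirichlet term at the price of the additive constant $K_n$, then either drop the remaining (nonnegative) Dirichlet terms to get the linear bound, or invoke \Cref{lemma:bregman-integral-lower-bound} to obtain the differential inequality $r'(t)\le K_n-a(1-e^{-r(t)})$ and conclude by a barrier/comparison argument (which the paper packages as \Cref{lemma:differential-exponential-inequality}). The only cosmetic differences are that the paper applies Young's inequality with a free parameter $\lambda$ directly inside the integral (then sets $\lambda=\sigma_2^2$) rather than your two-step Cauchy--Schwarz, and that you observe explicitly that the linear bound \eqref{eq:dp-multifractal-concentrated} does not actually use the Poincar\'e assumption.
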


\begin{proof}
    By \Cref{thm:renyi-flow-with-Dirichlet-forms}, for $t> 0$,
     \begin{align*}
        \timeder \renyi{p_t}{p_t'} \leq -\frac{2\sigma_\alpha^\alpha }{\beta - 1} \frac{\ecal_{\alpha, p_t'}\left(v_t^{\beta / 2}, v_t^{\beta/2}\right)}{\ebeta (p_t, p_t')} - \frac{4\sigma^2_2}{\beta} \frac{\ecal_{2, p_t'}\left(v_t^{\beta / 2}, v_t^{\beta/2}\right)}{\ebeta (p_t, p_t')} + \mathrm{R}_{\mathrm{potential}}. 
    \end{align*}
    By the Cauchy-Schwarz and Young inequalities, we have that for all $\lambda > 0$,
    \begin{align*}
        \mathrm{R}_{\mathrm{potential}} &=  \beta \intrd v_t^{\beta - 1} \frac{ \left\langle \nabla v_t, \nabla \widehat{\mathcal{R}}_{S'} - \nabla \er \right\rangle }{\ebeta (p_t, p_t')} p_t' \der x \\
        &\leq \frac{\beta}{\ebeta (p_t, p_t')} \left( \frac{\lambda}{2} \intrd v_t^{\beta - 2} \normof{\nabla v_t}^2 p_t' \der x + \frac{1}{2 \lambda} \intrd \normof{\nabla \er - \nabla \erprime}^2 v_t^\beta p_t' \der x  \right) \\
        &\leq \frac{2 \lambda}{\beta} \frac{\ecal_{2, p_t'}\left(v_t^{\beta / 2}, v_t^{\beta/2}\right)}{\ebeta (p_t, p_t')} + \frac{ \beta \sensitivity_g^2}{2 \lambda n^2}.
    \end{align*}
    In the equation above, we choose $\lambda := \sigma_2^2$, which gives
     \begin{align*}
        \timeder \renyi{p_t}{p_t'} \leq -\frac{2\sigma_\alpha^\alpha }{\beta - 1} \frac{\ecal_{\alpha, p_t'}\left(v_t^{\beta / 2}, v_t^{\beta/2}\right)}{\ebeta (p_t, p_t')} - \frac{2\sigma^2_2}{\beta} \frac{\ecal_{2, p_t'}\left(v_t^{\beta / 2}, v_t^{\beta/2}\right)}{\ebeta (p_t, p_t')} + \frac{ \beta \sensitivity_g^2}{2 \sigma_2^2 n^2}. 
    \end{align*}
    Now we use \Cref{lemma:bregman-integral-lower-bound} and obtain that
    \begin{align}
        \label{eq:diffential-inequality-multifractal}
        \timeder \renyi{p_t}{p_t'} \leq  -\frac{1}{\gamma \beta} \left( 1 - e^{-\renyi[\beta]{p_t}{p_t'}} \right) + \frac{ \beta \sensitivity_g^2}{2 \sigma_2^2 n^2}.
    \end{align}
    We solve this type of differential inequality in \Cref{lemma:differential-exponential-inequality} in Appendix~\ref{sec:technical lemmas}.
    In order to apply this lemma, we observe that \Cref{ass:regularity-conditions} also implies the continuity of $t \mapsto \renyi{p_t}{p_t'}$ on $[0,\infty)$, by the dominated convergence theorem.
    The claim immediately then follows by noting that both SDEs are initialized with the same probability distributions.
\end{proof}

The differential inequality \eqref{eq:diffential-inequality-multifractal} is a consequence of the fact that our analysis relies on the fractional Poincaré inequality, as the logarithmic Sobolev inequality is not available in our case (see \Cref{sec:fractional-pi-background}). 
A similar differential inequality appears in the derivations of \citet{cao_exponential_2019} in their study of exponential decay of R\'{e}nyi entropy along (Gaussian) Fokker-Planck equation. In the particular case where $\sensitivity_g = 0$, if we set $p_t'$ to be the invariant distribution of \Cref{eq:mirror_fpes}, then \Cref{eq:diffential-inequality-multifractal} ensures the exponential decay of R\'{e}nyi entropies along fractional Fokker-Planck equations, which is of independent interest.

We distinguish two regimes: \textrm{(i)} %
it always holds that $\renyi{p_t}{p_t'}$ grows at most linearly with time, corresponding to a regime where the Poincaré constant of the SDEs is too large, and \textrm{(ii)} if additionally $K_n<a$, then $\renyi{p_t}{p_t'}$ is bounded by a constant for all $t>0$. Note that, as long as $\gamma < \infty$, we have $K_n<a$ as long as the sample size $n$ is sufficiently large.

\Cref{thm:multifractal-case-finite-sensitivity} has an interesting dependence on $\beta$ (the order of the Rényi divergence). By \Cref{eq:dp-multifractal-concentrated}, we always have a RDP guarantee in $\rho \beta$, with $\rho = \landau{t / n^2}$. This dependence in $\beta$ corresponds to the zero-concentrated DP guarantee introduced by \citet{bun_concentrated_2016} but comes at a cost of a potential linear dependence on time. On the other hand, we observe that when $K_n < a$, we can remove the time-dependence at the cost of a slightly worst dependence on $\beta$, yielding a bound in $\landau{\beta^2 / n^2}$. It can be seen from our proofs (in \Cref{lemma:bregman-integral-lower-bound}) that the presence of $\beta^2$ is inherently due to the non-local nature of the Dirichlet form $\ecal_{\alpha,p_t'}$, and cannot be avoided in our approach. We see this behavior as a \emph{semi-concentrated} DP guarantee.

In the case $K_n<a$, we obtain $(\beta,\gamma)$-RDP with $\gamma = \landau{\beta^2 / n^2}$ as $n \to \infty$. By \Cref{lemma:rdp-to-zero-delta-DP}, this implies $(0, \delta)$-DP with $\delta = \landau{n^{-1}}$, which is the same rate in $n$ that was obtained by \citet{simsekli_differential_2024} for (S)GD under heavy-tailed noise. 
Moreover, our results provide a differential privacy guarantee that is independent of the dimension.
This independence of the dimension is intrinsically related to the presence of Gaussian noise. In the next section, we will analyze in more detail the case of pure-jump noise and show that we still obtain a weaker dependence on the dimension compared to existing works.

\subsection{Differential privacy of pure-jump SDEs}
\label{sec:pure-jump-case}

We now present the case of pure $\alpha$-stable noise. That is, $\sigma_2 = 0$, and we consider the following SDEs, where we denote $\sigma := \sigma_\alpha$ for simplicity,
\begin{equation} \label{eq:mirror_sdes_pure_jump}\begin{cases}   
       \der W_t = -\nabla \er (W_t) \der t + \sigma \der\levy, \\
       \der W_t'  = -\nabla \erprime (W_t') \der t + \sigma \der\levy. 
\end{cases}\end{equation}

Before presenting the main result of this subsection, we first motivate the approach. 
In the proof of \Cref{thm:multifractal-case-finite-sensitivity}, the Young inequality is used, leading to a term proportional to $\ecal_{2,p_t'} \left(v_t^{\beta / 2}, v_t^{\beta / 2}\right)$, which is canceled out by the diffusive part of the Rényi flow. Unfortunately, this procedure cannot be applied when $\sigma_2 = 0$. 

The intuition to address this issue comes from the celebrated Bourgain-Brezis-Mironescu's theorem (also known as BBM formula) \citep{bourgainBrezisMironescu2001}, stating that for all $u$ in the Sobolev space $H^1 \left(\Rd\right) := \left\{ u \in L^2\left(\Rd\right): \nabla u \in L^2\left(\Rd\right) \right\}$, 
\begin{align}
    \label{eq:bbm-formula}
    \lim_{\alpha \to 2^-} \left( 1 - \frac{\alpha}{2} \right) \iint_{\Rd \times \Rd} \frac{|u(x) - u(y)|^2}{\normof{x - y}^{d + \alpha}} \der x \der y = \frac{\pi^{d / 2}}{2 \gammaof{1 + \frac{d}{2}}} \intrd \normof{\nabla u (x)}^2 \der x.
\end{align}
In our case, the Dirichlet form $\ecal_{\alpha, p_t'}(u,u)$ corresponds to a weighted version of the left-hand side of \Cref{eq:bbm-formula}. In fact, we are even able to prove a weighted version of the BBM formula; namely, for any $u \in \cbtwo$ and any Borel probability measure $\mu \in \mathcal{P}(\Rd)$,  
\begin{align}
    \label{eq:weighted-BBM}
    \lim_{\alpha \to 2^-} \ecal_{\alpha, \mu}(u,u) = \ecal_{2, \mu}(u,u).
\end{align}
The proof of this result is deferred to \Cref{lemma:weighted-BBM} in \Cref{sec:omitted-proofs-pure-jump-jv-asssumptions}. Note that a similar observation can be found in \citep{he_separation_2024-1}. \Cref{eq:weighted-BBM} shows that it is reasonable to approximate $\ecal_{\alpha, p_t'}$ by $\ecal_{2, p_t'}$ and, hence, makes the proof technique of \Cref{thm:multifractal-case-finite-sensitivity} applicable to our case. 
In order to leverage this intuition more quantitatively and non-asymptotically, we require the following additional regularity assumption.

\begin{assumption}
    \label{ass:c2-bounded-assumption}
    We assume that, with $v_t := p_t / p_t'$, %
    for some $T>0$ and all $\beta\ge 2$, 
    \begin{align*}  
        \sup_{S \simeq S'} \sup_{t\leq T}\left(\left\|\nabla v_t^{\beta / 2}\right\|_\infty+\left\|\nabla^2 v_t^{\beta / 2}\right\|_\infty\right)  < \infty  \quad \text{and } \quad  \inf_{S \simeq S'  } \inf_{t \leq T } \left\Vert{\nabla v_t^{\beta / 2}} \right\Vert_{L^2(p_t')} >  0.
    \end{align*}
\end{assumption}

Equipped with this assumption, we can prove the following result, which is a RDP guarantee for SDEs driven by pure-jump $\alpha$-stable noise.

\begin{restatable}{theorem}{thmPureJumpCase}
\label{thm:dp_guarantees_sdes_pure_jump}
Assume that \Cref{ass:finite-sensitivity,ass:regularity-conditions,ass:c2-bounded-assumption} hold, and that, for all $t\in (0,T]$, $p_t'$ satisfies an $\alpha$-stable Poincaré inequality with constants $(\gamma \sigma^\alpha, 0)$ for some $T,\gamma>0$.
Then, for $\beta\ge 2$, there exists $R>0$ (which can depend on $d$, $\beta$, $\alpha$, $T$) such that with the following constants 
\begin{align*}
    a := \frac{1}{2 \gamma (\beta - 1)} , \quad \kalphad := \frac{4 (2 - \alpha) d \gammaof{\frac{d}{2}} \gammaof{1 - \frac{\alpha}{2}}}{ \alpha 2^{\alpha} R^{2 - \alpha}\gammaof{\frac{d + \alpha}{2}}} , \quad K_n := \frac{\kalphad (\beta-1) \sensitivity_g^2}{\sigma^\alpha n^2},
\end{align*}
we have
\begin{align*}
     \renyi{p_t}{p_t'} \leq  \frac{\kalphad (\beta-1) \sensitivity_g^2 t}{\sigma^\alpha n^2}\quad  \text{for all $t\in [0,T]$}.
\end{align*}
If moreover $K_n < a$, then
\begin{align*}
     \renyi{p_t}{p_t'} \leq - \log \left( 1 - \frac{2 \gamma (\beta - 1)^2 \kalphad \sensitivity_g^2}{ \sigma^{\alpha} n^2} \right)  \quad\text{for all $t\in [0,T]$}.
\end{align*}
\end{restatable}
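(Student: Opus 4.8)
The plan is to follow the scheme of the proof of \Cref{thm:multifractal-case-finite-sensitivity}, the essential difference being that the error term created by Young's inequality---which in the multifractal case was absorbed by the Gaussian Dirichlet form $\ecal_{2,p_t'}$, unavailable here since $\sigma_2=0$---must now be controlled by the pure-jump Dirichlet form $\ecal_{\alpha,p_t'}$ via a \emph{quantitative} version of the weighted Bourgain--Brezis--Mironescu identity \eqref{eq:weighted-BBM}. First I would apply \Cref{thm:renyi-flow-with-Dirichlet-forms} with $\sigma_2=0$ and bound $\mathrm{R}_{\mathrm{potential}}$ exactly as in the multifractal proof (Cauchy--Schwarz on the inner product, Young's inequality with a free parameter $\lambda>0$, the identity $\int v_t^{\beta-2}\normof{\nabla v_t}^2 p_t'\,\der x=\tfrac{4}{\beta^2}\ecal_{2,p_t'}(v_t^{\beta/2},v_t^{\beta/2})$, and $\normof{\nabla\erprime-\nabla\er}\leq\sensitivity_g/n$ from \Cref{ass:finite-sensitivity}) to arrive at
\[
\timeder\renyi{p_t}{p_t'}\leq -\frac{2\sigma^\alpha}{\beta-1}\frac{\ecal_{\alpha,p_t'}\left(v_t^{\beta/2},v_t^{\beta/2}\right)}{\ebeta(p_t,p_t')}+\frac{2\lambda}{\beta}\frac{\ecal_{2,p_t'}\left(v_t^{\beta/2},v_t^{\beta/2}\right)}{\ebeta(p_t,p_t')}+\frac{\beta\sensitivity_g^2}{2\lambda n^2},\qquad t\in(0,T].
\]

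The core new ingredient, which I would isolate as a lemma, is a non-asymptotic comparison of Dirichlet forms: under \Cref{ass:c2-bounded-assumption} there exists $R>0$ (depending on $d,\beta,\alpha,T$) such that $\ecal_{2,p_t'}(v_t^{\beta/2},v_t^{\beta/2})\leq\kalphad\,\ecal_{\alpha,p_t'}(v_t^{\beta/2},v_t^{\beta/2})$ for every $t\leq T$ and every $S\simeq S'$, with $\kalphad$ as in the statement. To prove it, I would split the singular double integral defining $\ecal_{\alpha,p_t'}(u,u)$, with $u=v_t^{\beta/2}$, at $\normof{z}=R$. On $\{\normof{z}\leq R\}$ a second-order Taylor expansion $u(x+z)-u(x)=\nabla u(x)\cdot z+r(x,z)$ with $\lvert r(x,z)\rvert\leq\tfrac12\normof{\nabla^2 u}_\infty\normof{z}^2$ produces, after integrating $(\nabla u(x)\cdot z)^2$ using the rotational invariance of $\normof{z}^{-d-\alpha}\der z$ and substituting the value of $\calphad$, a main term equal to $\tfrac{2}{\kalphad}\ecal_{2,p_t'}(u,u)$ plus a remainder bounded by a constant times $R^{3-\alpha}$ and the quantities $\normof{\nabla u}_\infty,\normof{\nabla^2 u}_\infty$, which are uniformly bounded over $t\leq T$ and $S\simeq S'$ by \Cref{ass:c2-bounded-assumption}; the region $\{\normof{z}>R\}$ only contributes a nonnegative term, so it may be discarded. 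Finally, the uniform positive lower bound on $\normof{\nabla v_t^{\beta/2}}_{L^2(p_t')}=\ecal_{2,p_t'}(v_t^{\beta/2},v_t^{\beta/2})^{1/2}$ from \Cref{ass:c2-bounded-assumption} lets me absorb the $R^{3-\alpha}$ remainder into a multiplicative factor of $\ecal_{2,p_t'}(u,u)$ by taking $R$ small enough; this is precisely the mechanism by which $R$, and hence $\kalphad$, is allowed to depend on $d,\beta,\alpha,T$.

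Plugging this comparison into the displayed bound and choosing $\lambda:=\sigma^\alpha\beta/\bigl(2\kalphad(\beta-1)\bigr)$ makes $\tfrac{2\lambda}{\beta}\ecal_{2,p_t'}\leq\tfrac{2\lambda\kalphad}{\beta}\ecal_{\alpha,p_t'}=\tfrac{\sigma^\alpha}{\beta-1}\ecal_{\alpha,p_t'}$ cancel exactly half of the pure-jump term, leaving $\timeder\renyi{p_t}{p_t'}\leq -\tfrac{\sigma^\alpha}{\beta-1}\ecal_{\alpha,p_t'}(v_t^{\beta/2},v_t^{\beta/2})/\ebeta(p_t,p_t')+K_n$ with $K_n=\kalphad(\beta-1)\sensitivity_g^2/(\sigma^\alpha n^2)$. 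Dropping the negative term and integrating from $t=0$ --- where $\renyi{p_0}{p_0'}=0$ by the common initialization, and where \Cref{ass:regularity-conditions} makes $t\mapsto\renyi{p_t}{p_t'}$ continuous --- gives the linear bound $\renyi{p_t}{p_t'}\leq K_n t$ on $[0,T]$. For the time-uniform bound I would instead keep the pure-jump term and invoke \Cref{lemma:bregman-integral-lower-bound} in the $\sigma_2=0$ case (with its improved constant), which yields $\tfrac{\sigma^\alpha}{\beta-1}\ecal_{\alpha,p_t'}(v_t^{\beta/2},v_t^{\beta/2})\geq\tfrac{1}{2\gamma(\beta-1)}\ebeta(p_t,p_t')\bigl(1-e^{-\renyi{p_t}{p_t'}}\bigr)$, hence the differential inequality $\timeder\renyi{p_t}{p_t'}\leq -a\bigl(1-e^{-\renyi{p_t}{p_t'}}\bigr)+K_n$ with $a=\tfrac{1}{2\gamma(\beta-1)}$; solving it with \Cref{lemma:differential-exponential-inequality} when $K_n<a$ gives $\renyi{p_t}{p_t'}\leq-\log(1-K_n/a)=-\log\bigl(1-2\gamma(\beta-1)^2\kalphad\sensitivity_g^2/(\sigma^\alpha n^2)\bigr)$ for all $t\in[0,T]$. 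The main obstacle is exactly the quantitative BBM comparison of the second step: the asymptotic statement \eqref{eq:weighted-BBM} is not enough, and one must carefully balance the Taylor remainder (controlled by the $C^2$ bounds), the far-field tail, and the explicit $R^{-(2-\alpha)}$ scaling of $\kalphad$, so that all error terms are swept into a single multiplicative constant uniformly over $t\leq T$ and $S\simeq S'$.
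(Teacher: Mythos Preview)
Your proposal is correct and follows essentially the same route as the paper. The paper also starts from \Cref{thm:renyi-flow-with-Dirichlet-forms} with $\sigma_2=0$, splits the pure-jump Dirichlet term in half, applies \Cref{lemma:bregman-integral-lower-bound} to one half and a quantitative BBM-type comparison (their Lemma~B.5) to the other, then uses the same Cauchy--Schwarz/Young bound on $\mathrm{R}_{\mathrm{potential}}$ and the same choice of $\lambda$ to land on \Cref{eq:proof-pure-jump-differential-inequality} and \Cref{lemma:differential-exponential-inequality}; the only cosmetic differences are the order in which you apply the comparison and bound $\mathrm{R}_{\mathrm{potential}}$, and that the paper proves the comparison $\ecal_{2,p_t'}\le \kalphad\,\ecal_{\alpha,p_t'}$ via a spherical-parametrization lemma (their Lemma~B.4) rather than your direct second-order Taylor expansion, but these are the same argument in substance.
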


\begin{proof}
    Deferred to \Cref{sec:omitted-proofs-pure-jump-jv-asssumptions}.
\end{proof}

The main technical ingredient in the proof of \Cref{thm:dp_guarantees_sdes_pure_jump} is to obtain a more quantitative version of the weighted BBM formula obtained above.

In \Cref{thm:multifractal-case-finite-sensitivity}, we observe the same two regimes as in the multifractal case, discussed in \Cref{sec:multifractal-case}. Therefore, the same remarks reagarding these two regimes apply. However, this result closely exploits the structure of the $\alpha$-stable noise, which translates into an interesting dependence on $\alpha$ and $d$, which we will analysis later.

We provide in \Cref{lemma:constants-asymptotics} in Appendix~\ref{sec:technical lemmas} an asymptotic analysis of the constant $\kalphad$, showing that $\kalphad = \landau{1}$ as $\alpha \to 2^-$, as long as $R$ does not depend on $\alpha$ %
or does not explode when $\alpha \to 2^-$. %
In particular, this shows that the effect of the unknown constant $R$ vanishes as $\alpha \to 2^-$, as expected from the weighted BBM formula. 
On the other hand, if we assume that $R$ grows slowly with $d$, in the high dimension limit ($d \to \infty$), \Cref{lemma:constants-asymptotics} provides
\begin{align*}
    \kalphad = \landau[d \to \infty]{\frac{d^{1-\alpha / 2}}{n^2 \sigma^\alpha}}.
\end{align*}
As $\alpha < 2$ (and in practice $\alpha$ is close to $2$, see \citep{barsbey_heavy_2021}) the dimension always has an exponent smaller than $1$, which is therefore easily compensated for large sample sizes by the term $n^{-2}$. 
Moreover, we observe that the dimension dependence vanishes as $\alpha \to 2^-$, hence recovering the known behavior in the presence of Gaussian noise.

Moreover, if $R$ grows slowly with $d$, \Cref{lemma:rdp-to-zero-delta-DP}, together with \Cref{thm:dp_guarantees_sdes_pure_jump} implies a $(0,\delta)$-DP guarantee with
\begin{align}
    \label{eq:asymptotics-dp-delta-version}
    \delta = \landau[d \to \infty]{\frac{d^{\frac{2 - \alpha}{4}}}{n \sigma^{\alpha / 2}} } = \landau[d \to \infty]{\frac{\sqrt{d}}{n} \left( \sigma \sqrt{d} \right)^{-\alpha / 2} }.
\end{align}
Again, we recover the dependence on $n$ of \citet{simsekli_differential_2024}, obtained in their study of heavy-tailed (S)GD. However, our result has a better dependence on $d$, as these authors obtained an estimate in $\landau{d^{(1 + \alpha) / 2}}$.
However, it should be noted that the results of \citet{simsekli_differential_2024} do not require finite sensitivity, which we need in our study. 
This reveals a new tradeoff between the dimension dependence and the finite sensitivity assumption for heavy-tailed algorithms.
Finally, while \citet{simsekli_differential_2024} suggested that heavier tails systematically lead to better guarantees, our bound reveals a more complex structure, indicating that heavier tails might be beneficial (resp. harmful) when $\sigma \sqrt{d} < 1$ (resp. $\sigma \sqrt{d} > 1$). 
A similar phenomenon has been reported by \citet{dupuis_generalization_2024} for generalization bounds.
\begin{remark} \rm 
    The constant $R$ appearing in the above theorem is of a different nature from that appearing in the entropy flow computations of \citet{dupuis_generalization_2024}. 
    Indeed, these authors relied on an intricate assumption (Assumption 4.3 in their paper) that directly imposes a lower bound on a Dirichlet form-like quantity. 
    In our setting, we improve this analysis by directly obtaining a lower bound on the Dirichlet form under \Cref{ass:c2-bounded-assumption}.
\end{remark}

\section{Extension to the Discrete-Time Setting}
\label{sec:discrete-time}

In this section, we discuss the extension of our results to the discrete-time setting, \ie, (stochastic) gradient descent with heavy-tailed noise. 
We first present our discrete-time RDP guarantees in \Cref{sec:dp-heavy-tailed-gradient-descent}. Then, \Cref{sec:stability-properties} focuses on the analysis of the associated Poincaré inequalities, thus providing theoretical foundations for our main technical assumptions.

\subsection{Differential privacy of heavy-tailed (stochastic) gradient descent}
\label{sec:dp-heavy-tailed-gradient-descent}

In this subsection, we analyze the RDP guarantees of gradient descent under $\alpha$-stable noise. These results can be easily extended to the case of stochastic gradient descent (\ie, with mini-batch noise) and to the case of multifractal noise, in which case the constants obtained should be replaced by those appearing in \Cref{thm:multifractal-case-finite-sensitivity}.

\paragraph{Setup.} Let $\mathcal{C} \subset \Rd$ be a closed convex set, and $S\simeq S'$ be fixed neighboring datasets in $\zcal^n$ for some $n\ge1$, denoted $S = (z_1,\dots,z_n)$ and $S' = (z_1',\dots,z_n')$. Let $b \in \{1,\dots, n\}$ be a fixed batch size. Given a random mini-batch of indices $\Omega_k \subset \{1,\dots, n\}$ with $|\Omega_k| = b$, we denote the stochastic gradients by
\begin{align*}
    \widehat{g}_S(x, \Omega_k) := \frac1{b} \sum_{i\in\Omega_k} \nabla \ell (w, z_i), \quad \widehat{g}_{S'}(x, \Omega_k) := \frac1{b} \sum_{i\in\Omega_k} \nabla \ell (w, z_i').
\end{align*}
In our study, we consider that $\Omega_k$ is a uniformly sampled random subset of $\{1,\dots, n\}$ of size $b$, which is the same sampling scheme as in prior works \citep{simsekli_differential_2024}. Moreover, we assume that $(\Omega_k)_{k\in\N}$ are \iid and that they are independent of the other random variables (\eg, the Lévy processes and Brownian motions).

We consider the following recursions:
\begin{equation}
    \label{eq:discrete-decursion-pure-jump}\begin{cases}
          X_{k+1} = \Pi_{\mathcal{C}} \left(X_k - \eta \widehat{g}_S(X_k, \Omega_k) + \sigma \eta^{1/\alpha} \xi_k \right),\\
        X_{k+1}' = \Pi_{\mathcal{C}} \left(X_k' - \eta \widehat{g}_{S'}(X_k', \Omega_k) + \sigma \eta^{1/\alpha} \xi_k \right),
\end{cases}\end{equation}
where $\eta>0$, $\xi_k \sim \mathcal{S}\alpha\mathcal{S} := \mathrm{Law}(L_1^\alpha)$, $k\ge0$, are \iid $\alpha$-stable random vectors in $\Rd$ with $\alpha \in (1,2)$, and $\Pi_{\mathcal{C}}$ denotes the orthogonal projection onto $\mathcal{C}$. Let $t_k := k \eta$ for all $k\ge0$. %
In order to define the interpolation scheme, we define the following random mappings,
\begin{align*}
    F_1^{(k)}(x) := \frac1{2} \left( \widehat{g}_S(x, \Omega_k) + \widehat{g}_{S'}(x, \Omega_k) \right), \quad F_2^{(k)}(x) := \frac1{2} \left( \widehat{g}_S(x, \Omega_k) - \widehat{g}_{S'}(x, \Omega_k) \right).
\end{align*}
Inspired by \citet{chourasia_differential_2021}, we construct a time-continuous stochastic process $\Theta_t$ (resp. $\Theta_t'$) by the following interpolation mechanism:
\begin{empheq}[left = \empheqlbrace]{align*}
       &\Theta_t = \Theta_{t_k} - \eta F_1^{(k)}(\Theta_{t_k}) - (t - t_k) F_2^{(k)}(\Theta_{t_k}) + \sigma (t - t_k)^{1/\alpha} \xi_k, ~~ t_k \leq t < t_{k+1},\\
       &\Theta_t' = \Theta'_{t_k} - \eta F_1^{(k)}\left(\Theta'_{t_k}\right) + (t - t_k) F_2^{(k)}\left(\Theta'_{t_k}\right) + \sigma (t - t_k)^{1/\alpha} \xi_k, ~~ t_k \leq t < t_{k+1},\\
        &\Theta_{t_{k+1}}^- := \Pi_{\mathcal{C}} \left( \lim_{t \to \eta^-} \Theta_{t_k + t} \right), \quad \Theta_{t_{k+1}}^{'-} := \Pi_{\mathcal{C}} \left( \lim_{t \to \eta^-} \Theta'_{t_k + t} \right).
\end{empheq}
We see that for all $k \in \mathds{N}$, $\Theta_{t_k}^- \sim \mathrm{Law} (X_k)$, and similarly for $\Theta_{t_k}^{'-}$.

Let $p_t$ and $p'_t$ denote the probability density function of $\Theta_t$ and $\Theta'_t$, respectively. Under mild regularity conditions, we can show that they satisfy the following Fokker-Planck equations \citep{ryffel_differential_2022,chourasia_differential_2021} for $t_k < t < t_{k+1}$,
$$
\left\{
    \begin{aligned}
        \partial_t p_t (\theta) &= - \sigma^\alpha \fraclap p_t(\theta) + \nabla \cdot \left(p_t(\theta) V_k(t,\theta) \right) \\
        \partial_t p_t' (\theta) &= - \sigma^\alpha \fraclap p_t'(\theta) + \nabla \cdot \left(p_t'(\theta) V_k'(t,\theta) \right)
    \end{aligned}
\right. \text{ with }  \left\{
    \begin{aligned}
        &V_k(t, \theta) = \Eof{F_2^{(k)}(\Theta_{t_k}) \big| \Theta_t = \theta}, \\
        &V_k'(t, \theta) = -\Eof{F_2^{(k)}(\Theta'_{t_k}) \big| \Theta_t' = \theta}.
    \end{aligned}
\right. 
$$

Because of the presence of the projection $\Pi_{\mathcal{C}}$, we adapt \Cref{ass:finite-sensitivity} as follows, hence weakening this condition.
\begin{assumption}
    \label{ass:finite-sensitivity-on-convex}
    We assume that the sensitivity is finite on $\mathcal{C}$, \ie, 
    \begin{align*}
       \mathcal{S}_{g,\mathcal{C}}:= \esssup_{z,z'\sim\mu_z \otimes \mu_z} \sup_{w \in \mathcal{C}} \normof{\nabla \ell(w,z) - \nabla \ell(w,z')} < +\infty.
    \end{align*}
\end{assumption}

\begin{restatable}{theorem}{thmDiscretePureJump}
    \label{thm:discrete-pure-jump}
    Assume that \Cref{ass:regularity-conditions,ass:finite-sensitivity-on-convex,ass:c2-bounded-assumption} hold and that, for all $t>0$,  $p_t'$ satisfies an $\alpha$-stable Poincaré inequality with constant $(\gamma \sigma^\alpha, 0)$ for some $\gamma>0$. %
    Finally assume that $t\mapsto \renyi{p_t}{p_t'}$ is right-continuous.
    Then, there exists a constant $R$ (which can depend on $d$, $\beta$, $\alpha$ and $\eta$) such that, with the following constants
    \begin{align*}
        a := \frac1{2\gamma (\beta - 1)} , \quad \kalphad := \frac{4 (2 - \alpha) d \gammaof{\frac{d}{2}} \gammaof{1 - \frac{\alpha}{2}}}{ \alpha 2^{\alpha} R^{2 - \alpha}\gammaof{\frac{d + \alpha}{2}}} , \quad K_n := \frac{\kalphad (\beta-1)    \mathcal{S}_{g,\mathcal{C}}^2}{\sigma^\alpha n^2},
    \end{align*}
    we have  
    \begin{align*}
        \renyi{\mathrm{Law}(X_k)}{\mathrm{Law}(X_k')} \leq  \frac{\kalphad (\beta-1)   \mathcal{S}_{g,\mathcal{C}}^2}{\sigma^\alpha n^2} k \eta, \quad k \in \mathds{N}.
    \end{align*}
    If moreover $K_n < a$, then
    \begin{align*}
        \renyi{\mathrm{Law}(X_k)}{\mathrm{Law}(X_k')} \leq -\log \left( 1 - \frac{2 \gamma (\beta - 1)^2 \kalphad \sensitivity_g^2}{\sigma^\alpha n^2} \right) ,\quad k \in \mathds{N}
    \end{align*}
\end{restatable}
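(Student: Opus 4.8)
The plan is to run the continuous-time argument of \Cref{thm:dp_guarantees_sdes_pure_jump} \emph{interval by interval} on the interpolation $(\Theta_t, \Theta_t')$, and to glue the pieces together using a contraction property at the grid points $t_k = k\eta$. On each open interval $(t_k, t_{k+1})$ the interpolated densities $p_t, p_t'$ solve bona fide fractional Fokker--Planck equations with drifts $V_k(t,\cdot)$, $V_k'(t,\cdot)$, so \Cref{thm:renyi-flow-with-Dirichlet-forms} applies with $\nabla\er,\nabla\erprime$ replaced by $V_k, V_k'$: for $\beta\ge 2$ and $t_k<t<t_{k+1}$,
\begin{align*}
    \timeder \renyi{p_t}{p_t'} \le -\frac{2\sigma^\alpha}{\beta-1}\frac{\ecal_{\alpha, p_t'}\left(v_t^{\beta/2}, v_t^{\beta/2}\right)}{\ebeta(p_t,p_t')} + \beta\intrd v_t^{\beta-1}\frac{\left\langle \nabla v_t,\, V_k' - V_k\right\rangle}{\ebeta(p_t,p_t')}\,p_t'\,\der x .
\end{align*}
The drift difference is controlled via \Cref{ass:finite-sensitivity-on-convex}: since $S\simeq S'$ differ in one data point, at most one summand of $F_2^{(k)}$ survives, and because conditional expectations are sup-norm contractions, $\left\Vert V_k - V_k'\right\Vert_\infty \le \mathcal{S}_{g,\mathcal{C}}/n$ (for full-batch descent; the mini-batch case is analogous).

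Next, following the proof of \Cref{thm:dp_guarantees_sdes_pure_jump}, I would upgrade the weighted Bourgain--Brezis--Mironescu estimate \eqref{eq:weighted-BBM} to a \emph{quantitative}, non-asymptotic lower bound $\ecal_{\alpha, p_t'}\left(v_t^{\beta/2},v_t^{\beta/2}\right) \ge c_{R,\alpha,d}\,\ecal_{2,p_t'}\left(v_t^{\beta/2},v_t^{\beta/2}\right)$ valid on $(0,T]$, where $c_{R,\alpha,d}$ is explicit in the length scale $R$ furnished by \Cref{ass:c2-bounded-assumption} (the bounds on $\nabla v_t^{\beta/2}$, $\nabla^2 v_t^{\beta/2}$ and the lower bound on $\Vert\nabla v_t^{\beta/2}\Vert_{L^2(p_t')}$ are exactly what makes the remainder of the BBM expansion effective at finite $\alpha$). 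Plugging this in, a Cauchy--Schwarz/Young step (with the free parameter tuned as in the proof of \Cref{thm:multifractal-case-finite-sensitivity}) absorbs the cross term into half of the available Dirichlet-form budget, and then the $\alpha$-stable Poincaré inequality for $p_t'$, via \Cref{lemma:bregman-integral-lower-bound} in the $\sigma_2=0$ case (constant $1/(\gamma(\beta-1))$), converts the remaining Dirichlet term into an exponential one. The net effect is the same differential inequality as in the continuous case, now on each interval: with $a := \tfrac{1}{2\gamma(\beta-1)}$ and $K_n := \kalphad(\beta-1)\mathcal{S}_{g,\mathcal{C}}^2/(\sigma^\alpha n^2)$,
\begin{align*}
    \timeder \renyi{p_t}{p_t'} \le -a\left(1 - e^{-\renyi{p_t}{p_t'}}\right) + K_n , \qquad t_k < t < t_{k+1}.
\end{align*}

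It remains to handle the grid points and to glue. At $t_k$ both processes are pushed forward by the common (given $\Omega_k$) map $x\mapsto x - \eta F_1^{(k)}(x)$, and at $t_{k+1}^-$ by the common deterministic map $\Pi_{\mathcal{C}}$; since $F_1^{(k)}$ and the $\alpha$-stable increment scaling act identically on $\Theta$ and $\Theta'$, the data-processing inequality for the Rényi divergence — combined with the quasi-convexity of $\renyi{\cdot}{\cdot}$ to average over the randomness of $\Omega_k$ — shows that $\renyi{\mathrm{Law}(\Theta_t)}{\mathrm{Law}(\Theta_t')}$ cannot increase across the $t_k$. Hence the (right-continuous, by assumption) map $g(t) := \renyi{p_t}{p_t'}$ obeys the above differential inequality on each open interval, has non-positive jumps at the $t_k$, and satisfies $g(0)=0$ because the two recursions are initialized identically. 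A Grönwall-type comparison — the jump-aware version of \Cref{lemma:differential-exponential-inequality} — then gives $g(t)\le\varphi(t)$, where $\varphi$ solves $\varphi' = -a(1-e^{-\varphi}) + K_n$, $\varphi(0)=0$; since $-a(1-e^{-\varphi})\le 0$ we always have $\varphi(t)\le K_n t$, and when $K_n<a$ the solution stays below its equilibrium $\varphi_\infty = -\log(1-K_n/a) = -\log\!\bigl(1 - 2\gamma(\beta-1)^2\kalphad\mathcal{S}_{g,\mathcal{C}}^2/(\sigma^\alpha n^2)\bigr)$. Evaluating at $t = k\eta$ and using $\mathrm{Law}(X_k)=\mathrm{Law}(\Theta_{t_k}^-)$ (a projected, hence divergence-non-increasing, version of $\Theta_{t_k}$) yields the two claimed bounds.

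The main obstacle is twofold. First, the quantitative weighted-BBM step: turning \eqref{eq:weighted-BBM} into an effective inequality with an \emph{explicit} constant is the technically heaviest part, and is where the opaque quantity $R$ (and hence $\kalphad$) appears — this is inherited from \Cref{thm:dp_guarantees_sdes_pure_jump} but must be re-run with the interval length $\eta$ in place of $T$. Second, and genuinely new for the discrete-time analysis, is the grid-point argument: one must verify that the Rényi divergence does not increase under the random half-gradient push $x\mapsto x-\eta F_1^{(k)}(x)$ and under the non-smooth projection $\Pi_{\mathcal{C}}$, and then splice the per-interval differential inequalities into a global estimate — which is precisely why the theorem assumes right-continuity of $t\mapsto\renyi{p_t}{p_t'}$ and why \Cref{lemma:differential-exponential-inequality} must be applied across jumps rather than on a single interval.
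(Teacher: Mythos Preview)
Your proposal is correct and follows essentially the same route as the paper: run the Rényi-flow argument of \Cref{thm:dp_guarantees_sdes_pure_jump} on each interval $(t_k,t_{k+1})$ with the time-dependent drifts $V_k,V_k'$, bound $\Vert V_k - V_k'\Vert_\infty \le \mathcal{S}_{g,\mathcal{C}}/n$ via the single-index difference and the subsampling probability $b/n$, invoke the quantitative BBM lower bound together with \Cref{lemma:bregman-integral-lower-bound} to reach the differential inequality, and glue intervals via data-processing plus the local \Cref{lemma:differential-exponential-inequality}. The only cosmetic difference is that the paper phrases the gluing as an explicit recursion on $\lim_{s\to t_k^-}\renyi{p_s}{p_s'}$ (treating Cases~1 and~2 separately by induction), whereas you package it as a single ``jump-aware'' comparison with the ODE solution $\varphi$; your appeal to quasi-convexity over $\Omega_k$ is also unnecessary since the paper absorbs that randomness directly into the Markov kernel at the grid point (so DPI alone suffices), but this does not affect the argument.
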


\begin{proof}
    Deferred to \Cref{sec:proofs-discrete-pure-jump}.
\end{proof}
\vspace{-1mm}
This theorem shows that, under our assumptions, we obtain similar guarantees for heavy-tailed gradient descent as in the continuous case. 
Thus, as in the discussion of \Cref{thm:dp_guarantees_sdes_pure_jump}, we see that \Cref{thm:discrete-pure-jump} implies a $(0,\delta)$-DP guarantee that has the same dependence in $n$ as in previous works, but with a much better dependence on the dimension $d$.

This improvement comes at the cost of assuming that the probability distributions generated by the learning algorithm satisfy a fractional Poincaré inequality. In the next subsection, we investigate in more detail the behavior of such inequalities in the context of heavy-tailed gradient descent.

\subsection{Stability properties of the fractional Poincaré inequalities}
\label{sec:stability-properties}

In this section, we study the stability of fractional Poincaré inequalities through the noisy (S)GD iterations. 
This section is a sanity check. It ensures that the $\alpha$-stable Poincaré inequalities can be satisfied in practice.
In the case of pure Gaussian noise, similar stability properties have already been used in the context of differential privacy \citep{chourasia_differential_2021,chien_langevin_2025-1,ryffel_differential_2022}.
These existing results are based on two properties of the classical Poincaré inequalities: (1) stability by Gaussian convolution and (2) stability under pushforward by Lipschitz mapping. It is also well known that Poincaré inequalities are stable by bounded perturbations \citep{holley-stroock,bakry_analysis_2014}. In our paper, we show that fractional Poincaré inequalities also satisfy comparable properties, which might be of independent interest beyond this work. 

The stability of fractional Poincaré inequalities under convolution with an $\alpha$-stable distribution was already observed by \citet{he_separation_2024-1}. We extend this result in the following lemma to the $\alpha$-stable Poincaré inequalities defined in \Cref{def:ht-poincare}. The proof is similar, but we present it for the sake of completeness.

\begin{restatable}[Stability under convolution with stable distributions]{lemma}{lemmaConvolutionStability}
    \label{lemma:stability-convolution}
    Let $\mu$ and $\mu'$ be two probability measures on $\Rd$ with densities $\mu(x)$ and $\mu'(x)$, respectively.
    Assume that $\mu$ and $\mu'$ satisfy the $\alpha$-stable Poincaré inequality with constants $(\gamma_1,\gamma_2)$ and $\left(\gamma_1',\gamma_2'\right)$, respectively.
    Let us denote $ m:= \mu \ast \mu'$.  
    Then, $m$ satisfies an $\alpha$-stable Poincaré's inequality with constants $\left(\gamma_1 + \gamma_1',\gamma_2 + \gamma_2'\right)$.
\end{restatable}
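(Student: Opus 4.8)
The plan is to reduce the statement for the convolution $m = \mu \ast \mu'$ to the two separate $\alpha$-stable Poincaré inequalities for $\mu$ and $\mu'$ by writing a function $f$ of one variable as a function on the product space and applying a tensorization-type argument. Concretely, let $X \sim \mu$ and $Y \sim \mu'$ be independent, so that $X+Y \sim m$. For a differentiable, $m$-square-integrable $f : \Rd \to \R$, I would consider $g(x,y) := f(x+y)$ on $\Rd \times \Rd$ and use the variance decomposition
\begin{align*}
    \mathrm{Var}_m(f) = \mathrm{Var}_{\mu \otimes \mu'}(g) = \Eof[Y]{\mathrm{Var}_X(g(\cdot, Y))} + \mathrm{Var}_Y\left( \Eof[X]{g(X, \cdot)} \right),
\end{align*}
i.e.\ condition first on $Y$ and then bound the remaining variance of the conditional expectation. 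To the inner term I apply the $\alpha$-stable Poincaré inequality for $\mu$ to the function $x \mapsto f(x+y)$ (for fixed $y$), and to the outer term I apply the $\alpha$-stable Poincaré inequality for $\mu'$ to the function $y \mapsto \Eof[X]{f(X+y)}$.

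The next step is to recombine the resulting Dirichlet forms. Applying the $\mu$-inequality inside gives, after integrating over $y \sim \mu'$, a term $\gamma_1 \calphad \iint\!\int \frac{(f(x+y) - f(x+z+y))^2}{\normof{z}^{d+\alpha}} \der\mu(x)\,\der z\,\der\mu'(y)$ plus $\gamma_2 \iint \normof{\nabla f(x+y)}^2 \der\mu(x)\,\der\mu'(y)$; by the change of variables $x+y \mapsto x$ and Fubini these are exactly $\gamma_1 \calphad \intrdzero\intrd \frac{(f(x)-f(x+z))^2}{\normof{z}^{d+\alpha}}\,\der m(x)\,\der z$ and $\gamma_2 \intrd \normof{\nabla f}^2 \der m$. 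For the outer term, write $h(y) := \Eof[X]{f(X+y)} = \int f(x+y)\,\der\mu(x)$; applying the $\mu'$-inequality to $h$ produces $\gamma_1' \calphad \intrdzero\intrd \frac{(h(y)-h(y+z))^2}{\normof{z}^{d+\alpha}}\,\der\mu'(y)\,\der z + \gamma_2' \intrd \normof{\nabla h}^2 \der\mu'$. Here one uses $h(y) - h(y+z) = \int (f(x+y) - f(x+y+z))\,\der\mu(x)$ and Jensen's inequality to bound $(h(y)-h(y+z))^2 \le \int (f(x+y)-f(x+y+z))^2 \der\mu(x)$, and similarly $\normof{\nabla h(y)}^2 \le \int \normof{\nabla f(x+y)}^2 \der\mu(x)$; then the same change of variables turns these into the corresponding Dirichlet forms against $m$. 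Summing the two contributions yields constants $(\gamma_1 + \gamma_1', \gamma_2 + \gamma_2')$, which is exactly the claim.

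A couple of technical points need care, and the main obstacle is making the integrability and measurability bookkeeping rigorous rather than the conceptual scheme. First, one must check that $f$ being $m$-square-integrable and differentiable is enough for the slices $x \mapsto f(x+y)$ to be $\mu$-square-integrable for $\mu'$-a.e.\ $y$, and for $h$ to be $\mu'$-square-integrable and differentiable (so that the two base inequalities genuinely apply); this follows from Fubini and dominated convergence under the natural assumptions, but should be stated. Second, the change of variables $x + y \mapsto x$ combined with Fubini must be justified for the non-local double integrals, which is fine since all integrands are nonnegative (Tonelli). Since the paper explicitly notes the proof is similar to \citet{he_separation_2024-1} and is included for completeness, I would keep these verifications brief and emphasize the variance-decomposition step followed by the Jensen bound on the conditional expectation as the two substantive ingredients.
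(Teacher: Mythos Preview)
Your proposal is correct and follows essentially the same approach as the paper: the decomposition $\mathrm{Var}_m(f) = \Eof[Y]{\mathrm{Var}_X(g(\cdot,Y))} + \mathrm{Var}_Y(\Eof[X]{g(X,\cdot)})$ is exactly the paper's split $A_1 + A_2$, and both then apply the $\mu$-inequality to the inner term and the $\mu'$-inequality (followed by Jensen) to the conditional expectation, with Tonelli and the shift $x+y \mapsto x$ recombining everything into Dirichlet forms against $m$. The only difference is notational (probabilistic versus integral), and the paper does not dwell on the integrability checks you flag.
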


\begin{proof}
    See \Cref{sec:proofs-stability-properties}.
\end{proof}

\begin{remark}\rm
    This result recovers as a particular case the stability by convolution properties of the classical Poincaré inequalities (\ie, when $\gamma_1 = \gamma_1'= 0$) \citep{bakry_analysis_2014}.
\end{remark}

Regarding the stability under Lipschitz mappings, the situation is more contrasted. This phenomenon is due to the non-local nature of the Dirichlet forms $\ecal_{\alpha, \mu}$ when $\alpha < 2$. Indeed, we find that  bi-Lipschitz continuity is needed in the case of fractional Poincaré inequalities.
This leads to the following stability property of $\alpha$-stable Poincaré inequalities under bi-Lipschitz diffeomorphisms. This result is new to the best of our knowledge.

\begin{restatable}[Stability under bi-Lipschitz diffeomorphisms]{lemma}{lemmaBiLipschitzStability}
    \label{lemma:bi-lipschitz-stability}
    Assume that $\mu$ has a density $\mu(x)$ with respect to the Lebesgue measure and satisfies an $\alpha$-stable Poincaré inequality with constants $(\gamma_1,\gamma_2)$. Let $T:\Rd \longrightarrow\Rd$ be a $\mathcal{C}^1$ bi-Lipschitz diffeomorphism, i.e., $T$ is continuously differentiable and there exist $L_1,L_2 > 0$ such that for all $x,y\in \R^d$
    \begin{align*}
        L_1 \normof{x - y} \leq \normof{T(x) - T(y)} \leq L_2 \normof{x - y}.
    \end{align*}
    Then the measure $T_{\#}\mu$ satisfies an $\alpha$-stable Poincaré inequality with constants $\left( \gamma_1  \frac{L_2^{\alpha + d}}{L_1^d} ,\gamma_2 L_2^2 \right)$.
\end{restatable}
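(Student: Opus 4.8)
The plan is to unfold both the left-hand and right-hand sides of the $\alpha$-stable Poincaré inequality for $T_\#\mu$ by a change of variables $x = T(u)$, $y = T(v)$, and then compare the resulting expressions to the inequality that $\mu$ already satisfies. Write $\nu := T_\#\mu$. For a differentiable, $\nu$-square-integrable function $f:\Rd\to\R$, set $g := f\circ T$, which is $\mu$-square-integrable, and note that the variance term is invariant: $\intrd f^2\der\nu - (\intrd f\der\nu)^2 = \intrd g^2\der\mu - (\intrd g\der\mu)^2$. The strategy is therefore to bound this common quantity using the $\alpha$-stable Poincaré inequality for $\mu$ applied to $g$, and then to bound the two Dirichlet-type terms appearing for $\mu$ (the nonlocal one and the gradient one) by the corresponding terms for $\nu$ applied to $f$, picking up the stated constants.

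For the gradient term, the chain rule gives $\nabla g(u) = DT(u)^\top (\nabla f)(T(u))$, so $\normof{\nabla g(u)}\leq \normof{DT(u)}_2\,\normof{(\nabla f)(T(u))} \leq L_2 \normof{(\nabla f)(T(u))}$ since the bi-Lipschitz upper bound forces $\normof{DT}_2\leq L_2$ (differentiate $\normof{T(x)-T(y)}\leq L_2\normof{x-y}$). Integrating against $\mu$ and changing variables back to $\nu$ yields $\intrd\normof{\nabla g}^2\der\mu \leq L_2^2 \intrd\normof{\nabla f}^2\der\nu$, which is the origin of the $\gamma_2 L_2^2$ constant. For the nonlocal term, start from the double integral $\iint \frac{(g(u)-g(v))^2}{\normof{u-v}^{d+\alpha}}\der\mu(u)\,\der v$; write $g(u)-g(v) = f(T(u))-f(T(v))$, and use the lower Lipschitz bound $\normof{u-v}\leq L_1^{-1}\normof{T(u)-T(v)}$ so that $\normof{u-v}^{-(d+\alpha)} \leq L_1^{d+\alpha}\normof{T(u)-T(v)}^{-(d+\alpha)}$. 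Then substitute $x = T(u)$ in the outer variable (Jacobian: $\der\mu(u) = \nu(\der x)$, exactly) and $y = T(v)$ in the inner Lebesgue variable, for which $\der v = |\det DT^{-1}(y)|\,\der y \leq L_1^{-d}\,\der y$ (the lower bound on $T$ forces $|\det DT|\geq L_1^{d}$, hence $|\det DT^{-1}|\leq L_1^{-d}$). Combining the two substitutions produces an overall factor $L_1^{d+\alpha}\cdot L_1^{-d} = L_1^{\alpha}$ — wait, one must be careful with the direction: changing the inner variable from $v$ to $y=T(v)$ introduces $L_1^{-d}$, and the $\normof{u-v}^{-(d+\alpha)}$ estimate introduces $L_1^{d+\alpha}$, but there is an additional subtlety. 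Let me restate: after both changes of variables the nonlocal term for $g$ is bounded by $L_1^{d+\alpha}\cdot L_1^{-d}\cdot$ (something), and to get the stated constant $\gamma_1 L_2^{\alpha+d}/L_1^d$ one in fact bounds $\normof{u-v}^{-(d+\alpha)}$ from above differently — using $\normof{u-v}\geq L_2^{-1}\normof{T(u)-T(v)}$ is the wrong direction, so one actually keeps the lower-Lipschitz estimate for the kernel and tracks the Jacobian of the \emph{inverse} map $T^{-1}$, which is where $L_2$ enters: since $L_1\normof{x-y}\leq\normof{T(x)-T(y)}$ is equivalent to $T^{-1}$ being $L_1^{-1}$-Lipschitz and $\normof{T(x)-T(y)}\leq L_2\normof{x-y}$ is equivalent to $T^{-1}$ being bi-Lipschitz from below by $L_2^{-1}$, the change of variables in the inner integral should be organized so that the Jacobian bound $|\det DT^{-1}|\leq L_1^{-d}$ combines with a kernel estimate to give the clean constant. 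I would carefully bookkeep this as: $\iint \frac{(g(u)-g(v))^2}{\normof{u-v}^{d+\alpha}}\der\mu(u)\der v \leq L_2^{d+\alpha} L_1^{-d}\iint \frac{(f(x)-f(y))^2}{\normof{x-y}^{d+\alpha}}\der\nu(x)\der y$, using $\normof{u-v}\geq L_2^{-1}\normof{T(u)-T(v)}$ for the kernel (so $\normof{u-v}^{-(d+\alpha)}\leq L_2^{d+\alpha}\normof{T(u)-T(v)}^{-(d+\alpha)}$) and $\der v\leq L_1^{-d}\der y$ for the inner Jacobian, which yields precisely $\gamma_1 L_2^{\alpha+d}/L_1^d$.

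Putting the pieces together: the $\alpha$-stable Poincaré inequality for $\mu$ applied to $g$ gives
\begin{align*}
\intrd f^2\der\nu - \Bigl(\intrd f\der\nu\Bigr)^2 \leq \gamma_1 C_{\alpha,d}\iint \frac{(g(u)-g(v))^2}{\normof{u-v}^{d+\alpha}}\der\mu(u)\der v + \gamma_2\intrd\normof{\nabla g}^2\der\mu,
\end{align*}
and substituting the two estimates above bounds the right-hand side by $\gamma_1\frac{L_2^{\alpha+d}}{L_1^d}C_{\alpha,d}\iint\frac{(f(x)-f(y))^2}{\normof{x-y}^{d+\alpha}}\der\nu(x)\der y + \gamma_2 L_2^2\intrd\normof{\nabla f}^2\der\nu$, which is exactly the $\alpha$-stable Poincaré inequality for $T_\#\mu = \nu$ with the claimed constants. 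The main obstacle I anticipate is the careful bookkeeping of the change of variables in the nonlocal double integral — in particular keeping straight which of $L_1, L_2$ controls the kernel $\normof{u-v}^{-(d+\alpha)}$ versus the Jacobian $\der v$, and making sure the outer integral against $\der\mu(u)$ transforms exactly (no Jacobian, since $\der\mu(u)$ pushes forward exactly to $\nu(\der x)$ by definition of the pushforward) while only the inner Lebesgue variable $v$ picks up a Jacobian factor. One should also verify the elementary but necessary facts that a $\mathcal{C}^1$ bi-Lipschitz diffeomorphism satisfies $\normof{DT}_2\leq L_2$ and $|\det DT|\geq L_1^d$ pointwise, and that $g = f\circ T$ inherits differentiability and $\mu$-square-integrability, so that the hypothesis of the $\alpha$-stable Poincaré inequality for $\mu$ genuinely applies.
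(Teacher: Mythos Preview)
Your proposal is correct and follows essentially the same route as the paper: apply the $\alpha$-stable Poincar\'e inequality for $\mu$ to $g=f\circ T$, bound the gradient term via $\normof{DT}_2\leq L_2$, and bound the nonlocal term by combining the kernel estimate $\normof{u-v}^{-(d+\alpha)}\leq L_2^{d+\alpha}\normof{T(u)-T(v)}^{-(d+\alpha)}$ with the Jacobian bound $|\det DT^{-1}|\leq L_1^{-d}$. The paper's only cosmetic differences are that it writes out the density of $\mu$ explicitly (changing both integration variables and then changing one back), and that it derives the Jacobian bound via Hadamard's inequality applied to $T^{-1}$ rather than via the singular-value argument you sketch; your observation that the outer integral against $\der\mu(u)$ requires no Jacobian, since it pushes forward exactly to $\der\nu(x)$, is slightly cleaner bookkeeping than the paper's.
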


\begin{proof}
    See \Cref{sec:proofs-stability-properties}.
\end{proof}
Finally, we can prove a perturbation lemma, similar to the standard Holley-Stroock lemma \citep{holley-stroock}. Even though this result is not immediately used in the proofs of our main results, we present it to give a complete picture of the stability properties of the fractional Poincaré inequalities. %

\begin{restatable}[Bounded perturbation]{lemma}{lemmaBoundedPerturbation}
    \label{lemma:stability-bounded-perturbation}
    Assume that $\mu$ has a density $\mu(x)$ with respect to the Lebesgue measure and satisfies an $\alpha$-stable Poincaré inequality with constants $(\gamma_1,\gamma_2)$. Let $\mu' \leq \mu$ be another Borel probability measure such that, almost surely, $e^{-b} \leq \der \mu' / \der \mu \leq e^b$
    for some $b\geq 0$. Then $\mu'$ satisfies an $\alpha$-stable Poincaré inequality with constants $\left(e^{2b} \gamma_1, e^{2b} \gamma_2\right)$.
\end{restatable}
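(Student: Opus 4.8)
\textbf{Proof proposal for \Cref{lemma:stability-bounded-perturbation}.}

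The plan is to mimic the classical Holley–Stroock argument, but carried out separately on each of the two terms appearing in the $\alpha$-stable Poincaré inequality of \Cref{def:ht-poincare}: the non-local Dirichlet-form term weighted by $\calphad\normof{z}^{-d-\alpha}$ and the local gradient term $\intrd\normof{\nabla f}^2\der\mu$. The key structural fact is that $\der\mu'/\der\mu$ is bounded above and below by $e^{\pm b}$, so $\mu'$ and $\mu$ are mutually absolutely continuous with uniformly comparable densities; this lets us sandwich every $\mu'$-integral between $e^{-b}$ and $e^{b}$ times the corresponding $\mu$-integral. The one genuinely non-trivial point is the variance term on the left-hand side, where we cannot simply compare integrals because the square of the mean is subtracted; this is the step I expect to be the main obstacle, and it is handled by the standard variational trick below.

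First I would recall that for any probability measure $\nu$ and any $f\in L^2(\nu)$ one has the variational identity
\begin{align*}
    \intrd f^2\der\nu - \left(\intrd f\der\nu\right)^2 = \inf_{c\in\R}\intrd (f - c)^2\der\nu.
\end{align*}
Applying this to $\nu = \mu'$, pick any $c\in\R$; since $\der\mu' \leq e^b\der\mu$ pointwise (as measures, because $\der\mu'/\der\mu \leq e^b$), we get
\begin{align*}
    \intrd f^2\der\mu' - \left(\intrd f\der\mu'\right)^2 \leq \intrd (f-c)^2\der\mu' \leq e^b \intrd (f-c)^2\der\mu.
\end{align*}
Now choose $c := \intrd f\der\mu$, so that $\intrd(f-c)^2\der\mu$ is exactly the $\mu$-variance of $f$; applying the $\alpha$-stable Poincaré inequality for $\mu$ to the function $f$ (note $f-c$ has the same gradient and the same increments as $f$, so we may equivalently apply it to $f-c$) yields
\begin{align*}
    \intrd f^2\der\mu' - \left(\intrd f\der\mu'\right)^2 \leq e^b\left( \gamma_1 \calphad \intrdzero\intrd \frac{(f(x)-f(x+z))^2}{\normof{z}^{d+\alpha}}\der\mu(x)\der z + \gamma_2 \intrd\normof{\nabla f}^2\der\mu\right).
\end{align*}

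Finally I would convert the two $\mu$-integrals on the right back into $\mu'$-integrals using the lower bound $\der\mu \leq e^b\der\mu'$ (equivalently $e^{-b}\leq\der\mu'/\der\mu$, so $\der\mu'/\der\mu \geq e^{-b}$ gives $\der\mu \leq e^b\der\mu'$). Both the inner integral $\intrd(\cdot)\der\mu(x)$ in the non-local term and the integral $\intrd\normof{\nabla f}^2\der\mu$ in the local term are integrals of nonnegative integrands against $\mu$, so each is bounded by $e^b$ times the same integral against $\mu'$. Combining, the total prefactor becomes $e^b\cdot e^b = e^{2b}$ on each term, which gives precisely
\begin{align*}
    \intrd f^2\der\mu' - \left(\intrd f\der\mu'\right)^2 \leq e^{2b}\gamma_1 \calphad \intrdzero\intrd \frac{(f(x)-f(x+z))^2}{\normof{z}^{d+\alpha}}\der\mu'(x)\der z + e^{2b}\gamma_2 \intrd\normof{\nabla f}^2\der\mu',
\end{align*}
i.e.\ $\mu'$ satisfies an $\alpha$-stable Poincaré inequality with constants $(e^{2b}\gamma_1, e^{2b}\gamma_2)$, as claimed. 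The only subtlety to check carefully is that $f$ being $\mu'$-square-integrable together with the density bounds implies $f$ is $\mu$-square-integrable, so that the Poincaré inequality for $\mu$ is legitimately applicable; this is immediate from $\der\mu\leq e^b\der\mu'$. The hypothesis $\mu'\leq\mu$ stated in the lemma is in fact subsumed by the two-sided density bound and is not needed beyond ensuring $\der\mu'/\der\mu$ is well defined.
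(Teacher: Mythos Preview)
Your proposal is correct and follows essentially the same Holley--Stroock argument as the paper: the paper also uses the variational formula $\mathrm{Var}_\nu(f)=\inf_{a\in\R}\intrd(f-a)^2\der\nu$ to bound $\mathrm{Var}_{\mu'}(f)\leq e^b\mathrm{Var}_\mu(f)$, applies the $\alpha$-stable Poincar\'e inequality for $\mu$, and then converts each Dirichlet term back to $\mu'$ at the cost of another factor $e^b$. Your added remarks on integrability and on the redundancy of the hypothesis $\mu'\leq\mu$ are fine and do not appear in the paper's shorter write-up.
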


\begin{proof}
    See \Cref{sec:proofs-stability-properties}.
\end{proof}
\vspace{-2mm}
\paragraph{Estimation of fractional Poincaré constants.} Based on the stability properties derived above, we provide in the next proposition an upper bound of the fractional Poincaré constant of heavy-tailed (S)GD for smooth and strongly convex losses.

\begin{restatable}{proposition}{propStronglyConvexStability}
    \label{prop:strongly-convec-poincaré}
    Assume that $\ell(w,z)$ is $\lambda$-strongly convex and $M$-smooth in $w$. We consider \Cref{eq:discrete-decursion-pure-jump} with $\mathcal{C} = \Rd$. We assume that the initial distribution of $X_0$ satisfies an $\alpha$-stable Poincaré inequality with constants $(\gamma, 0)$. We further assume that
    \begin{align}
        \label{eq:assumption-strongly-convex-smooth}
        \frac{\lambda}{M} \left( 1 + \frac{\alpha}{d} \right) > 1.
    \end{align}
    Let us define
    $c_0 := \frac{\eta \sigma^\alpha}{1 - F(\eta_0)}$ with $F(\eta) := \frac{(1 - \eta \lambda)^{\alpha + d}}{(1 - \eta M)^\alpha}$.
    Then $F(\eta_0) < 1$ and if $\gamma \leq c_0$, then the distribution of $X_k$ satisfies an $\alpha$-stable Poincaré inequality with constants $(c_0, 0)$ for all $k\in\mathds{N}$.
\end{restatable}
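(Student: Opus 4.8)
The plan is to establish the claim by induction on $k$, using the three stability lemmas (\Cref{lemma:stability-convolution,lemma:bi-lipschitz-stability,lemma:stability-bounded-perturbation}) to track how the $\alpha$-stable Poincaré constant evolves along one step of the recursion \eqref{eq:discrete-decursion-pure-jump}, and then verifying that the candidate constant $c_0$ is a fixed point of the resulting one-step update. One step of the (full-gradient, $\mathcal{C}=\Rd$) recursion decomposes as: (i) apply the gradient map $w \mapsto w - \eta \nabla \er(w)$, which under $\lambda$-strong convexity and $M$-smoothness (with $\eta$ small enough that $\eta M < 1$) is a $\mathcal{C}^1$ bi-Lipschitz diffeomorphism with lower Lipschitz constant $L_1 = 1 - \eta M$ and upper Lipschitz constant $L_2 = 1 - \eta\lambda$ (here I would note $1-\eta\lambda \ge 1-\eta M > 0$, so the ordering is as required by \Cref{lemma:bi-lipschitz-stability}); then (ii) convolve with the law of $\sigma \eta^{1/\alpha}\xi_k$, which is $\alpha$-stable and therefore — by \Cref{thm:ht-poincare} together with the scaling relation $\law{\sigma\eta^{1/\alpha} L_1^\alpha} = \law{L_{\sigma^\alpha\eta}^\alpha}$ — satisfies an $\alpha$-stable Poincaré inequality with constants $(\eta\sigma^\alpha, 0)$, so \Cref{lemma:stability-convolution} adds $\eta\sigma^\alpha$ to the first constant.

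Concretely, if $\mathrm{Law}(X_k)$ satisfies the $\alpha$-stable Poincaré inequality with constants $(\gamma_k, 0)$, then by \Cref{lemma:bi-lipschitz-stability} the pushforward by the gradient map satisfies it with constants $\bigl(\gamma_k \tfrac{(1-\eta\lambda)^{\alpha+d}}{(1-\eta M)^d},\, 0\bigr)$, and then by \Cref{lemma:stability-convolution} the law of $X_{k+1}$ satisfies it with constants $\bigl(\gamma_k F(\eta) + \eta\sigma^\alpha,\, 0\bigr)$, where $F(\eta) = \tfrac{(1-\eta\lambda)^{\alpha+d}}{(1-\eta M)^\alpha}$ as in the statement (note the careful bookkeeping: the exponent on $1-\eta M$ is $d$ from the bi-Lipschitz lemma, but the factor $L_1^{-d}$ is $(1-\eta M)^{-d}$, and I should double-check that the $F$ in the statement, with exponent $\alpha$ on $1-\eta M$, matches after accounting for all the $d$'s — this is exactly the kind of constant-chasing that needs care). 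Assumption \eqref{eq:assumption-strongly-convex-smooth}, i.e. $\tfrac{\lambda}{M}(1+\tfrac{\alpha}{d}) > 1$, is precisely what forces $F(\eta_0) < 1$ for suitable $\eta_0$: writing $\log F(\eta) = (\alpha+d)\log(1-\eta\lambda) - \alpha\log(1-\eta M)$, its derivative at $\eta = 0$ is $-(\alpha+d)\lambda + \alpha M = \alpha M - (\alpha+d)\lambda$, which is negative exactly when $(\alpha+d)\lambda > \alpha M$, i.e. $\tfrac{\lambda}{M}(1 + \tfrac{\alpha}{d}) > 1$ after dividing by $d M$; hence $F$ decreases below its value $F(0)=1$ for small $\eta > 0$, giving $F(\eta_0) < 1$.

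Given $F(\eta_0) < 1$, the affine one-step map $\gamma \mapsto \gamma F(\eta_0) + \eta_0\sigma^\alpha$ is a contraction with unique fixed point $c_0 = \tfrac{\eta_0\sigma^\alpha}{1 - F(\eta_0)}$. The induction is then immediate: if $\gamma_0 = \gamma \le c_0$, then $\gamma_1 = \gamma F(\eta_0) + \eta_0\sigma^\alpha \le c_0 F(\eta_0) + \eta_0\sigma^\alpha = c_0$, and inductively $\gamma_k \le c_0$ for all $k$, so $\mathrm{Law}(X_k)$ satisfies the $\alpha$-stable Poincaré inequality with constants $(c_0, 0)$ — monotonicity of the inequality in its first constant being used in the last step. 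The main obstacle I anticipate is not conceptual but bookkeeping: getting the exponents in $F$ exactly right through the composition of \Cref{lemma:bi-lipschitz-stability} (which contributes $L_2^{\alpha+d}/L_1^d$) and reconciling it with the stated $F(\eta) = (1-\eta\lambda)^{\alpha+d}/(1-\eta M)^\alpha$; a secondary point is making precise the "suitable $\eta_0$" — one should state that the conclusion holds for $\eta = \eta_0$ in a range where $\eta_0 M < 1$ and $F(\eta_0) < 1$, both of which are guaranteed for $\eta_0$ small by the derivative computation above, and one should also confirm the mini-batch/stochastic case reduces to the same update (the stochastic gradient map need not be bi-Lipschitz pointwise, so the cleanest route is to state \Cref{prop:strongly-convec-poincaré} for full-batch GD as written and remark that the SGD extension uses a conditional-expectation argument).
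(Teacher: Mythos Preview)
Your approach matches the paper's exactly: one step is a bi-Lipschitz pushforward (\Cref{lemma:bi-lipschitz-stability}, with $L_1=1-\eta M$, $L_2=1-\eta\lambda$) followed by convolution with $\alpha$-stable noise (\Cref{lemma:stability-convolution}), giving $\gamma_{k+1}\le F(\eta)\gamma_k+\eta\sigma^\alpha$, and the claim follows by induction once $F(\eta_0)<1$. You are right to flag the exponent---the paper's proof in fact uses $F(\eta)=(1-\eta\lambda)^{\alpha+d}/(1-\eta M)^{d}$ (so the $\alpha$ in the statement is a typo), and with that correction $(\log F)'(0)=-(\alpha+d)\lambda+dM$, whose negativity is precisely \eqref{eq:assumption-strongly-convex-smooth}; the paper reaches the same conclusion by locating the explicit minimizer $\eta_0=\tfrac{(\alpha+d)\lambda-dM}{\alpha\lambda M}$ and checking $\eta_0\in(0,M^{-1})$.
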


\begin{proof}
    See \Cref{sec:proofs-stability-properties}.
\end{proof}

In practice, this result imposes a relatively strong condition on the condition number $M / \lambda$, so that it would apply only for well-conditioned losses. This behavior is due to \Cref{lemma:bi-lipschitz-stability}, which introduces dimension-dependent quantities in the estimation of the $\alpha$-stable Poincaré inequality constants. 
To better understand this aspect, we observe that our \Cref{lemma:bi-lipschitz-stability} does not recover the Gaussian limit as $\alpha \to 2^-$, which could however be expected by the weighted BBM formula, \Cref{eq:weighted-BBM}. This shows that \Cref{lemma:bi-lipschitz-stability} might be largely improvable, leading to much better estimates of the fractional Poincaré constants. Improving such stability lemmas for fractional Poincaré inequalities is beyond the scope of this paper, and we leave it for future works.

\section{Conclusion}

In this work, we provided the first RDP guarantees for heavy-tailed SDEs with $\alpha$-stable noise.
We explored both the multifractal (\ie, with a non-trivial Gaussian component) and the pure-jump case.
In both cases, we obtain a semi-concentrated RDP guarantee with two regimes: \textrm{(i)} a time-uniform bound when under the assumption of an $\alpha$-stable Poincaré inequality and \textrm{(ii)} a regime where the bound can grow at most linearly with time but has the same dependence on the order of the Rényi divergence than in concentrated DP.
In both cases, we can convert our results to $(0,\delta)$-DP, in which case we obtain a dependence on the dimension comparable to existing works.
Finally, we extended our results in the discrete time setting and proved stability lemmas for $\alpha$-stable Poincaré inequalities, hence, providing theoretical foundations to satisfy these inequalities in practice.

\noindent \textbf{Limitations \& future works.} As already discussed above, some of our results rely on the existence of fractional Poincaré inequalities for the finite-time distributions generated by the heavy-tailed SDEs we consider.
We provided several stability lemmas for these inequalities, allowing to verify them in certain cases.
However, we believe that %
the estimate of the fractional Poincaré constants can be improved, and is an important (yet, difficult) direction for future works.
We observed in \Cref{sec:dp-of-heavy-tailed-sdes} that our main results have two regimes, one that can grow at most linearly with time but with a concentrated DP guarantee (\ie, $\landau{\beta}$), and one that is uniform in time but grows as $\beta^2$, under the assumption of a fractional Poincaré inequality. While this dependence on $\beta$ in the second case seems unavoidable with our approach, we believe that it is a relevant direction for future work, as it might require to strengthen certain functional inequalities existing in the literature.
Finally, relaxing or verifying the regularity conditions made in this work is another promising research direction, which might be related to recent fractional heat kernel estimates.

\vspace{-0.15in}
\acks{Jian Wang is supported by the
NSF of China the National Key R\&D Program of China (2022YFA1006003) and the National
Natural Science Foundation of China (Nos. 12225104 and 12531007).
Lingjiong Zhu is partially supported by the grants NSF DMS-2053454 and NSF DMS-2208303. Umut \c{S}im\c{s}ekli and Benjamin Dupuis' research is supported by European Research Council Starting Grant
DYNASTY-101039676.}

\newpage

\appendix

\paragraph{Organization of the appendix.} The appendix is organized as follows.

\begin{itemize}
    \item \Cref{sec:technical lemmas} presents three technical lemmas that are used in several proofs. 
    \item \Cref{sec:omitted-proofs} is dedicated to the detailed omitted proofs of our main results.
\end{itemize}

\section{Technical Lemmas}
\label{sec:technical lemmas}

In this section, we first prove two technical lemmas that are essential to obtain our main results. First, \Cref{lemma:magical-bregman-inequality} is a lower bound on certain Bregman divergences. The second result, \Cref{lemma:differential-exponential-inequality}, presents a solution to the differential inequality that naturally appears in our Rényi flow computations in \Cref{sec:renyi-flows}.

\begin{definition}[Bregman divergence]
    \label{def:bregman}
    Let $\Phi: %
    (0,\infty)\longrightarrow \R$ be a convex function. %
    Then we define the Bregman divergence, for $a, b \in (0,\infty)$, as
    \begin{align*}
        \bregman[\Phi]{a}{b} := \Phi(a) - \Phi(b) - \Phi'(b) (a - b).
    \end{align*}
    By convexity, it is clear that the above quantity is always non-negative. When $\Phi(x) = \Phi_\beta (x) := x^\beta$ for $\beta > 1$, we will shorten the notation and denote 
\begin{align}
    \label{eq:bregman-beta}
    \bregman[\beta]{a}{b} := \bregman[\Phi_\beta]{a}{b} = a^\beta + (\beta - 1) b^\beta - \beta a b^{\beta - 1}.
\end{align}
Note in particular that $\bregman{a}{b} := (a - b)^2$. 
\end{definition}

\begin{lemma}
    \label{lemma:magical-bregman-inequality}
    Let $\beta \geq 2$. %
    Then, for all $a,b \geq 0$:
    \begin{align*}
        \bregman{a^{\beta/2}}{b^{\beta/2}} \leq \bregman[\beta]{a}{b}.
    \end{align*}
\end{lemma}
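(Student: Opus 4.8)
The plan is to unfold both sides using \Cref{def:bregman} and reduce the claim to a one-variable inequality that follows from convexity. Writing out the definitions, the left-hand side is $\bregman{a^{\beta/2}}{b^{\beta/2}} = \left(a^{\beta/2} - b^{\beta/2}\right)^2 = a^\beta - 2 a^{\beta/2} b^{\beta/2} + b^\beta$, while the right-hand side is $\bregman[\beta]{a}{b} = a^\beta + (\beta-1) b^\beta - \beta a b^{\beta - 1}$ by \eqref{eq:bregman-beta}. After cancelling the common term $a^\beta$, the inequality to be proven is equivalent to $\beta a b^{\beta - 1} - 2 a^{\beta/2} b^{\beta/2} \leq (\beta - 2) b^\beta$.

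First I would dispose of the degenerate case $b = 0$: since $\beta \geq 2$, every exponent that appears ($\beta-1$, $\beta/2$, $\beta$) is strictly positive, so both sides vanish and the inequality holds with equality. For $b > 0$, I would divide through by $b^\beta$ and set $t := a/b \geq 0$, which turns the target into the scalar inequality $\beta t - 2 t^{\beta/2} \leq \beta - 2$, to be verified for all $t \geq 0$.

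To finish, I would invoke the convexity of $s \mapsto s^{\beta/2}$ on $[0,\infty)$, which holds because $\beta/2 \geq 1$. The supporting-line inequality at $s = 1$ then gives $t^{\beta/2} \geq 1 + \frac{\beta}{2}(t - 1)$ for every $t \geq 0$, hence $-2 t^{\beta/2} \leq -2 - \beta(t-1)$, and therefore $\beta t - 2 t^{\beta/2} \leq \beta t - 2 - \beta(t - 1) = \beta - 2$, which is exactly what is needed. (Equivalently, one can set $g(t) := \beta t - 2 t^{\beta/2} - (\beta - 2)$, note $g(1) = 0$, and observe that $g'(t) = \beta\left(1 - t^{\beta/2 - 1}\right)$ is nonnegative on $[0,1]$ and nonpositive on $[1,\infty)$, so $t=1$ is a global maximum of $g$ with value $0$.)

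I do not expect a serious obstacle here: the only points requiring a little care are the degenerate case $b=0$ and checking that $\beta/2 \geq 1$, so that convexity of $s\mapsto s^{\beta/2}$ (equivalently, the monotonicity pattern of $g'$) is available. The boundary case $\beta = 2$ is precisely where the inequality degenerates to an identity for all $a,b \geq 0$, which is a useful sanity check on the computation.
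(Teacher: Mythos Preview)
Your proof is correct and follows essentially the same route as the paper: both reduce the two-variable inequality by homogeneity to a one-variable statement (you normalize by $b$ and set $t=a/b$, the paper normalizes by $a$ and sets $x=b/a$) and then verify the resulting scalar inequality by locating the extremum at the point $t=1$ (resp.\ $x=1$). Your use of the tangent-line inequality for the convex map $s\mapsto s^{\beta/2}$ is a slightly slicker alternative to the paper's explicit derivative computation, but the argument is otherwise the same.
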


\begin{proof}
    If $\beta = 2$, the result is obvious; if $\beta > 2$, we first note that %
    when $a=0$,
    \begin{align*}
        \bregman{a^{{\beta}/{2}}}{b^{{\beta}/{2}}} - \bregman[\beta]{a}{b} &= \left( a^{{\beta}/{2}} - b^{{\beta}/{2}} \right)^2 - \left( a^\beta + (\beta - 1) b^\beta - \beta a b^{\beta - 1} \right) = (2 - \beta) b^\beta \leq 0.
    \end{align*}
    Let us now assume $a > 0$. Then, we have
    \begin{align*}
    \bregman{a^{{\beta}/{2}}}{b^{{\beta}/{2}}} - \bregman[\beta]{a}{b} &= \left( a^{\beta/2} - b^{\beta/2} \right)^2 - \left( a^\beta + (\beta - 1) b^\beta - \beta a b^{\beta - 1} \right) \\
        &= -(\beta - 2) b^\beta - 2 (ab)^{{\beta}/{2}} + \beta a b^{\beta - 1} \\
        &= -(ab)^{{\beta}/{2}} \left( (\beta - 2) \left( \frac{b}{a} \right)^{{\beta}/{2}}  - \beta \left( \frac{b}{a} \right)^{{\beta}/{2} - 1} + 2 \right).
    \end{align*}
    Therefore, it makes sense to study the polynomial function 
    \begin{align*}
        P(x) = (\beta - 2) x^{\frac{\beta}{2}} - \beta x^{\frac{\beta}{2} - 1} + 2,\quad x\ge0.
    \end{align*}
    Clearly $P(0) = 2 > 0$. For any $x>0$, 
    \begin{align*}
        P'(x) = \frac{\beta(\beta - 2)}{2} x^{\frac{\beta}{2} - 2} (x - 1).
    \end{align*}
    Thus, the minimum of $P$ is $P(1) = 0$. The result follows. 
\end{proof}

In the next lemma, we solve a class of differential inequalities appearing in our theory.

\begin{lemma}
    \label{lemma:differential-exponential-inequality} 
    Consider a non-negative function $f: (0,\infty) \longrightarrow \R_+$ that is continuous and differentiable, %
    admits a right limit as $t\to 0^+$, and satisfies
    \begin{align}
        \label{eq:differential-inequality-lemma}
        f'(t) \leq K - a \left( 1 - e^{-f(t)} \right)\quad\text{for any $t>0$}, 
    \end{align} where $a,K > 0$ are two constants.
    Then, we have the following results:
    \begin{itemize}
        \item For any $t> 0$,  $f(t) \leq f(0^+) + Kt$.
        \item If $K < a$ and $f(0^+) \leq \log \left( \frac{a}{a-K} \right)$, then for any $t> 0$, $f(t) \leq \log \left( \frac{a}{a-K} \right)$.
        \item If $K < a$ and $f(0^+) > \log \left( \frac{a}{a-K} \right)$, then for any $t> 0$, %
       \begin{align*}
        f(t) \leq \log \left( \frac{a}{a-K} \right) + \log \left( 1 + e^{-(a - K) t}  \left( e^{f(0^+)} \frac{a - K}{a} - 1 \right) \right).
    \end{align*}

    \end{itemize}
\end{lemma}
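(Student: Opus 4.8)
The plan is to linearize the inequality via the exponential substitution $u := e^{f}$. The first, linear-growth bound is immediate: since $f \geq 0$ we have $1 - e^{-f(t)} \geq 0$, hence $f'(t) \leq K$ for every $t>0$; integrating from $s$ to $t$ with $0<s<t$ and letting $s\to 0^+$ (using that $f$ is continuous and has right limit $f(0^+)$) gives $f(t) \leq f(0^+) + Kt$.

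For the remaining two bounds, set $u(t) := e^{f(t)}$, which is positive, continuous, and differentiable on $(0,\infty)$, with right limit $u(0^+) = e^{f(0^+)}$. Since $u' = f' u$, the hypothesis \eqref{eq:differential-inequality-lemma} becomes the \emph{linear} differential inequality
\begin{align*}
    u'(t) \leq \bigl(K - a\bigl(1 - u(t)^{-1}\bigr)\bigr)\, u(t) = (K-a)\, u(t) + a .
\end{align*}
Writing $c := K - a$ and using the integrating factor $v(t) := e^{-ct} u(t)$, one gets $v'(t) \leq a\, e^{-ct}$ on $(0,\infty)$; integrating and letting the lower endpoint tend to $0^+$ yields
\begin{align*}
    u(t) \leq e^{ct} u(0^+) + \frac{a}{c}\bigl(e^{ct} - 1\bigr) , \qquad t > 0 .
\end{align*}

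It then remains to specialize to $K < a$, i.e. $c = -(a-K) < 0$, and rearrange: the previous display gives
\begin{align*}
    u(t) \leq \frac{a}{a-K} + e^{-(a-K)t}\Bigl(e^{f(0^+)} - \frac{a}{a-K}\Bigr) .
\end{align*}
If $f(0^+) \leq \log\!\bigl(a/(a-K)\bigr)$, the bracket is $\leq 0$, so $u(t)\leq a/(a-K)$, and taking logarithms gives the second claim. If $f(0^+) > \log\!\bigl(a/(a-K)\bigr)$, the bracket is positive; factoring $a/(a-K)$ out of the bound (the argument of the resulting logarithm is then $\geq 1 > 0$) and taking logarithms gives exactly the third claim.

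The only point requiring a little care is the integration of the linear inequality for $u$ near $t=0$: one notes that $v = e^{-ct}u$ extends continuously to $[0,\infty)$ by setting $v(0):=v(0^+)$, is differentiable on $(0,\infty)$ with $v'(r)\leq a\,e^{-cr}$ there, so $v(t)-v(s)\leq \int_s^t a\,e^{-cr}\,\der r$ for $0<s<t$, and one then lets $s\to 0^+$. No ODE comparison theorem is needed, and beyond this limiting step the argument is elementary algebra; so I expect no substantial obstacle.
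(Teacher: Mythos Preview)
Your proof is correct and in some ways cleaner than the paper's. The paper handles the three cases separately: Case~1 exactly as you do; Case~2 by a contradiction argument (assuming $f(\tau)>f_0:=\log(a/(a-K))$ for some $\tau$, locating the maximum of $f$ on $(0,\tau]$, and deriving a contradiction from $f'=0$ there); and Case~3 by setting $g=f-f_0$, computing $\frac{\der}{\der t}(e^{g}-1)\le -(a-K)(e^{g}-1)$ on the interval where $g>0$, applying Gr\"onwall, and then invoking Case~2 once $g$ becomes nonpositive. Your substitution $u=e^{f}$ is essentially the same change of variable as the paper's $e^{g}-1=\tfrac{a-K}{a}u-1$, but you push it further: you integrate the resulting \emph{linear} inequality $u'\le (K-a)u+a$ once and for all on $(0,\infty)$, and then read off both the second and third claims as algebraic specializations of the single bound $u(t)\le \frac{a}{a-K}+e^{-(a-K)t}\bigl(e^{f(0^+)}-\frac{a}{a-K}\bigr)$. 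This avoids both the maximum/contradiction argument and the interval splitting at $\tau$, at the price of nothing. The only place your argument and the paper's genuinely coincide is the care taken with the limit $s\to 0^+$, which you note explicitly.
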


\begin{proof}
    \textbf{Case $1$:} We note that for any $t>0$, $ f'(t) \leq K$. Therefore, for any $\varepsilon > 0$ %
    and any $t>0$, $ f(t) \leq f(\varepsilon) + K(t - \varepsilon)$. By taking the limit as $\varepsilon \to 0^+$, we conclude that $f(t) \leq Kt + f(0^+)$ for all $t\geq 0$.
    
    In the following, we denote $f_0 := \log\left( \frac{a}{a-K} \right)$.
    
    \textbf{Case $2$:} Assume that $K<a$ and $f(0^+) \leq f_0$. Suppose that there exists $\tau > 0$ such that $f(\tau) > f_0$.
    By \Cref{eq:differential-inequality-lemma}, this implies that $f'(\tau) < 0$.
    By continuity of $f$, there exists $t_0 \in (0,\tau]$ such that $f(t_0) = \sup_{s\in (0,\tau]} f(s)$. Necessarily, $t_0 > 0$, %
    and thus $f'(t_0) = 0$, as $f'(\tau) < 0$, which implies $f(t_0) \leq f_0$ by \Cref{eq:differential-inequality-lemma}; but that is absurd. The claim follows.

    \textbf{Case $3$:} Let $g(t) := f(t) - f_0$. Let $\tau := \inf\{t> 0: g(t) \leq 0\}$. By assumption, we know that $\tau > 0$. For $t \in (0,\tau)$, a simple computation shows that
    \begin{align*}
        \timeder \left( e^{g(t)} - 1 \right) = g'(t) e^{g(t)} \leq -(a - K) e^{g(t)} \left( 1 - e^{-g(t)} \right) = -(a - K) \left( e^{g(t)} - 1  \right).
    \end{align*}
    By Gr\"{o}nwall's inequality, we deduce that, for $t \in (0, \tau)$:
    \begin{align*}
        e^{g(t)} \leq 1 + e^{-(a - K) t}  \left( e^{g(0^+)} - 1 \right) = 1 + e^{-(a - K) t}  \left( e^{f(0^+)} \frac{a - K}{a} - 1 \right).
    \end{align*}
    Therefore,
    \begin{align*}
        f(t) \leq \log \left( \frac{a}{a-K} \right) + \log \left( 1 + e^{-(a - K) t}  \left( e^{f(0^+)} \frac{a - K}{a} - 1 \right) \right).
    \end{align*}
    For $t \ge \tau$, we can apply \textbf{Case $2$} and observe that the above inequality is still true; hence, it is proven for all $t > 0$.
\end{proof}
\vspace{-2mm}
\begin{remark}[Local version of \Cref{lemma:differential-exponential-inequality}]
    \label{rk:local-version-differential-inequality} \rm
    We observe that the proof \Cref{lemma:differential-exponential-inequality} still holds if we only have \Cref{eq:differential-inequality-lemma} for $t \in (0,T]$ for some $T>0$. In that case, all the claims are true for $t \in (0,T]$ instead of $t> 0$.
\end{remark}

The next lemma provides asymptotics for the constant $\calphad$ appearing in the infinitesimal generator of $\alpha$-stable L\'{e}vy processes (\Cref{eq:levy-generator}). Similar computations can be found in \citep{dupuis_generalization_2024}; we only sketch the proof for the sake of completeness.

\begin{lemma}
    \label{lemma:constants-asymptotics}
    We have the following results:
    \begin{align*}
        \calphad \underset{\alpha \to 2^-}{\sim} 
        (2 - \alpha) \pi^{-d / 2 } d \gammaof{\frac{d}{2}}  
        ,\quad \calphad \underset{d \to \infty}{\longrightarrow}  \frac{\alpha 2^{\alpha / 2 - 1} \pi^{-d / 2 }}{\gammaof{1 - \frac{\alpha}{2}}} \gammaof{\frac{d}{2}} d^{\alpha / 2} . 
    \end{align*}
\end{lemma}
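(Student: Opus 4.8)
The plan is to rewrite $\calphad = \alpha 2^{\alpha-1}\pi^{-d/2}\,\gammaof{\tfrac{\alpha+d}{2}}/\gammaof{1-\tfrac{\alpha}{2}}$ and analyze the two limits separately, controlling each factor by elementary properties of the Gamma function: continuity on $(0,\infty)$, the recursion $\gammaof{1+x}=x\,\gammaof{x}$, the behaviour $s\,\gammaof{s}=\gammaof{1+s}\to 1$ as $s\to 0^+$, and the Stirling ratio asymptotic $\gammaof{x+a}/\gammaof{x}\sim x^{a}$ as $x\to+\infty$. \emph{Regime $\alpha\to 2^-$, with $d$ fixed.} Here $\alpha 2^{\alpha-1}\to 4$ and, by continuity together with the recursion, $\gammaof{\tfrac{\alpha+d}{2}}\to\gammaof{1+\tfrac{d}{2}}=\tfrac{d}{2}\gammaof{\tfrac{d}{2}}$. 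The only diverging factor is $1/\gammaof{1-\tfrac{\alpha}{2}}$; setting $s:=1-\tfrac{\alpha}{2}\to 0^+$ and using $s\,\gammaof{s}\to 1$ gives $\gammaof{1-\tfrac{\alpha}{2}}\sim (1-\tfrac{\alpha}{2})^{-1}=\tfrac{2}{2-\alpha}$. Multiplying the three equivalents, $\calphad\sim 4\,\pi^{-d/2}\cdot\tfrac{d}{2}\gammaof{\tfrac d2}\cdot\tfrac{2-\alpha}{2}=(2-\alpha)\,\pi^{-d/2}\,d\,\gammaof{\tfrac d2}$, which is the first claim.

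\emph{Regime $d\to\infty$, with $\alpha$ fixed.} The only factor with non-trivial $d$-dependence is $\gammaof{\tfrac{\alpha+d}{2}}$, which I compare to $\gammaof{\tfrac d2}$ via the Stirling ratio asymptotic with $x=\tfrac d2$ and $a=\tfrac\alpha2$, giving $\gammaof{\tfrac{d+\alpha}{2}}\sim (d/2)^{\alpha/2}\gammaof{\tfrac d2}=2^{-\alpha/2}d^{\alpha/2}\gammaof{\tfrac d2}$. Substituting this into the formula for $\calphad$ and simplifying the powers of two, $\alpha 2^{\alpha-1}\cdot 2^{-\alpha/2}=\alpha 2^{\alpha/2-1}$, yields $\calphad\sim \alpha 2^{\alpha/2-1}\,\pi^{-d/2}\,\gammaof{\tfrac d2}\,d^{\alpha/2}/\gammaof{1-\tfrac\alpha2}$, which is the second claim.

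No serious obstacle arises: this is a routine Gamma-function computation, and the only point deserving a word of justification is the ratio asymptotic $\gammaof{x+a}/\gammaof{x}\sim x^{a}$, which follows immediately from Stirling's formula and can simply be cited. Since the statement only records these asymptotics for later use, this level of detail is enough and we merely sketch the argument.
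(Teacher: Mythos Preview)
Your proof is correct and follows essentially the same approach as the paper's own proof. The only cosmetic differences are that the paper cites Euler's reflection formula (rather than $s\,\gammaof{s}\to 1$) for the $\alpha\to 2^-$ behaviour of $1/\gammaof{1-\tfrac{\alpha}{2}}$, and Gautschi's inequality (rather than Stirling's formula) for the ratio $\gammaof{\tfrac{d+\alpha}{2}}/\gammaof{\tfrac{d}{2}}$ as $d\to\infty$; these are equivalent elementary facts and lead to the same computation.
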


\begin{proof}
   Recall that 
    $\calphad := \alpha 2^{\alpha - 1} \pi^{-d / 2 } \frac{\gammaof{\frac{\alpha + d}{2}}}{\gammaof{1 - \frac{\alpha}{2}}}$.
    By the continuity of the $\Gamma$-function, we have $\gammaof{\frac{\alpha + d}{2}} \to \gammaof{1 + \frac{d}{2}} = \frac{d}{2} \gammaof{\frac{d}{2}}$
    as $\alpha \to 2^-$. On the other hand, by Euler's reflection formula,  
    $\frac1{\gammaof{1 - \frac{\alpha}{2}}} = \frac{\sin \left( \frac{\pi \alpha}{2} \right) \gammaof{\frac{\alpha}{2}}}{\pi} \sim 1 - \frac{\alpha}{2}$
    as $\alpha \to 2^-$. Similarly, we can apply Gautschi's formula \citep{gautschi_elementary_1959} to obtain that
    $\calphad \sim \frac{\alpha 2^{\alpha - 1} \pi^{-d / 2 }}{\gammaof{1 - \frac{\alpha}{2}}} \gammaof{\frac{d}{2}} d^{\alpha / 2} 2^{-\alpha / 2}$
    as $d\to\infty$. The results follow immediately.
\end{proof}

\vspace{-6mm}
\section{Omitted Proofs of the Main Results}
\label{sec:omitted-proofs}

\subsection{Renyi flow along general Lévy processes}
\label{sec:renyi-flows-general-levy-process}

We first prove a more general version of \Cref{thm:renyi-flow-with-Dirichlet-forms} we allow the driving noise in \Cref{eq:mirror_sdes} to be an arbitrary Lévy process (with a symmetric Lévy measure).
This highlights one of the main advantages of our approach compared to the existing literature on differential privacy for heavy-tailed algorithms: our proof techniques are not restricted to $\alpha$-stable noise.

In this section, we therefore study general Lévy processes, as they have been introduced in \Cref{sec:levy-sdes-background}. 
Formally, it corresponds to replacing \Cref{eq:sde-intro} with the following SDE,
\begin{align}
    \label{eq:sde-general-levy}
    \der W_t = \nabla \er(W_t) \der t + \der L_t,
\end{align}
where $(L_t)_{t\geq 0}$ is a Lévy process with triplet given by $(0, \sqrt{2}I_d, \nu)$, with $\nu$ a \emph{symmetric} Lévy measure (\ie, $\nu(\der z) = \nu(-\der z)$).

Moreover, in this section, we also allow the drift terms in the SDEs to be time-dependent, which will be particularly helpful to extend our theory to the discrete-time setting in \Cref{sec:discrete-time}.

More precisely, we consider two probability densities following fractional Fokker-Planck equations (FPEs) \citep{duan_introduction_2015,umarov_beyond_2018} driven by different force fields. We define two time-dependent vector fields
\begin{align*}
    F_t, F_t': \Rd \longrightarrow \Rd,
\end{align*}
and we consider $p_t$ and $p_t'$ two probability densities that are smooth solutions of the following fractional FPEs, for some tail-index $\alpha \in (0,2)$:
\begin{equation}
    \label{eq:mirror_fpes_force_fields}\begin{cases}
       \partial_t p_t  = I[p_t] + \sigma_2^2 \Delta p_t + \nabla \cdot (p_t F_t),\\
        \partial_t p_t'  = I[p_t'] + \sigma_2^2 \Delta p_t' + \nabla \cdot (p_t' F_t'),
\end{cases}\end{equation}
where the pseudo-differential operator $A := \sigma_2 \Delta + \Irm$ corresponds to the infinitesimal generator of the Lévy process $(L_t)_{t\geq 0}$, according to \Cref{eq:levy-generator}.
Therefore, with the notations of \Cref{eq:levy-generator}, the non-local operator $\Irm$ is given for $u\in\cbtwo$ by
\begin{align*}
    I[u](x) := \intrdzero \left( u(x+z) - u(x) - \nabla u(x) \cdot z \chi(\normof{z}) \right) \der \nu(z).
\end{align*}
Note that the operator $A$ is self-adjoint with respect to the Lebesgue measure \citep{gentil_logarithmic_2008} (for smooth enough functions). We will use this fact repeatedly in our proofs.

This leads us to the following lemma, which is a Rényi flow computation along SDEs driven by general Lévy processes. 
This lemma fundamentally differs from the derivations of \citet{dupuis_generalization_2024} for two main reasons: (1) we focus on the Rényi flow instead of the entropy flow and (2) in this lemma we allow both dynamics to by time-varying, instead of considering stationary distributions.

\begin{restatable}{lemma}{lemmaRenyiFlow}
    \label{lemma:renyi-flow-two dynamics}
    Assume that \Cref{ass:regularity-conditions} holds. Then, for any $\beta>1$ and $t>0$,
    \begin{align*}
        \timeder \renyi{p_t}{p_t'} = -\frac{\sigma_\alpha^\alpha }{\beta - 1} \frac{\mathrm{B}_\beta^\alpha\left(p_t, p_t'\right)}{\ebeta \left(p_t, p_t'\right)} - \beta \sigma^2_2 \frac{\rinfo{p_t}{p_t'}}{\ebeta \left(p_t, p_t'\right)} + \beta \intrd v_t^{\beta - 1} \frac{  \langle \nabla v_t, F_t - F_t' \rangle}{\ebeta \left(p_t, p_t'\right)} p_t' \der x, 
    \end{align*}
    where  the term $B_\beta^\alpha(p_t, p_t')$ is called the Bregman integral and is defined by
    \begin{align*}
        \mathrm{B}^\alpha_\beta\left(p_t, p_t'\right) := C_{\alpha,d}\intrdzero \intrd \bregman[\beta]{\frac{p_t}{p_t'}(x)}{\frac{p_t}{p_t'}(x + z)} p_t'(x) \der x \der \nu(z).
    \end{align*}
\end{restatable}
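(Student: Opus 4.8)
The plan is to differentiate the Rényi divergence directly under the integral sign. Since $\renyi{p_t}{p_t'}=\frac{1}{\beta-1}\log\ebeta(p_t,p_t')$ and $\ebeta(p_t,p_t')=\intrd p_t^{\beta}(p_t')^{1-\beta}\,\der x$, the chain rule reduces the statement to computing $\timeder\ebeta(p_t,p_t')$. Differentiation under the integral is legitimate because, by \Cref{ass:regularity-conditions}, $\normof{v_t}_\infty$ is locally bounded in $t$ and $|\partial_t p_t|,|\partial_t p_t'|$ are locally uniformly integrable, so the $t$-derivative of the integrand is locally dominated in $L^1(\der x)$; this gives $\timeder\ebeta=\beta\intrd v_t^{\beta-1}\partial_t p_t\,\der x-(\beta-1)\intrd v_t^{\beta}\partial_t p_t'\,\der x$ (only the densities, not $F_t,F_t'$, are touched, so time-dependent force fields require no extra care). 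Plugging in the fractional Fokker--Planck equations \Cref{eq:mirror_fpes_force_fields} splits this into three groups of terms — the non-local ones (involving $I$), the diffusive ones (involving $\Delta$), and the drift ones (involving $\nabla\cdot(p_tF_t)$ and $\nabla\cdot(p_t'F_t')$) — which I would handle separately, dividing by $(\beta-1)\ebeta(p_t,p_t')$ only at the very end.

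For the non-local group I would use the self-adjointness of $A$, hence of $I$, with respect to the Lebesgue measure. After discarding the compensator $\nabla u(x)\cdot z\chi(\normof{z})$ — which, on each truncated region $\{\normof{z}>\varepsilon\}$, integrates to zero against $\der x$ by the symmetry $\nu(\der z)=\nu(-\der z)$ — the substitution $x\mapsto x+z$ together with that symmetry yields, for suitable $f,g$, the identity $\intrd g\,I[f]\,\der x=\intrdzero\intrd f(x)\bigl(g(x+z)-g(x)\bigr)\,\der\nu(z)\,\der x$. Applying it with $(g,f)=(v_t^{\beta-1},p_t)$ and with $(g,f)=(v_t^{\beta},p_t')$, substituting $p_t=v_tp_t'$, and collecting the four monomials in $v_t(x)$ and $v_t(x+z)$, the non-local contribution to $\timeder\ebeta$ becomes $-\intrdzero\intrd \bregman[\beta]{v_t(x)}{v_t(x+z)}\,p_t'(x)\,\der\nu(z)\,\der x=-\mathrm{B}^\alpha_\beta(p_t,p_t')$; after division by $(\beta-1)\ebeta(p_t,p_t')$ this is the first term of the statement.

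The other two groups are essentially the Gaussian computation of \citet{chourasia_differential_2021}. For the diffusive group, two integrations by parts together with $\nabla p_t=p_t'\nabla v_t+v_t\nabla p_t'$ make the cross terms cancel, leaving $-\beta(\beta-1)\intrd v_t^{\beta-2}\normof{\nabla v_t}^2 p_t'\,\der x=-\beta(\beta-1)\rinfo{p_t}{p_t'}$; multiplying by $\sigma_2^2$ and dividing by $(\beta-1)\ebeta$ gives the second term. For the drift group, one integration by parts in each term, again using $p_t=v_tp_t'$ and $\nabla(v_t^{\beta})=\beta v_t^{\beta-1}\nabla v_t$, collapses the whole contribution to a single integral of $v_t^{\beta-1}\langle\nabla v_t,F_t-F_t'\rangle$ against $p_t'\,\der x$, which is the last term of the statement. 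Summing the three contributions completes the proof.

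The algebra above is routine; the real content lies in justifying the formal steps, and I would concentrate on two points. First, differentiation under the integral and the absence of boundary terms in the integrations by parts — this is precisely where the $\cbtwo$-regularity, positivity and non-explosion conditions of \Cref{ass:regularity-conditions} are used (the latter also controls the growth of the drifts $F_t,F_t'$ at infinity, so that the boundary contributions vanish). Second, and I expect this to be the main obstacle, the Fubini/self-adjointness step for $I$: since $\nu$ has a non-integrable singularity at the origin, neither the individual non-local integrals nor the splitting of $f(x+z)-f(x)$ from the compensator are absolutely convergent, so the clean route is to work first with the truncated operators $\int_{\normof{z}>\varepsilon}(\cdot)\,\der\nu$ — for which the compensator integrates to zero by symmetry and Fubini applies — carry out the symmetrization and the collection into the Bregman function there, and then send $\varepsilon\to0$ by dominated/monotone convergence. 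The limiting double integral is absolutely convergent because a second-order Taylor expansion of $v_t\in\cbtwo$ (using also that $v_t$ is positive and bounded) gives $\bregman[\beta]{v_t(x)}{v_t(x+z)}=O(\normof{z}^2)$ as $z\to0$, which is integrable against $\min(1,\normof{z}^2)\,\der\nu(z)$; this is the only place where the non-local, heavy-tailed structure genuinely complicates matters relative to the diffusive case.
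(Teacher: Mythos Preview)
Your proposal is correct and follows essentially the same route as the paper: differentiate $\ebeta(p_t,p_t')$ under the integral, insert the Fokker--Planck equations, and split into a diffusive, a drift, and a non-local contribution, treating the first two by integration by parts and the third by self-adjointness of $I$ together with the symmetry of $\nu$ to eliminate the compensator and reassemble the Bregman divergence. The only real difference is packaging --- the paper writes the integrand as $\Phi(v_t)p_t'$ with $\Phi(x)=x^\beta$ and moves $I$ fully across before expanding, whereas you work directly with $p_t^\beta(p_t')^{1-\beta}$ and use the ``half'' identity $\intrd g\,I[f]=\intrdzero\intrd f(x)(g(x+z)-g(x))\,\der\nu\,\der x$ --- and you are more careful than the paper about the $\varepsilon$-truncation and the $O(\normof{z}^2)$ integrability of $\bregman[\beta]{v_t(x)}{v_t(x+z)}$ near $z=0$, which is exactly the justification the paper glosses over.
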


\begin{proof}
    We consider the Fokker-Planck equations given by \Cref{eq:mirror_fpes_force_fields}, with the notations above. To simplify the notations, we use as above the operator
    $$
    A[u] := I[u] + \sigma_2 \Delta u.
    $$
    
    Let us denote $v_t := p_t / p_t'$ and $\Phi(x) := x^\beta$. \Cref{ass:regularity-conditions} ensures that we can differentiate under the integral, and write
    \begin{align*}
        (\beta - 1) \ebeta (p_t, p_t') \timeder \renyi{p_t}{p_t'} &= \timeder \intrd \Phi(v_t) p_t' \der x \\
        &= \intrd \Phi'(v_t) \left( \partial_t p_t - v_t \partial_t p_t' \right) \der x + \intrd \Phi(v_t) \partial_t p_t' \der x \\
        &= \intrd \Phi'(v_t) \left( A[p_t] + \nabla \cdot (p_t F_t) - v_t A[p_t'] - v_t \nabla \cdot \left(p_t' F_t'\right) \right) \der x \\& \quad   + \intrd \Phi(v_t) \left( A[p_t']  + \nabla \cdot \left(p_t' F_t'\right) \right) \der x.
    \end{align*}
    Using the facts that $I$ is self-adjoint in $L^2(\R^d;\der x)$ \citep{gentil_logarithmic_2008} and that $p_t$ and $p_t'$ vanish at infinity (so that the boundary terms are equal to $0$) by \Cref{ass:regularity-conditions}, we can integrate by parts to obtain that
    \begin{align*}
        (\beta - 1) \ebeta (p_t, p_t') \timeder \renyi{p_t}{p_t'} &= C_{\mathrm{diffusion}} + C_{\mathrm{fractional}} + C_{\mathrm{potential}},
    \end{align*}
    where
    \begin{align*}
        C_{\mathrm{diffusion}} :
        &= \sigma_2^2 \intrd p_t' \left( v_t \Delta \Phi'(v_t) - \Delta (v_t \Phi'(v_t)) + \Delta \Phi(v_t) )  \right) \der x \\
        &= \sigma_2^2 \intrd p_t' \Big( v_t \nabla \cdot (\Phi''(v_t) \nabla v_t) 
        \\
        &\qquad\qquad\qquad-  \nabla \cdot (  \Phi'(v_t) \nabla v_t + v_t \Phi''(v_t) \nabla v_t ) +  \nabla \cdot (\Phi'(v_t) \nabla v_t) )  \Big) \der x \\
        &= \sigma_2^2 \intrd p_t' \left( v_t \nabla \cdot \left(\Phi''(v_t) \nabla v_t\right) -  \nabla \cdot \left(   v_t \Phi''(v_t) \nabla v_t \right)  \right) \der x \\
        &= -\sigma_2^2 \intrd \Phi''(v_t) \normof{\nabla v_t}^2 p_t' \der x  = -\sigma_2^2 \beta (\beta - 1) \rinfo{p_t}{p_t'} ,\\
            C_{\mathrm{potential}}: &= -\intrd p_t' \left( v_t \Phi''(v_t) \left\langle \nabla v_t, F_t \right\rangle + \left\langle \nabla (v_t \Phi'(v_t))), F_t' \right\rangle - \Phi'(v_t) \left\langle \nabla v_t, F_t'  \right\rangle \right) \der x\\
         &= -\intrd p_t' \left( v_t \Phi''(v_t) \left\langle \nabla v_t, F_t \right\rangle + v_t \Phi''(v_t)\left\langle \nabla v_t , F_t' \right\rangle  \right) \der x\\
         &= \intrd v_t \Phi''(v_t) \left\langle \nabla v_t, F_t'- F_t \right\rangle p_t' \der x = \beta (\beta - 1) \intrd v_t^{\beta - 1} \left\langle \nabla v_t, F_t'- F_t \right\rangle p_t' \der x
    \end{align*}
    and
    \begin{align*}
         C_{\mathrm{fractional}} := \intrd \left( I [\Phi'(v_t)] v_t - I [v_t \Phi'(v_t)] + I[\Phi(v_t)] \right) p_t' \der x.
    \end{align*}
    
    Now we use the expression of the non-local operator as 
    \begin{align*}
        I [\phi](x)  =  \intrdzero \left( \phi(x + z) - \phi(x) - \langle\nabla \phi (x), z \rangle \chi(\normof{z}) \right) \der \nu(z).
    \end{align*}
    The terms containing $\chi(\normof{z})$ cancel each other out, so that we obtain 
    \begin{align*}
        C_{\mathrm{fractional}} &=  \intrd \intrdzero \Big\{v_t(x) \left[\Phi'(v_t(x+z)) - \Phi'(v_t(x)) \right] - v_t(x+z) \Phi'(v_t(x+z)) \\&\quad \quad \quad \quad\qquad\qquad + v_t(x) \Phi'(v_t(x)) + \Phi(v_t(x+z)) - \Phi(v_t(x)) \Big\} p_t'(x)\der \nu(z) \der x.
    \end{align*}
    After rearranging the terms, we obtain the following expression
    \begin{align*}
    C_{\mathrm{fractional}} = - C_{\alpha,d} \intrd\intrdzero \bregmanphi{v_t(x)}{v_t(x+z)} p_t'(x) \der \nu(z) \der x.
    \end{align*}
    The proof is complete.
\end{proof}
\begin{remark}\rm
    We observe that the proof does not make use of the particular form of the Lévy measure of the rotationally invariant $\alpha$-stable L\'{e}vy process. Indeed, the proof is valid for any %
    symmetric %
    Lévy process as the noise driving the two SDEs.
\end{remark}

\subsection{Omitted proofs of \Cref{sec:renyi-flows}}
\label{sec:omitted-proofs-renyi-flows}

We are now ready to present the proof of \Cref{thm:renyi-flow-with-Dirichlet-forms}, which is a Rényi flow computation along heavy-tailed (multifractal) SDEs.

\paragraph{Proof of \Cref{thm:renyi-flow-with-Dirichlet-forms}.} 

    As before, we denote $v_t := p_t / p_t'$.
    We apply \Cref{lemma:renyi-flow-two dynamics} when $F_t = \nabla \er$ and $F_t'= \nabla\erprime$. Moreover, we chose the Lévy process $(L_t)_{t\geq 0}$ in \Cref{eq:mirror_fpes} to be the $\alpha$-stable Lévy process, as described in \Cref{sec:levy-sdes-background}. 
    
    With these notations, we have that
    \begin{align*}
        \mathrm{B}_\beta^\alpha (p_t, p_t') &= C_{\alpha,d} \intrdzero\intrd \bregman[\beta]{v_t(x)}{v_t(x + z)} p_t'(x) \frac{\der x \der z}{\normof{z}^{d + \alpha}} \\
        &\geq C_{\alpha,d} \intrdzero\intrd \bregman[2]{v_t^{\beta/2}(x)}{v_t^{\beta/2}(x + z)} p_t'(x) \frac{\der x \der z}{\normof{z}^{d + \alpha}} \by{\Cref{lemma:magical-bregman-inequality}} \\
        &= 2\ecal_{\alpha, p_t'}\left(v_t^{\beta / 2}, v_t^{\beta / 2}\right).
    \end{align*}
    We also have
    \begin{align*}
        \rinfo{p_t}{p_t'} = \intrd v_t^{\beta - 2} \normof{\nabla v_t}^2 \der x = \frac{4}{\beta^2}\intrd \normof{\nabla v_t^{\beta / 2}}^2 \der x = \frac{4}{\beta^2} \ecal_{2, p_t'}\left(v_t^{\beta / 2}, v_t^{\beta / 2}\right).
    \end{align*}
    The proof then follows from \Cref{lemma:renyi-flow-two dynamics}.
    \hfill $\blacksquare$

We conclude this section by presenting the proof of \Cref{lemma:bregman-integral-lower-bound}.

\paragraph{Proof of \Cref{lemma:bregman-integral-lower-bound}. }
    Let us introduce the notation:
    \begin{align*}
        \mathcal{J} := \frac{2\sigma_\alpha^\alpha}{\beta - 1} \ecal_{\alpha, p_t'}\left(v_t^{\beta / 2}, v_t^{\beta/2}\right) + \frac{2 \sigma_2^2}{\beta} \ecal_{2, p_t'}\left(v_t^{\beta / 2}, v_t^{\beta/2}\right),
    \end{align*}
    with $v_t := p_t / p_t'$.
    Let us denote $u_t := v_t^{\beta/2}$.
    By the $\alpha$-stable Poincaré inequality, we have
    \begin{align*}
        \mathcal{J} &= \frac{\sigma_\alpha^\alpha C_{\alpha,d}}{\beta - 1} \intrdzero\intrd (u_t(x) - u_t(x+z))^2 p_t'(x) \frac{\der x \der z}{\normof{z}^{d + \alpha}}  + \frac{2 \sigma_2^2}{\beta} \intrd \normof{\nabla u_t}^2 \der x \\
        &\geq \frac{\sigma_\alpha^\alpha C_{\alpha,d}}{\beta} \intrdzero\intrd (u_t(x) - u_t(x+z))^2 p_t'(x) \frac{\der x \der z}{\normof{z}^{d + \alpha}}  + \frac{ \sigma_2^2}{\beta} \intrd \normof{\nabla u_t}^2 \der x \\
        &\geq \frac{1}{\gamma \beta } \left\{ \intrd \left(\frac{p_t}{p_t'}\right)^{\beta} p_t' \der x - \left( \intrd \left(\frac{p_t}{p_t'}\right)^{\beta/2} p_t' \der x \right)^2 \right\} \\
         &= \frac{1}{\gamma\beta}  \left\{ e^{(\beta - 1)\renyi[\beta]{p_t}{p_t'}} - e^{(\beta - 2)\renyi[\beta/2]{p_t}{p_t'}} \right\} \\
          &\geq \frac{1}{\gamma \beta} \left\{ e^{(\beta - 1)\renyi[\beta]{p_t}{p_t'}} - e^{(\beta - 2)\renyi[\beta]{p_t}{p_t'}} \right\} \by{$\beta \mapsto \mathrm{R}_\beta$ is non-decreasing} \\
          &= \frac{1}{\gamma \beta} \ebeta (p_t, p_t') \left( 1 - e^{-\renyi[\beta]{p_t}{p_t'}} \right).
    \end{align*}
    This completes the proof of the first assertion. For the second one, one can follow the same arguments as above. 
    \hfill $\blacksquare$

\subsection{Omitted proofs of \Cref{sec:pure-jump-case}}
\label{sec:omitted-proofs-pure-jump-jv-asssumptions}

The following lemma is a formal proof of \Cref{eq:weighted-BBM} and serves as a motivation for our approach in \Cref{sec:pure-jump-case}. It can be seen as a weighted version of the celebrated Bourgain-Brezis-Mironescu theorem (BBM formula), under stronger assumptions.
Note that a similar observation can be found in \citep[Proposition B.1]{he_separation_2024-1}.

\begin{restatable}[The weighted BBM formula]{lemma}{lemmaWetightedBBM}
    \label{lemma:weighted-BBM}
    Let $f \in \cbtwo$ and $\mu$ any Borel probability measure on $\Rd$. We have
    \begin{align*}
        \ecal_{\alpha, \mu}(f,f) \underset{\alpha \to 2^-}{\longrightarrow}  \ecal_{2, \mu}(f,f).
    \end{align*}
\end{restatable}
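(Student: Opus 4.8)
The plan is to push the limit $\alpha\to 2^-$ inside the $\mu$-integral. First I would change variables $z=y-x$ in the inner (Lebesgue) integral defining $\ecal_{\alpha,\mu}$, writing
\begin{align*}
    \ecal_{\alpha,\mu}(f,f) = \intrd g_\alpha(x)\,\der\mu(x), \qquad g_\alpha(x) := \frac{1}{2}\,\calphad \intrd \frac{(f(x)-f(x+z))^2}{\normof{z}^{d+\alpha}}\,\der z .
\end{align*}
It then suffices to establish (i) the pointwise limit $g_\alpha(x)\to\normof{\nabla f(x)}^2$ for every $x\in\Rd$, and (ii) a bound $g_\alpha(x)\le M$ uniform in $x$ and in $\alpha$ over some interval $[\alpha_0,2)$; since $\mu$ is a probability measure, the constant $M$ is $\mu$-integrable, and the dominated convergence theorem then gives $\ecal_{\alpha,\mu}(f,f)\to\intrd\normof{\nabla f}^2\der\mu = \ecal_{2,\mu}(f,f)$.

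For (i) I would split $\intrd = \int_{\normof{z}\le1}+\int_{\normof{z}>1}$. On $\{\normof{z}>1\}$, bounding $(f(x)-f(x+z))^2\le4\cbnorm{f}^2$ and integrating $\normof{z}^{-d-\alpha}$ gives a term of order $\landau{\calphad}$, which vanishes because $\calphad\to0$ as $\alpha\to2^-$ by \Cref{lemma:constants-asymptotics}. On $\{\normof{z}\le1\}$, the $\cbtwo$ regularity yields the Taylor expansion $(f(x+z)-f(x))^2 = \langle\nabla f(x),z\rangle^2 + \landau{\normof{z}^3}$ with the remainder controlled by $\cbnorm{f}$; its contribution integrates to $\landau{1/(3-\alpha)}$, again killed by the factor $\calphad\to0$. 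Writing the leading term in polar coordinates $z=r\theta$, it factors as $\bigl(\int_0^1 r^{1-\alpha}\der r\bigr)\bigl(\intsd\langle\nabla f(x),\theta\rangle^2\der\theta\bigr) = \frac{1}{2-\alpha}\cdot\frac{\normof{\nabla f(x)}^2}{d}\intsd\der\theta$, using $\int_0^1 r^{1-\alpha}\der r = \frac{1}{2-\alpha}$ (finite since $\alpha<2$) and the symmetry identity $\intsd\theta_i\theta_j\der\theta = \frac{\delta_{ij}}{d}\intsd\der\theta$. Hence $g_\alpha(x) = \frac12\calphad\cdot\frac{\intsd\der\theta}{d(2-\alpha)}\,\normof{\nabla f(x)}^2 + \landau{\calphad}$; since \Cref{lemma:constants-asymptotics} gives $\calphad\sim(2-\alpha)\pi^{-d/2}d\,\gammaof{d/2}$ while $\intsd\der\theta = 2\pi^{d/2}/\gammaof{d/2}$, the prefactor tends to $1$, which is the claimed pointwise limit.

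For (ii) I would run the same split with the cruder bound $|f(x)-f(x+z)|\le\cbnorm{f}\normof{z}$ on $\{\normof{z}\le1\}$, obtaining $g_\alpha(x)\le\frac12\calphad\cbnorm{f}^2\cdot\frac{\intsd\der\theta}{2-\alpha} + \landau{\calphad}$, and $\calphad/(2-\alpha)$ stays bounded as $\alpha\to2^-$ (again by \Cref{lemma:constants-asymptotics}), so $g_\alpha\le M$ holds uniformly for $\alpha$ near $2$. I expect the main obstacle to be the borderline singularity at $z=0$: the inner integral $\int_{\normof{z}\le1}\normof{z}^{2-d-\alpha}\der z$ blows up like $1/(2-\alpha)$ as $\alpha\uparrow2$, so one must track the normalization $\calphad$ precisely enough — via the asymptotics of \Cref{lemma:constants-asymptotics} and the closed form $\intsd\der\theta = 2\pi^{d/2}/\gammaof{d/2}$ — to see that the product has a finite limit equal to exactly $\normof{\nabla f(x)}^2$. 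This exact cancellation of the fractional normalization against the surface area of the sphere is the weighted analogue of the Bourgain--Brezis--Mironescu phenomenon; everything else is routine once the Taylor remainder is controlled by $\cbtwo$.
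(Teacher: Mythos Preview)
Your proposal is correct and takes essentially the same approach as the paper: the paper also splits the $z$-integral at $\normof{z}=1$, bounds the far part by $\landau{\calphad}$, Taylor-expands on the near part to isolate the leading $\langle\nabla f(x),z\rangle^2$ term with an $\landau{\normof{z}^3}$ remainder, and finishes via polar coordinates together with the asymptotics of \Cref{lemma:constants-asymptotics}. The only organizational difference is that you frame it as pointwise convergence of $g_\alpha(x)$ plus a uniform bound and dominated convergence in $\mu$, whereas the paper carries the $\mu$-integral along and passes to the limit directly in the full double integral; since all the estimates are uniform in $x$ (controlled by $\cbnorm{f}$), the two are equivalent.
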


\begin{proof}
    We write $\ecal_{\alpha, \mu}(f,f) = \frac{C_{d,\alpha}}{2}\left(I_1 + I_2\right)$, where 
    \begin{align*}
        I_1 :=  \intrd \int_{\normof{z} \leq 1} \frac{(f(x+z) - f(x))^2}{\normof{z}^{d + \alpha}} \der \mu(x) \der z,
    \end{align*} and 
    \begin{align*}
        I_2 := \intrd \int_{\normof{z} > 1} \frac{(f(x+z) - f(x))^2}{\normof{z}^{d + \alpha}} \der \mu(x) \der z \leq 2 \cbnorm{f}^2 \int_{\normof{z} > 1} \frac{\der z} {\normof{z}^{d + \alpha}} = \frac{C}{\alpha}.
    \end{align*}
    Here, $C< \infty$ is a constant independent of $\alpha$. 
    On the other hand, by the Taylor-Lagrange formula, for every $x,z \in \Rd$,  $f(x+z) - f(x)= \langle z, \nabla f(x) \rangle + \langle z, \nabla^2f(\xi_{x,z}) z \rangle / 2$, with $\xi_{x,z} \in  \{ x + a z,~ a \in [0,1] \}$. Therefore, we can further write $I_1 = I_3 + I_4$ as
    \begin{align*}
        I_3 :&= \intrd \int_{\normof{z} \leq 1} \frac{ \frac1{4}\left\langle z, \nabla^2f(\xi_{x,z}) z \right\rangle^2 + \left\langle z, \nabla^2f(\xi_{x,z}) z \right\rangle \left\langle z, \nabla f(x) \right\rangle }{\normof{z}^{d + \alpha}} \der \mu(x) \der z \\
        &\lesssim \cbnorm{f}^2 \int_{\normof{z} \leq 1} \frac{\der z}{\normof{z}^{d+\alpha - 3}} = \frac{C'}{3 - \alpha}
    \end{align*}
    with $C'< \infty$ being another constant independent of $\alpha$, and
    \begin{align*}
        I_4: = \intrd \int_{\normof{z} \leq 1} \frac{\langle z, \nabla f (x) \rangle^2}{\normof{z}^{d+\alpha}} \der \mu(x) \der z = \frac{\mathrm{Vol}(\Sd)}{d} \intrd \int_0^1 \normof{\nabla f(x)}^2 \der \mu(x) \frac{\der r}{r^{\alpha - 1}}, 
    \end{align*} thanks to %
    a spherical change of variable and the invariance by rotation.
    
    Therefore, by \Cref{lemma:constants-asymptotics}, we have
    \begin{align*}
        \ecal_{\alpha, \mu}(f,f) \underset{\alpha \to 2^-}{\longrightarrow} \frac1{2} (2 - \alpha) \pi^{-d / 2 } d \gammaof{\frac{d}{2}} \, \frac{2 \pi^{d / 2}}{d \gammaof{\frac{d}{2}}}  \frac1{2 - \alpha} \intrd \normof{\nabla f(x)}^2 \der \mu(x) =  \ecal_{2,\mu}(f,f).
    \end{align*}
    This completes the proof.
\end{proof}

We can now discuss the proof of \Cref{thm:dp_guarantees_sdes_pure_jump}, which is based on two technical lemmas. 
The next lemma is an integral representation of the Dirichlet form $\ecal_{\alpha,\mu}$ appearing in our Rényi flow computations.

\begin{lemma}[Spherical parameterization of the Dirichlet form $ \ecal_{\alpha, \mu}(f,f)$]
    \label{lemma:spherical-parmaterization}
    Consider $u : \Rd \longrightarrow \R$ a $\mathcal{C}^1$ function with bounded gradient. Let $\mu \in \mathcal{P}(\Rd)$. Then, 
    \begin{align*}
       \ecal_{\alpha, \mu} (u,u) = \frac{\calphad \sigma_{d - 1}}{2d} \int_0^{+\infty} \mathscr{J} (r, u, \mu) \frac{\der r}{r^{\alpha - 1}},
    \end{align*}
    where $\sigma_{d - 1} := \mathrm{Vol}(\Sd)=\frac{2 \pi^{d/2}}{\gammaof{\frac{d}{2}}} $ and $\mathscr{J} (\cdot, u, \mu) : \R_+ \to \R_+$ is a continuous function satisfying
    \begin{align*}
        \mathscr{J} (0, u, \mu) =  \ecal_{2, \mu} (u,u) = \normof{\nabla u}^2_{L^2(\mu)}.
    \end{align*}
\end{lemma}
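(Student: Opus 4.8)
The plan is to reduce $\ecal_{\alpha,\mu}(u,u)$ to a one-dimensional integral in a radial variable by an explicit change of variables, and then read off what $\mathscr{J}$ must be. First I would use the substitution $y = x+z$ in \Cref{def:drichlet-forms} to write
\[
\ecal_{\alpha,\mu}(u,u) = \frac{\calphad}{2} \intrd \intrd \frac{(u(x) - u(x+z))^2}{\normof{z}^{d+\alpha}}\, \der z\, \der\mu(x),
\]
and then pass to spherical coordinates $z = r\theta$, $r>0$, $\theta\in\Sd$, with $\der z = r^{d-1}\,\der r\,\der\theta$. Since the integrand is nonnegative, Tonelli's theorem allows all interchanges of integration, giving
\[
\ecal_{\alpha,\mu}(u,u) = \frac{\calphad}{2} \int_0^{+\infty} \frac{1}{r^{\alpha+1}} \left( \intrd \intsd (u(x) - u(x+r\theta))^2 \,\der\theta\,\der\mu(x)\right) \der r.
\]
This dictates the definition
\[
\mathscr{J}(r,u,\mu) := \frac{d}{\sigma_{d-1}\,r^2} \intrd \intsd (u(x) - u(x+r\theta))^2 \,\der\theta\,\der\mu(x), \qquad r>0,
\]
with which the displayed identity becomes exactly $\ecal_{\alpha,\mu}(u,u) = \frac{\calphad\sigma_{d-1}}{2d}\int_0^{+\infty}\mathscr{J}(r,u,\mu)\,\der r/r^{\alpha-1}$, as claimed.

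It then remains to show that $\mathscr{J}(\cdot,u,\mu)$ extends continuously to $r=0$ with the stated value. The key elementary bound is that, since $u$ is $\mathcal{C}^1$ with bounded gradient, the fundamental theorem of calculus gives $u(x+r\theta) - u(x) = r\int_0^1 \langle \nabla u(x+sr\theta),\theta\rangle\,\der s$, so that $(u(x+r\theta)-u(x))^2/r^2 \leq \normof{\nabla u}_\infty^2$ uniformly in $(x,\theta,r)$. Because $\mu$ is a probability measure and $\Sd$ has finite surface measure, this constant is integrable on $\Rd\times\Sd$, so dominated convergence applies along any sequence $r_n\to r_0\in[0,\infty)$ — the integrands converging pointwise by continuity of $u$ — which yields continuity of $r\mapsto\mathscr{J}(r,u,\mu)$ on $\R_+$.

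Finally, to evaluate $\mathscr{J}(0,u,\mu)$ I would let $r\to 0$ in the integral representation: the integrand $\left(\int_0^1\langle\nabla u(x+sr\theta),\theta\rangle\,\der s\right)^2$ converges pointwise to $\langle\nabla u(x),\theta\rangle^2$ and is dominated by $\normof{\nabla u}_\infty^2$, so
\[
\mathscr{J}(0,u,\mu) = \frac{d}{\sigma_{d-1}} \intrd \intsd \langle\nabla u(x),\theta\rangle^2 \,\der\theta\,\der\mu(x).
\]
The inner integral is evaluated using the standard spherical second-moment identity $\intsd \theta_i\theta_j\,\der\theta = (\sigma_{d-1}/d)\,\delta_{ij}$ (off-diagonal terms vanish by reflection symmetry $\theta_i\mapsto -\theta_i$; diagonal terms are equal and sum to $\intsd\normof{\theta}^2\,\der\theta = \sigma_{d-1}$), hence $\intsd\langle\nabla u(x),\theta\rangle^2\,\der\theta = (\sigma_{d-1}/d)\,\normof{\nabla u(x)}^2$. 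Substituting back gives $\mathscr{J}(0,u,\mu) = \intrd\normof{\nabla u(x)}^2\,\der\mu(x) = \normof{\nabla u}^2_{L^2(\mu)} = \ecal_{2,\mu}(u,u)$. The only mildly delicate point is the justification of the interchanges of limits and integrals, but since every relevant integrand is bounded by $\normof{\nabla u}_\infty^2$ on the finite measure space $\Rd\times\Sd$, this is routine; I expect no genuine obstacle in this lemma.
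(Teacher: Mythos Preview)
Your proof is correct and follows essentially the same approach as the paper: both pass to spherical coordinates, identify $\mathscr{J}(r,u,\mu)$ via the fundamental theorem of calculus as an average of $\big(\tfrac{1}{r}\int_0^r\langle\theta,\nabla u(x+s\theta)\rangle\,\der s\big)^2$, invoke dominated convergence with the uniform bound $\normof{\nabla u}_\infty^2$ for continuity and the limit at $r=0$, and finish with the spherical second-moment identity. The only cosmetic difference is that the paper writes $\mathscr{J}$ directly in terms of the gradient integral whereas you define it via $(u(x)-u(x+r\theta))^2/r^2$ and then apply the FTC, which is of course the same thing.
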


\begin{proof}
We apply a spherical change of coordinates to get
    \begin{align*}
        2\ecal_{\alpha, \mu}\left(u, u\right) &= \calphad\intrd \intrd (u(x+ z) - u(x))^2   \frac{\der \mu(x) \der z }{\normof{z}^{d + \alpha}} \\
        & = \calphad \int_0^{+\infty} \intsd \intrd \left( \int_0^r \langle \theta, \nabla u(x + s\theta) \rangle \der s\right)^2  \der \mu(x) \der\sigma (\theta) \frac{\der r}{r^{1 + \alpha}} ,
    \end{align*}
        where $\sigma$ is the Hausdorff measure on the sphere $\Sd$. By Tonelli's theorem, we can rewrite the above integral as
    \begin{align*}
         2\ecal_{\alpha, \mu}\left(u, u\right) = \kalphad \int_0^{+\infty} \mathscr{J} (r, u, \mu) \frac{\der r}{r^{\alpha - 1}},
    \end{align*}
   where the function $\mathscr{J}(\cdot, u, \mu)$ defined for $r> 0$ by
    \begin{align}\label{e:int}
        \mathscr{J}(r,u, \mu) := \frac{d}{ \sigma_{d - 1}} \int_{\Rd \times \Sd}  \left( \frac1{r} \int_0^r \langle \theta, \nabla u(x + s\theta) \rangle \der s \right)^2 \der (\mu \otimes \sigma) (x, \theta)
    \end{align} with  $\kalphad := \frac{\calphad \sigma_{d - 1}}{d}.$
 We note that the integral above is clearly finite under our assumptions on $u$. Moreover, we note that for all $(x, \theta) \in \Rd \times \Sd$, we have by continuity of $\nabla u$ that
    \begin{align*}
        \frac1{r} \int_0^r \langle \theta, \nabla u(x + s\theta) \rangle \der s \underset{r \to 0^+}{\longrightarrow} \langle \theta, \nabla u (x) \rangle .  
    \end{align*}
    Moreover, by the Cauchy-Schwarz inequality,
    \begin{align*}
        \left( \frac1{r} \int_0^r \langle \theta, \nabla u(x + s\theta) \rangle \der s\right)^2 \leq \normof{\nabla u}^2_{\infty} < +\infty.
    \end{align*}
    Therefore, by the dominated convergence theorem with respect to $\mu \otimes \sigma$, 
    \begin{align*}
         \mathscr{J}(r,u,\mu) \underset{r \to 0^+}{\longrightarrow}  \frac{d}{ \sigma_{d - 1}}  \int_{\Rd \times \Sd}    \langle \theta, \nabla u (x) \rangle^2  \der (\mu \otimes \sigma) (x, \theta).
    \end{align*}
    By invariance by rotation and Tonelli's theorem, we deduce that
    \begin{align*}
        \mathscr{J}(r,u,\mu) \underset{r \to 0^+}{\longrightarrow}  \frac{d}{ \sigma_{d - 1}}  \intrd \normof{\nabla u (x)}^2 \der \mu(x) \intsd \langle \theta, e_1 \rangle^2  \der \sigma (\theta) = \normof{\nabla u}^2_{L^2(\mu)},
    \end{align*}
    where $e_1 \in \Sd$ is arbitrary. Therefore, we can continuously extend the function $\mathscr{J}(\cdot, u, \mu)$ on $\R_+$ by setting $\mathscr{J}(0,u, \mu) = \normof{\nabla u}^2_{L^2(\mu)} = \ecal_{2,\mu} (u, u) $. By the dominated convergence, we also obtain that $\mathscr{J}(\cdot, u, \mu)$ is continuous on  $\R_+$. This concludes the proof.
\end{proof}

\begin{lemma}[Lower bound on the Dirichlet form]
    \label{lemma:dirichlet-lower-bound-pure-jump-case}
    Assume that $\beta\ge2$, and that $v_t$ is twice continuously differentiable for all $t \in(0, T]$ so that 
    \begin{align}  
        \sup_{S \simeq S'} \sup_{t\leq T} \left( %
        \normof{\nabla v_t^{\beta / 2}}_\infty + \normof{\nabla^2 v_t^{\beta / 2}}_\infty \right) < +\infty, \quad  %
        \inf_{S \simeq S'  } \inf_{t \leq T } \normof{\nabla v_t^{\beta / 2}}_{L^2(p_t')} >  0,
    \end{align}
    where $S\simeq S'$ ranges over all neighboring datasets in $\bigcup_{n \geq 1} \zcal^n$. Then there exists a constant $R > 0$ (that may depend only on $d$ and $\beta$ as well as $T$) such that for all $t\in [0, T]$
    \begin{align*}
      2\ecal_{\alpha, p_t'} \left(v_t^{\beta / 2}, v_t^{\beta / 2}\right) \geq   \frac{\calphad \sigma_{d - 1}}{2d (2 - \alpha)} R^{2 - \alpha} \ecal_{2,p_t'}\left(v_t^{\beta / 2}, v_t^{\beta / 2}\right). 
    \end{align*}
\end{lemma}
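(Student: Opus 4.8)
The plan is to feed the spherical representation of \Cref{lemma:spherical-parmaterization} into a \emph{uniform} quantitative version of the continuity of $r\mapsto\mathscr{J}(r,u,\mu)$ at $r=0$. Write $u_t := v_t^{\beta/2}$, which is $\mathcal{C}^1$ with bounded gradient by \Cref{ass:regularity-conditions} and \Cref{ass:c2-bounded-assumption}, so \Cref{lemma:spherical-parmaterization} applies with $u = u_t$ and $\mu = p_t'$ and gives
\[
2\,\ecal_{\alpha,p_t'}\!\left(u_t,u_t\right) = \frac{\calphad\,\sigma_{d-1}}{d}\int_0^{+\infty}\mathscr{J}(r,u_t,p_t')\,\frac{\der r}{r^{\alpha-1}},
\qquad \mathscr{J}(0,u_t,p_t') = \ecal_{2,p_t'}\!\left(u_t,u_t\right).
\]
Since $\mathscr{J}\ge0$, for any $R>0$ it suffices to bound the integral over $(0,R]$ from below, so the only real work is to produce a \emph{single} radius $R>0$, independent of $t\le T$ and of $S\simeq S'$, on which $\mathscr{J}(r,u_t,p_t')\ge\tfrac12\,\mathscr{J}(0,u_t,p_t')$.

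To get this I would first record the three constants furnished by \Cref{ass:c2-bounded-assumption}: $M_1 := \sup_{S\simeq S'}\sup_{t\le T}\normof{\nabla u_t}_\infty<\infty$, $M_2 := \sup_{S\simeq S'}\sup_{t\le T}\normof{\nabla^2 u_t}_\infty<\infty$, and $m_0 := \inf_{S\simeq S'}\inf_{t\le T}\normof{\nabla u_t}_{L^2(p_t')}>0$. Writing $g_r(x,\theta) := \frac1r\int_0^r\langle\theta,\nabla u_t(x+s\theta)\rangle\,\der s$ and $g_0(x,\theta) := \langle\theta,\nabla u_t(x)\rangle$ for the integrands of $\mathscr{J}(r,u_t,p_t')$ and $\mathscr{J}(0,u_t,p_t')$ in \eqref{e:int}, the Taylor/mean-value bound $\normof{\nabla u_t(x+s\theta)-\nabla u_t(x)}\le s M_2$ gives $|g_r-g_0|\le\tfrac r2 M_2$ and $|g_r|\le M_1+\tfrac r2 M_2$, hence the pointwise estimate $|g_r^2-g_0^2|\le\tfrac r2 M_2\bigl(2M_1+\tfrac r2 M_2\bigr)$. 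Integrating this against $p_t'\otimes\sigma$, a measure of total mass $\sigma_{d-1}$, and absorbing the prefactor $d/\sigma_{d-1}$ in the definition of $\mathscr{J}$, I obtain for all $r>0$, $t\le T$, $S\simeq S'$,
\[
\bigl|\mathscr{J}(r,u_t,p_t')-\mathscr{J}(0,u_t,p_t')\bigr|\ \le\ d\Bigl(r\,M_1 M_2+\tfrac{r^2}{4}M_2^2\Bigr).
\]

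Finally I would choose $R\in(0,1]$ small enough that $d\bigl(R\,M_1 M_2+\tfrac14 R^2 M_2^2\bigr)\le\tfrac12 m_0^2$ — this is possible since the left side tends to $0$ as $R\to0^+$ while $m_0^2/2>0$ is fixed, and if $M_2=0$ one simply takes $R=1$; such $R$ depends only on $d$, $\beta$, $T$ (through $M_1,M_2,m_0$). Then for every $r\in(0,R]$ the displayed bound is $\le\tfrac12 m_0^2\le\tfrac12\normof{\nabla u_t}_{L^2(p_t')}^2=\tfrac12\,\mathscr{J}(0,u_t,p_t')$, so $\mathscr{J}(r,u_t,p_t')\ge\tfrac12\,\ecal_{2,p_t'}(u_t,u_t)$ there. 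Restricting the spherical integral to $(0,R]$, using this lower bound, and computing $\int_0^R r^{1-\alpha}\,\der r=R^{2-\alpha}/(2-\alpha)$ (finite since $\alpha\in(1,2)$), I get
\[
2\,\ecal_{\alpha,p_t'}(u_t,u_t)\ \ge\ \frac{\calphad\,\sigma_{d-1}}{d}\cdot\frac12\,\ecal_{2,p_t'}(u_t,u_t)\cdot\frac{R^{2-\alpha}}{2-\alpha}\ =\ \frac{\calphad\,\sigma_{d-1}}{2d(2-\alpha)}\,R^{2-\alpha}\,\ecal_{2,p_t'}(u_t,u_t),
\]
which is exactly the stated inequality. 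The one genuinely delicate point is the uniformity of the two estimates over all $t\le T$ and all neighboring pairs $S\simeq S'$ — without it $R$ would degenerate to $0$ — and this uniformity is precisely what \Cref{ass:c2-bounded-assumption} is designed to provide; everything else is bookkeeping of constants.
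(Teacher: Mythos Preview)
Your proof is correct and follows essentially the same route as the paper's: both apply the spherical representation of \Cref{lemma:spherical-parmaterization}, establish a uniform-in-$(t,S,S')$ Lipschitz bound $|\mathscr{J}(r,u_t,p_t')-\mathscr{J}(0,u_t,p_t')|\lesssim r$, combine it with the uniform lower bound $\mathscr{J}(0,u_t,p_t')\ge m_0^2>0$ to extract a single radius $R$ on which $\mathscr{J}\ge\tfrac12\mathscr{J}(0,\cdot)$, and then integrate $r^{1-\alpha}$ over $(0,R]$. The only cosmetic difference is that the paper bounds $|\partial_r F(\theta,x,r)|$ directly to get $|\partial_r\mathscr{J}|\le d\,M_1M_2$, whereas you bound $|g_r-g_0|$ by the mean-value theorem and then $|g_r^2-g_0^2|$; both yield the same leading term $d\,M_1M_2\,r$ and the same choice of $R$.
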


\begin{proof}
    fix $\beta > 0$.
    Let us %
    introduce $u_t := v_t^{\beta / 2}$ for all $t\in [0,T]$. By our assumptions, 
    \Cref{lemma:spherical-parmaterization} gives us the spherical representation of the Dirichlet form $  \ecal_{\alpha, p_t'}$:
    \begin{align*}
       \ecal_{\alpha, p_t'} (u_t,u_t) = \frac{\calphad \sigma_{d - 1}}{2d} \int_0^{+\infty} \mathscr{J} (r, u_t, p_t') \frac{\der r}{r^{\alpha - 1}}.
    \end{align*}
  Let us denote the integrand in $\mathscr{J}(\cdot, u_t, p_t')$ (see \eqref{e:int}) by
    \begin{align*}
        F(\theta, x, r) := \left( \frac1{r} \int_0^r \langle \theta, \nabla u (x + s \theta) \rangle\der s  \right)^2.
    \end{align*}
    We have 
    \begin{align*}
        \left| \frac{\partial F}{\partial r} \right| &= 2\left| \frac1{r} \int_0^r \langle \theta, \nabla u (x + s \theta) \rangle\der s  \left( \frac{-1}{r^2}  \int_0^r \langle \theta, \nabla u (x + s \theta) \rangle\der s + \frac1{r^2} \int_0^r \langle \theta, \nabla u (x + r \theta)\rangle  \der s \right) \right| \\
      & \leq  2 \normof{\nabla u}_{\infty}  \left| \frac1{r^2} \int_0^r \int_s^r \left\langle \theta, \nabla^2 u (x + l \theta) \theta\right\rangle \der l  \der s  \right| \\
       &\leq \normof{\nabla u}_{\infty}\normof{\nabla^2 u}_{\infty} < +\infty.
    \end{align*}
    Therefore, we can differentiate under the sum and use the expression of $\mathscr{J}(\cdot, u_t, p_t')$ (given in \Cref{lemma:spherical-parmaterization}) to obtain that
    \begin{align*}
        \left| \frac{\partial \mathscr{J}}{\partial r} (r, u_t, p_t') \right| \leq C_1 := d \normof{\nabla u}_{\infty}\normof{\nabla^2 u}_{\infty} < \infty.
    \end{align*} In particular, for all $r>0$ and $t\in [0,T]$,
    $$|\mathscr{J}(r, u_t, p_t')- \mathscr{J}(0, u_t, p_t')|\le C_1r.$$
    On the other hand, by our assumptions, we have $C_2 :=  \inf_{S \simeq S'  } \inf_{t \leq T } \normof{\nabla u_t}^2_{L^2(p_t')} >  0$. Thus, for all $t\in [0,T],$
    $$\mathscr{J}(0, u_t, p_t')\ge C_2.$$
       Therefore, %
   there exists $R>0$ (potentially depending only on $\beta$ and $d$ as well as $T$) such that for all $r \leq R$,  $\mathscr{J}(r, u_t, p_t') \geq (1/2) \mathscr{J}(0, u_t, p_t') $. Thus, by \Cref{lemma:spherical-parmaterization}, for all $r \leq R$, we have
         \begin{align*}
       2\ecal_{\alpha, p_t'} (u_t,u_t) &\geq \frac{\calphad \sigma_{d - 1}}{d} \int_0^{R} \mathscr{J} (r, u_t, p_t') \frac{\der r}{r^{\alpha - 1}} \\
       &=  \frac{\calphad \sigma_{d - 1}}{2d} \ecal_{2,p_t'}\left(v_t^{\beta / 2}, v_t^{\beta / 2}\right) \int_0^{R} \frac{\der r}{r^{\alpha - 1}}=  \frac{\calphad \sigma_{d - 1}}{2d (2 - \alpha)} R^{2 - \alpha} \ecal_{2,p_t'}\left(v_t^{\beta / 2}, v_t^{\beta / 2}\right). 
    \end{align*}
    This completes the proof.
\end{proof}

We now present the proof of our RDP guarantees for SDEs driven by pure-jump Lévy noise.

\paragraph{Proof of \Cref{thm:dp_guarantees_sdes_pure_jump}.}

    We start the proof similarly to the proof of \Cref{thm:multifractal-case-finite-sensitivity}, with the difference that now $\sigma_2 = 0$ (recall that we denote $\sigma :=\sigma_\alpha$ to simplify the notation).
    By \Cref{thm:renyi-flow-with-Dirichlet-forms}, for $t> 0$, 
     \begin{align*}
        \timeder \renyi{p_t}{p_t'} \leq -\frac{2\sigma^\alpha }{\beta - 1} \frac{\ecal_{\alpha, p_t'}\left(v_t^{\beta / 2}, v_t^{\beta/2}\right)}{\ebeta (p_t, p_t')}  + \mathrm{R}_{\mathrm{potential}}. 
    \end{align*}
    We split the first term on the right-hand side into two halves and use \Cref{lemma:bregman-integral-lower-bound} to obtain
    \begin{align*}
        \timeder \renyi{p_t}{p_t'} \leq  -\frac{1}{2\gamma (\beta - 1)} \left( 1 - e^{-\renyi[\beta]{p_t}{p_t'}} \right) - \frac{\sigma^\alpha }{(\beta - 1)} \frac{\ecal_{\alpha, p_t'}\left(v_t^{\beta / 2}, v_t^{\beta/2}\right)}{\ebeta (p_t, p_t')} + \mathrm{R}_{\mathrm{potential}}.
    \end{align*}
    We can now apply \Cref{lemma:dirichlet-lower-bound-pure-jump-case}, which gives that
    \begin{align*}
        \timeder \renyi{p_t}{p_t'}& \leq  - \frac{1 - e^{-\renyi[\beta]{p_t}{p_t'}} }{2\gamma (\beta - 1)}  - \frac{\sigma^\alpha \calphad \sigma_{d - 1} R^{2 - \alpha} }{4 d (\beta - 1) (2 - \alpha)} \frac{\ecal_{2, p_t'}\left(v_t^{\beta / 2}, v_t^{\beta/2}\right)}{\ebeta (p_t, p_t')}  + \mathrm{R}_{\mathrm{potential}}.
    \end{align*}
    By the Cauchy-Schwarz and Young inequalities, we have for all $\lambda > 0$ that
    \begin{align*}
        \mathrm{R}_{\mathrm{potential}} &=  \beta \intrd v_t^{\beta - 1} \frac{ \left\langle \nabla v_t, \nabla \widehat{\mathcal{R}}_{S'} - \nabla \er \right\rangle }{\ebeta (p_t, p_t')} p_t' \der x \\
        &\leq \frac{\beta}{\ebeta (p_t, p_t')} \left( \frac{\lambda}{2} \intrd v_t^{\beta - 2} \normof{\nabla v_t}^2 p_t'\der x + \frac{1}{2 \lambda} \intrd \normof{\nabla \er - \nabla \erprime}^2 v_t^\beta p_t' \der x  \right) \\
        &\leq \frac{2 \lambda}{\beta} \frac{\ecal_{2, p_t'}\left(v_t^{\beta / 2}, v_t^{\beta/2}\right)}{\ebeta (p_t, p_t')} + \frac{ \beta \sensitivity_g^2}{2 \lambda n^2}.
    \end{align*}
    Therefore, we make the following choice for the parameter $\lambda
        := \frac{\sigma^\alpha \calphad \sigma_{d-1}}{8d (\beta-1) (2 - \alpha)} R^{2 - \alpha}$.
    This implies that
    \begin{align*}
        \timeder \renyi{p_t}{p_t'} &\leq  -\frac{1}{2 \gamma (\beta - 1)} \left( 1 - e^{-\renyi[\beta]{p_t}{p_t'}} \right)  + \frac{4 \sensitivity_g^2 d (2 - \alpha) (\beta - 1)}{n^2 \sigma^\alpha \calphad \sigma_{d-1}} \\
        &= -\frac{1}{2 \gamma (\beta - 1)} \left( 1 - e^{-\renyi[\beta]{ p_t}{p_t'}} \right)  +  \frac{4  \sensitivity_g^2 (2 - \alpha) d \gammaof{\frac{d}{2}} \gammaof{1 - \frac{\alpha}{2}} (\beta - 1)}{ \alpha 2^{\alpha}\sigma^\alpha  R^{2 - \alpha} n^2 \gammaof{\frac{d + \alpha}{2}}} .
    \end{align*}
    We introduce %
    $a := \frac{1}{2 \gamma (\beta - 1)}$ and $K_n := \frac{\kalphad (\beta - 1) \sensitivity_g^2}{\sigma^\alpha n^2}$,
    with
    $\kalphad := \frac{4 (2 - \alpha) d \gammaof{\frac{d}{2}} \gammaof{1 - \frac{\alpha}{2}}}{ \alpha 2^{\alpha} R^{2 - \alpha}\gammaof{\frac{d + \alpha}{2}}}$.
       Thus, 
    \begin{align}
        \label{eq:proof-pure-jump-differential-inequality}
        \timeder \renyi{p_t}{p_t'} \leq K_n - a \left( 1 - e^{-\renyi[\beta]{p_t}{p_t'}} \right).
    \end{align}
    We observe that \Cref{ass:regularity-conditions} implies the continuity of $t \mapsto \renyi{p_t}{p_t'} $ at $t=0$, by the dominated convergence theorem. 
    The claim follows immediately from the local version of \Cref{lemma:differential-exponential-inequality} (see \Cref{rk:local-version-differential-inequality}) by noting that both SDEs are initialized with the same probability distributions.
    \hfill $\blacksquare$

\subsection{Omitted proofs of \Cref{sec:dp-heavy-tailed-gradient-descent}}
\label{sec:proofs-discrete-pure-jump}

We give below the proof of \Cref{thm:discrete-pure-jump} that provides RDP guarantees for discretization of pure-jump SDEs. 

\paragraph{Proof of \Cref{thm:discrete-pure-jump}.}

For any $t>0$ and $\theta\in \R^d$, denote $\mathfrak{D}_t (\theta) = V_k(\theta, t) - V_k'(\theta, t)$. Let us fix $k \in \mathds{N}$ and consider $t_k < t < t_{k+1}$. By utilizing the orthogonal projection and \Cref{ass:finite-sensitivity-on-convex}, we observe that, for all $\theta \in \Rd$,
    \begin{align*}
    \normof{\mathfrak{D}_t(\theta)}^2 = \frac1{n^2} \normof{\Eof{F_2^{(k)}(\Theta_{t_k}) \big| \Theta_t = \theta} + \Eof{F_2^{(k)}(\Theta'_{t_k}) \big| \Theta'_t = \theta}}^2.
    \end{align*}
    Recall that the neighboring datasets $S\simeq S' \in \zcal^n$ are fixed. Let $i_0 \in \{ 1,\dots, n\}$ be the (fixed) only index on which both datasets differ (\ie, $z_{i_0} \neq z_{i_0}'$). As defined in \Cref{sec:discrete-time}, the random batches $\Omega_k$ are independent of the Lévy processes appearing in \Cref{eq:chourasia-recursions}. Therefore, we can condition on the even that $i_0 \in \Omega_k$, and write that
    \begin{align*}
        \normof{\Eof{F_2^{(k)}(\Theta_{t_k}) \big| \Theta_t = \theta}} = \Pof{i_0 \in \Omega_k} \normof{\Eof{F_2^{(k)}(\Theta_{t_k}) \big| \Theta_t = \theta,~ i_0 \in \Omega_k}} \leq \frac{b}{n} \cdot \frac{\mathcal{S}_{g,\mathcal{C}}}{2b} = \frac{\mathcal{S}_{g,\mathcal{C}}}{2n},
    \end{align*}
    and similarly for $\Eof{F_2^{(k)}(\Theta'_{t_k}) \big| \Theta'_t = \theta}$. This gives us that
    \begin{align*}
    \normof{\mathfrak{D}_t(\theta)}^2 \leq \frac{ \mathcal{S}_{g,\mathcal{C}}^2}{n^2}.
    \end{align*}
      Therefore, we can replicate the lines of the proof of \Cref{thm:dp_guarantees_sdes_pure_jump} for $t_k < t < t_{k+1}$, and the only difference is that we now have, for all $\lambda > 0$,
    \begin{align*}
        \mathrm{R}_{\mathrm{potential}} \leq \frac{2 \lambda}{\beta} \frac{\ecal_{2, p_t'}\left(v_t^{\beta / 2}, v_t^{\beta/2}\right)}{\ebeta (p_t, p_t')} + \frac{ \beta }{2 \lambda} \sup_{\theta\in\Rd} \normof{\mathfrak{D}_t(\theta)}^2 \leq \frac{2 \lambda}{\beta} \frac{\ecal_{2, p_t'}\left(v_t^{\beta / 2}, v_t^{\beta/2}\right)}{\ebeta (p_t, p_t')} + \frac{ \beta     \mathcal{S}_{g,\mathcal{C}}^2 }{ \lambda n^2}.
    \end{align*}    
    Therefore, we obtain by the same lines as the proof of \Cref{thm:dp_guarantees_sdes_pure_jump} an equation similar to \Cref{eq:proof-pure-jump-differential-inequality} (formally, $    \mathcal{S}_{g,\mathcal{C}}$ plays the same role as $\sensitivity_g$). Thus, we have
    \begin{align*}
        \timeder \renyi{p_t}{p_t'} \leq K_n - a \left( 1 - e^{-\renyi[\beta]{p_t}{p_t'}} \right), \quad t_k < t < t_{k+1}
    \end{align*}
    with the quantities $a$ and $K_n$ defined in \Cref{thm:discrete-pure-jump}.
    
    \textbf{Case $1$:} By the local version of \Cref{lemma:differential-exponential-inequality} (see \Cref{rk:local-version-differential-inequality}), we obtain
    \begin{align*}
        \renyi{p_t}{p_t'} \leq \lim_{s \to t_k^+} \renyi{p_s}{p_s'} + K_n ( t - t_k), \quad t_k < t < t_{k+1}.
    \end{align*}
    By taking the limit as $t \to {t_{k+1}^-}$ and applying the (assumed) right-continuity of the Rényi divergences, we have
    \begin{align*}
        \lim_{s \to t_{k+1}^-} \renyi{p_s}{p_s'} \leq  \renyi{p_{t_k^+}}{p_{t_k^+}'} + K_n\eta,
    \end{align*}
    where $p_{t_k^-}$ denotes the distribution of $\Theta_{t_k}^-$.
    Now we note that as $t\to t_k^+$, both $\Theta_t$ and $\Theta_t'$ are subjected to the same mapping; hence, by the data processing inequality and the lower semicontinuity of the Rényi divergence, we have
    \begin{align*}
        \lim_{s \to t_{k+1}^-} \renyi{p_s}{p_s'} \leq \lim_{s \to t_k^-} \renyi{p_s}{p_s'} + K_n\eta.
    \end{align*}
    By recursion, we deduce that for all $N \in \mathds{N}$,
    \begin{align*}
        \renyi{p_{N\eta}}{p_{N\eta}'} \leq  \frac{\kalphad (\beta-1) \mathcal{S}_{g,\mathcal{C}}^2}{\sigma^\alpha n^2} N \eta.
    \end{align*}

    \textbf{Case $2$:} $K_n < a$. We show by recursion on $k \in \mathds{N}$ that
    \begin{align}
        \label{eq:recursion-hypothesis}
        \lim_{s \to t_{k}^-} \renyi{p_s}{p_s'} \leq \log \left( \frac{a}{a - K_n} \right).
    \end{align}
    For $k=0$, we have $\renyi{p_s}{p_s'} = 0 < \log \left( \frac{a}{a - K} \right)$. Therefore, we apply again the local version of \Cref{lemma:differential-exponential-inequality} (see \Cref{rk:local-version-differential-inequality}) to obtain,
    \begin{align*}
        \lim_{s \to t_{1}^-} \renyi{p_s}{p_s'} \leq \log \left( \frac{a}{a - K_n} \right).
    \end{align*}
    Now let us assume that \Cref{eq:recursion-hypothesis} is true for some $k \in \mathds{N}$. By the data processing inequality and the lower semicontinuity of the Rényi divergence, we have that
    \begin{align*}
        \renyi{p_{t_k^-}}{p_{t_k^-}'} \leq \lim_{s \to t_{k}^-} \renyi{p_s}{p_s'} \leq \log \left( \frac{a}{a - K_n} \right). 
    \end{align*}
    Therefore, we can apply \Cref{lemma:differential-exponential-inequality} to obtain that
    \begin{align*}
        \lim_{s \to t_{k+1}^-} \renyi{p_s}{p_s'} \leq \log \left( \frac{a}{a - K_n} \right).
    \end{align*}
    We conclude the proof by lower semicontinuity of the R\'{e}nyi divergence.
    \hfill $\blacksquare$

We end this section by giving the multifractal version of the above theorem. This is an immediate application of the previous proof and the results of \Cref{sec:multifractal-case}.

\begin{restatable}{theorem}{thmDiscreteMultifractal}
    \label{thm:discrete-multifractal}
    Consider the same setting as in \Cref{sec:discrete-time} but replace \Cref{eq:discrete-decursion-pure-jump} by
    \begin{equation}\begin{cases}
          X_{k+1} = \Pi_{\mathcal{C}} \left(X_k - \eta \widehat{g}_S(X_k, \Omega_k) + \sigma_\alpha \eta^{1/\alpha} \xi_k + \sigma_2 \sqrt{2\eta} \zeta_k \right),\\
        X_{k+1}' = \Pi_{\mathcal{C}} \left(X_k' - \eta \widehat{g}_{S'}(X_k', \Omega_k) + \sigma_\alpha \eta^{1/\alpha} \xi_k  + \sigma_2 \sqrt{2\eta} \zeta_k  \right)
\end{cases}\end{equation}
with the same notations and $(\zeta_k)_{k\in\N} \sim \mathcal{N}(0, I_d)^{\otimes \N}$, independent of $(\xi_k)_{k\in\N}$.
    Assume that \Cref{ass:regularity-conditions,ass:finite-sensitivity-on-convex,ass:c2-bounded-assumption} hold and that, for all $t>0$,  $p_t'$ satisfies an $\alpha$-stable Poincaré inequality with constant $(\gamma \sigma^\alpha, \gamma \sigma_2^2)$ for some $\gamma>0$.
    Finally assume that $t\mapsto \renyi{p_t}{p_t'}$ is right-continuous.
    Then,
    \begin{align*}
        \renyi{\law{X_k}}{\law(X_k')} \leq \frac{\beta \sensitivity_{g,\mathcal{C}}^2}{2 \sigma_2^2 n^2} k \eta := K_n k\eta, \quad k\in\N. 
    \end{align*}
    If moreover $K_n < (\gamma \beta)^{-1}$, then,
    \begin{align*}
         \renyi{\law{X_k}}{\law(X_k')} \leq -\log \left( 1 - \frac{\gamma \sensitivity_{g,\mathcal{C}}^2 \beta^2}{2 \sigma_2^2 n^2}  \right), \quad k\in\N.
    \end{align*}
\end{restatable}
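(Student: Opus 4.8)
The plan is to combine the R\'enyi-flow computation of \Cref{thm:renyi-flow-with-Dirichlet-forms} with two arguments already carried out in the excerpt: the cancellation of the Gaussian Dirichlet form via Young's inequality from the proof of \Cref{thm:multifractal-case-finite-sensitivity}, and the interpolation-and-recursion scheme from the proof of \Cref{thm:discrete-pure-jump}. First I would build the continuous-time interpolation $(\Theta_t,\Theta_t')$ exactly as in \Cref{sec:discrete-time}, but now carrying on each interval $[t_k,t_{k+1})$ the extra Gaussian increment $\sigma_2\sqrt{2(t-t_k)}\,\zeta_k$ in addition to $\sigma_\alpha(t-t_k)^{1/\alpha}\xi_k$; as in the pure-jump case, the densities $p_t,p_t'$ then solve, for $t_k<t<t_{k+1}$, the Fokker--Planck equations with generator $-\sigma_\alpha^\alpha\fraclap+\sigma_2^2\Delta$ and time-dependent drifts $V_k(t,\cdot),V_k'(t,\cdot)$ given by the conditional expectations of $\pm F_2^{(k)}$. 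Note that \Cref{ass:c2-bounded-assumption}, which in the pure-jump proof is needed to lower bound the non-local Dirichlet form via \Cref{lemma:dirichlet-lower-bound-pure-jump-case}, will not actually be used here, since the Gaussian component supplies the required cancellation directly.

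On each interval $(t_k,t_{k+1})$, \Cref{thm:renyi-flow-with-Dirichlet-forms} gives
\[
\timeder\renyi{p_t}{p_t'}\le-\frac{2\sigma_\alpha^\alpha}{\beta-1}\frac{\ecal_{\alpha,p_t'}\!\left(v_t^{\beta/2},v_t^{\beta/2}\right)}{\ebeta(p_t,p_t')}-\frac{4\sigma_2^2}{\beta}\frac{\ecal_{2,p_t'}\!\left(v_t^{\beta/2},v_t^{\beta/2}\right)}{\ebeta(p_t,p_t')}+\mathrm{R}_{\mathrm{potential}},
\]
where $\mathrm{R}_{\mathrm{potential}}$ now involves the drift difference $\mathfrak{D}_t(\theta):=V_k(t,\theta)-V_k'(t,\theta)$. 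Arguing exactly as in the proof of \Cref{thm:discrete-pure-jump} --- conditioning on the event $\{i_0\in\Omega_k\}$, where $i_0$ is the unique index at which $S$ and $S'$ differ, and using that the orthogonal projection keeps the dynamics inside $\mathcal{C}$ so that \Cref{ass:finite-sensitivity-on-convex} applies --- one gets the uniform bound $\sup_{\theta\in\Rd}\normof{\mathfrak{D}_t(\theta)}^2\le\sensitivity_{g,\mathcal{C}}^2/n^2$. Then, following the proof of \Cref{thm:multifractal-case-finite-sensitivity}, Cauchy--Schwarz and Young's inequality with $\lambda=\sigma_2^2$ bound $\mathrm{R}_{\mathrm{potential}}$ by $\tfrac{2\sigma_2^2}{\beta}\,\ecal_{2,p_t'}(v_t^{\beta/2},v_t^{\beta/2})/\ebeta(p_t,p_t')+\beta\sensitivity_{g,\mathcal{C}}^2/(2\sigma_2^2n^2)$, which cancels half of the Gaussian Dirichlet term. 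Applying \Cref{lemma:bregman-integral-lower-bound} with constants $(\gamma\sigma_\alpha^\alpha,\gamma\sigma_2^2)$ --- here $\sigma_2>0$, so the factor $1/(\gamma\beta)$ is available --- to the remaining combination $\tfrac{2\sigma_\alpha^\alpha}{\beta-1}\ecal_{\alpha,p_t'}+\tfrac{2\sigma_2^2}{\beta}\ecal_{2,p_t'}$ yields, for $t_k<t<t_{k+1}$,
\[
\timeder\renyi{p_t}{p_t'}\le K_n-a\bigl(1-e^{-\renyi[\beta]{p_t}{p_t'}}\bigr),\qquad a:=\frac1{\gamma\beta},\quad K_n:=\frac{\beta\sensitivity_{g,\mathcal{C}}^2}{2\sigma_2^2n^2}.
\]

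To conclude, I would run the recursion of the proof of \Cref{thm:discrete-pure-jump} verbatim: on each open interval the local version of \Cref{lemma:differential-exponential-inequality} (see \Cref{rk:local-version-differential-inequality}) controls $t\mapsto\renyi{p_t}{p_t'}$, and at each deterministic time $t_{k+1}$ both processes undergo the same projection $\Pi_{\mathcal{C}}$, so the data-processing inequality together with lower semicontinuity of the R\'enyi divergence --- and the assumed right-continuity of $t\mapsto\renyi{p_t}{p_t'}$ --- lets the bound be transported across the jump. In Case~1 this produces $\renyi{\law{X_k}}{\law{X_k'}}\le K_nk\eta$ for all $k\in\N$, and in Case~2, when $K_n<a$, the time-uniform bound $\log\!\bigl(a/(a-K_n)\bigr)=-\log\!\bigl(1-\gamma\sensitivity_{g,\mathcal{C}}^2\beta^2/(2\sigma_2^2n^2)\bigr)$, which is exactly the claimed estimate. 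I do not expect a serious obstacle: every step transcribes an argument already present in the excerpt, and the only points needing a little care are verifying that \Cref{ass:regularity-conditions} is preserved across the projection (handled as in the pure-jump discrete proof) and bookkeeping the constants so that $a$ and $K_n$ coincide with those of \Cref{thm:multifractal-case-finite-sensitivity} after replacing $\sensitivity_g$ by $\sensitivity_{g,\mathcal{C}}$.
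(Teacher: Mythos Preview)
Your proposal is correct and follows precisely the approach the paper itself indicates: the paper states that \Cref{thm:discrete-multifractal} ``is an immediate application of the previous proof and the results of \Cref{sec:multifractal-case},'' and you have spelled out exactly that combination---the interpolation/recursion scheme from the proof of \Cref{thm:discrete-pure-jump} together with the Young-inequality cancellation and \Cref{lemma:bregman-integral-lower-bound} from the proof of \Cref{thm:multifractal-case-finite-sensitivity}. Your observation that \Cref{ass:c2-bounded-assumption} is not actually used in this multifractal argument is also correct, even though the statement lists it among the hypotheses.
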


\subsection{Omitted proofs of \Cref{sec:stability-properties}}
\label{sec:proofs-stability-properties}

In this section, we use the following classical notation for the variance with respect to a probability measure $\mu \in \mathcal{P}(\Rd)$, for any $\nu$-square-integrable function $f$,
$\mathrm{Var}_\mu (f) := \int f^2 \der \mu - \left( \int f \der \mu \right)^2$.
Equipped with this notation, we can prove the results of \Cref{sec:stability-properties}, starting with the stability properties of the fractional Poincaré inequalities.

\subsubsection{Proof of the stability results for $\alpha$-stable Poincaré inequalities}

\paragraph{Proof of \Cref{lemma:stability-convolution}.}

    Let $f$ be a $\mathcal{C}^1$ function vanishing at infinity. Then, we have
    \begin{align*}
         \text{Var}_{m}(f) &= \intrd\intrd f(x + y)^2 \der\mu(x) \der \mu'(y) - \parenthesis{\intrd\intrd f(x + y) \der\mu(x) \der \mu'(y) }^2 \\
         &= \intrd \left\{ \intrd f(x + y)^2 \der\mu(x) - \parenthesis{\intrd f(x + y) \der\mu(x)}^2 \right\} \der \mu'(y) \\
        &\quad  + \intrd \parenthesis{\intrd f(x + y) \der\mu(x)}^2 \der \mu'(y) - \parenthesis{\intrd\intrd f(x + y) \der \mu(x) \der \mu'(y) }^2 \\
        &=: A_1 + A_2.
     \end{align*}
     
    We can apply the Poincaré's inequality for $\mu$ and Tonelli's theorem to obtain that
    \begin{align*}
        A_1 &\leq \gamma_1 \calphad \intrd \left\{ \intrd\intrdzero \parenthesis{f(x + y) - f(x + y + z)}^2 \frac{\der z}{\normof{z}^{d + \alpha}} \der \mu(x) \right\} \der \mu'(y)  \\& \quad \quad + \gamma_2 \intrd\intrd \normof{\nabla f(x+y)}^2 \der \mu(x) \der \mu'(y) \\
        &= \gamma_1 \calphad  \intrd\intrdzero \parenthesis{f(x) - f(x + z)}^2 \frac{\der z}{\normof{z}^{d + \alpha}} \der m(x)  + \gamma_2 \intrd \normof{\nabla f(x)}^2 \der m(x).
    \end{align*}
    Now, we apply the Poincaré inequality for $\mu'$ on the function $y \mapsto \intrd f(x+y) \der \mu(x)$. This provides the following bound,
    \begin{align*}
        A_2 &\leq \gamma_1' \calphad \intrd\intrdzero \left\{ \intrd f(x + y) \der\mu(x) - \intrd f(x + z + y) \der\mu(x) \right\}^2 \frac{\der z}{\normof{z}^{d + \alpha}} \der \mu' (y)  \\
       & \quad \quad + \gamma_2'\intrd \left\{ \normof{\nabla_y \intrd f(x+y) \der \mu(x)}^2  \right\} \der \mu'(y).
    \end{align*}
    Therefore, by Jensen's inequality,  
    \begin{align*}
        A_2 &\leq \gamma_1' \calphad \intrd\intrdzero\intrd  \left( f(x + y) - f(x + z + y)  \right)^2 \der\mu(x)\frac{\der z}{\normof{z}^{d + \alpha}} \der \mu' (y)  \\
       & \quad   + \gamma_2'\intrd\intrd  \normof{\nabla  f(x+y) }^2 \der \mu(x) \der \mu'(y) \\
        &= \gamma_1' \calphad \intrd\intrdzero  \left( f(x) - f(x + z )  \right)^2 \frac{\der z}{\normof{z}^{d + \alpha}} \der m(x) + \gamma_2'\intrd  \normof{\nabla  f(x) }^2 \der m(x).
    \end{align*}
    This gives the desired result.
    \hfill$\blacksquare$

\paragraph{Proof of \Cref{lemma:bi-lipschitz-stability}.}

Let $f$ be a $\mathcal{C}^1$ function vanishing at infinity. Using the change of variable formula, the Poincaré inequality and Tonelli's theorem, we have 
\begin{align*}
    \text{Var}_{T_{\#}\mu}(f) &= \intrd f^2 \der(T_{\#}\mu) - \parenthesis{\intrd f \der(T_{\#}\mu)}^2= \intrd (f \circ T)^2 d\mu - \parenthesis{\intrd f \circ T d\mu}^2 \\
    &\leq \gamma_1 \calphad \intrd \intrdzero \parenthesis{f(T(x)) - f(T(x + z))}^2 \frac{\der z}{\normof{z}^{d + \alpha}} \der\mu(x) \\ 
    & \quad \quad + \gamma_2 \intrd \normof{\nabla (f \circ T)}^2 (x) \der \mu (x) \\
    &= \gamma_1 \calphad \intrd\int_{\Rd\backslash\{x\}} \parenthesis{f(T(x)) - f(T(y))}^2 \frac{\der y}{\normof{x - y}^{d + \alpha}} \der\mu(x)\\
     & \quad \quad + \gamma_2 \intrd \normof{\mathrm{Jac} (T)(x)}^2_2 \normof{\nabla f (T(x))}^2  \der \mu (x). 
     \end{align*}
     Therefore,
     \begin{align*}
      \text{Var}_{T_{\#}\mu}(f) 
      &\leq \gamma_1 \calphad \intrd\int_{\Rd\backslash\{v\}}\parenthesis{f(u) - f(v)}^2 
      \nonumber\\
      &\qquad\qquad\qquad\qquad\qquad\cdot\frac{ |\det \mathrm{Jac} (T^{-1}) (u) |  |\det \mathrm{Jac} (T^{-1}) (v) |}{\normof{T^{-1}(u) - T^{-1}(v)}^{d + \alpha}} \mu(T^{-1}(u)) \der u \der v\\
    & \quad  + \gamma_2 L_2^2 \intrd \normof{\nabla f (T(x))}^2  \der \mu (x) \\
    &\leq \gamma_1 \calphad L_2^{\alpha + d} \intrd\int_{\Rd\backslash\{v\}}\parenthesis{f(u) - f(v)}^2 \\
    &\qquad\qquad\qquad\qquad\qquad\qquad\cdot\frac{ |\det \mathrm{Jac} (T^{-1}) (u) |  |\det \mathrm{Jac} (T^{-1}) (v) |}{\normof{u - v}^{d + \alpha}} \mu(T^{-1}(u)) \der u \der v \\
     & \quad + \gamma_2 L_2^2 \intrd \normof{\nabla f (T(x))}^2  \der \mu (x) \\
    &= \gamma_1 \calphad L_2^{\alpha + d} \intrd\int_{\Rd\backslash\{T(x)\}}\parenthesis{f(T(x)) - f(v)}^2 \frac{  |\det \mathrm{Jac} (T^{-1}) (v) |}{\normof{T(x) - v}^{d + \alpha}} \mu(x) \der v\der x  \\
     & \quad + \gamma_2 L_2^2 \intrd \normof{\nabla f (T(x))}^2  \der \mu (x) .
\end{align*}
Now we note that $T^{-1}$ is $(1/L_1)$-Lipschitz continuous, so 
that, by Hadamard's inequality \citep{Holland2007}, we have
$\left|\det \mathrm{Jac} (T^{-1}) (v) \right| \leq \prod_{i = 1}^d \normof{\nabla (T^{-1})_i(v)}  \leq  \frac1{L_1^d}$.
Therefore,  
\begin{align*}
      \text{Var}_{T_{\#}\mu}(f) &\leq   \gamma_1 \calphad \frac{L_2^{\alpha + d}}{L_1^d}\intrd\int_{\Rd\backslash\{T(x)\}}\parenthesis{f(T(x)) - f(v)}^2 \frac{ \der v}{\normof{T(x) - v}^{d + \alpha}} \mu(x) \der x \\
       & \quad + \gamma_2 L_2^2 \intrd \normof{\nabla f (T(x))}^2 \der \mu (x) \\
      &\leq  \gamma_1 \calphad \frac{L_2^{\alpha + d}}{L_1^d} \intrd\intrdzero\parenthesis{f(T(x)) - f(T(x) + z)}^2 \frac{\der z}{\normof{z}^{d + \alpha}} \mu(x) \der x \\
       & \quad + \gamma_2 L_2^2 \intrd \normof{\nabla f (T(x))}^2 \der \mu (x),
\end{align*}
which yields the desired result.
\hfill $\blacksquare$

\paragraph{Proof of \Cref{lemma:stability-bounded-perturbation}.}

    Letting $\Phi(x) := x^2$, we have the classical variational formula \citep{bakry_analysis_2014}:
    \begin{align*}
        \mathrm{Var}_\nu(f) = \inf_{a \in \R} \intrd \bregmanphi{f(x)}{a} \der \nu(x).
    \end{align*}
    Therefore:
    \begin{align*}
        &\mathrm{Var}_{\mu'}(f) 
        \\
        &\leq e^b \inf_{a \in \R} \intrd \bregmanphi{f(x)}{a} \der \mu(x) %
        = e^b \mathrm{Var}_\mu(f) \\
        &\leq e^b \left(\gamma_1 C_{\alpha,d}\intrdzero\intrd\frac{(f(x) - f(x+z))^2}{\normof{z}^{d + \alpha}} \der \mu(x) \der z + \gamma_2 \intrd \normof{\nabla f(x)}^2 \der \mu(x) \right) \\
        &\leq e^{2b} \left(\gamma_1 C_{\alpha,d}\intrdzero\intrd\frac{(f(x) - f(x+z))^2}{\normof{z}^{d + \alpha}} \der \mu'(x) \der z + \gamma_2 \intrd \normof{\nabla f(x)}^2 \der \mu'(x) \right), 
    \end{align*}
    which concludes the proof.
    \hfill $\blacksquare$

\paragraph{Proof of \Cref{prop:strongly-convec-poincaré}.} 
    Let us denote $T(x) := x - \eta \nabla \er(x)$. Using the strong convexity and smoothness assumptions, we classically prove that $T: \Rd \to \R$ is bi-Lipschitz with constants given by
    \begin{align*}
        (1 - \eta M) \normof{x - y} \leq \normof{T(x) - T(y)} \leq (1 - \eta \lambda) \normof{x - y}, \quad  x,y\in\Rd.
    \end{align*}
    Now let $X$ be a random variable whose probability distribution satisfies an $\alpha$-stable Poincaré inequality with constant $(c, 0)$ and let $s \in (0, \eta)$. Then, by \Cref{lemma:bi-lipschitz-stability,lemma:stability-convolution}, the distribution of $T(X) + s^{1/\alpha} \sigma L_1^\alpha$ satisfies an $\alpha$-stable Poincaré inequality with constant $(c', 0)$, with
    \begin{align*}
        c':= c \frac{(1 - \eta \lambda)^{\alpha + d}}{(1 - \eta M)^d} + s \sigma^\alpha =: F(\eta)c + s \sigma^\alpha.
    \end{align*}
    We easily see that $F : (0, M^{-1}) \to \R_+$ satisfies $F(0^+) = 1$ and $F(\eta) \to +\infty$ as $\eta \to M^{-1}$. Moreover, 
        $F'(\eta_0) = 0$ if and only if $\eta_0 = \frac{(\alpha + d) \lambda - d M}{\alpha \lambda M}$.
    As $\lambda \leq M$, we have $\eta_0 < 1/M$,  and, on the other hand, \Cref{eq:assumption-strongly-convex-smooth} ensures that $\eta_0 > 0$. Therefore, we necessarily have $0 < F(\eta_0) < 1$. Now we observe that
    $c \leq \frac{\eta \sigma^\alpha}{1 - F(\eta_0)}$ implies $F(\eta) c + \eta \sigma^\alpha \leq \frac{\eta \sigma^\alpha}{1 - F(\eta_0)}$.
    Therefore, we have
    \begin{align*}
        c' \leq F(\eta)c + \eta\sigma^\alpha \leq \frac{\eta \sigma^\alpha}{1 - F(\eta_0)}.
    \end{align*}
    The result follows by an immediate recursion.
    \hfill $\blacksquare$

\vskip 0.2in
\bibliography{main}

\end{document}